\documentclass[twoside]{article}

\usepackage[accepted]{aistats2025}
% If your paper is accepted, change the options for the package
% aistats2025 as follows:
%
%\usepackage[accepted]{aistats2025}
%
% This option will print headings for the title of your paper and
% headings for the authors names, plus a copyright note at the end of
% the first column of the first page.

% If you set papersize explicitly, activate the following three lines:
%\special{papersize = 8.5in, 11in}
%\setlength{\pdfpageheight}{11in}
%\setlength{\pdfpagewidth}{8.5in}

% If you use natbib package, activate the following three lines:
\usepackage[round]{natbib}
\renewcommand{\bibname}{References}

% If you use BibTeX in apalike style, activate the following line:
\bibliographystyle{apalike}

\usepackage[dvipsnames]{xcolor} %new
\definecolor{myblue}{RGB}{0,180,216} %\new
\definecolor{mypink}{RGB}{239,39,166} %\new
\definecolor{myorange}{RGB}{255,111,0} %\new
\definecolor{myyellow}{RGB}{255,190,11} %\new

\usepackage{multirow}
\usepackage[utf8]{inputenc}
\usepackage{amsmath,amssymb,array}
\usepackage{bbm}
\usepackage{graphicx}
\usepackage{xargs}
\usepackage{algorithmic}
\usepackage[font=small,skip=0pt]{caption}
\usepackage{wrapfig}
\usepackage{caption}

\usepackage{multicol}
\usepackage{booktabs}
\usepackage{todonotes}
\usepackage{amsthm}
\usepackage{titletoc} %new

\renewcommand\citet{\cite}
%\renewcommand{\bibname}{References}
% \usepackage[natbib=true]{biblatex}
% \bibliography{Bib}
\usepackage[hyphens]{url}
\usepackage[hidelinks,colorlinks=true,citecolor=blue!50!black,linkcolor=black,urlcolor=green!50!black]{hyperref}
\usepackage{colortbl}
\usepackage{csquotes} %[autostyle=true,german=quotes]
\usepackage{mathtools}
\usepackage{float} %https://tex.stackexchange.com/questions/165021/fixing-the-location-of-the-appearance-in-algorithmicx-environment
\usepackage{adjustbox}
%\usepackage{caption}[skip=2pt]
%Setlength{\abovecaptionskip}{5pt plus 0pt minus 2pt}
%Setlength{\belowcaptionskip}{5pt plus 0pt minus 2pt}
%Setlength{\intextsep}{10pt plus 2pt minus 2pt}
\usepackage{dsfont}

\newtheorem{theorem}{Theorem}
\newtheorem{lemma}{Lemma}
\newtheorem{definition}{Definition}
\newtheorem{corollary}{Corollary}
\newtheorem{remark}{Remark}

\usepackage[capitalize,noabbrev]{cleveref}

\usepackage[acronym,nonumberlist,toc,nogroupskip]{glossaries}
\newglossaryentry{formula}{name=formula,
                           description={A mathematical expression}}

\newacronym{SI}{SI}{Shapley interaction}
\newacronym{SV}{SV}{Shapley value}
\newacronym{XAI}{XAI}{Explainable AI}
\newacronym{ALE}{ALE}{accumulated local effects}
\newacronym{PDP}{PDP}{partial dependence plot}
\newacronym{GV}{GV}{generalized value}
\newacronym{MT}{MT}{M\"obius transform}

\begin{document}

% If your paper is accepted and the title of your paper is very long,
% the style will print as headings an error message. Use the following
% command to supply a shorter title of your paper so that it can be
% used as headings.
%
%\runningtitle{I use this title instead because the last one was very long}

% If your paper is accepted and the number of authors is large, the
% style will print as headings an error message. Use the following
% command to supply a shorter version of the authors names so that
% they can be used as headings (for example, use only the surnames)
%
\runningauthor{Fabian Fumagalli, Maximilian Muschalik, Eyke H{\"u}llermeier, Barbara Hammer, Julia Herbinger}

\twocolumn[

\aistatstitle{Unifying Feature-Based Explanations with Functional ANOVA and Cooperative Game Theory}

\aistatsauthor{Fabian Fumagalli\\ Bielefeld University, CITEC\And Maximilian Muschalik \\ LMU Munich, MCML \And Eyke H{\"u}llermeier \\ LMU Munich, MCML \AND Barbara Hammer \\ Bielefeld University, CITEC \And Julia Herbinger \\ Leibniz Institute for Agricultural\\ Engineering and Bioeconomy}

\aistatsaddress{} 

]

\begin{abstract}
Feature-based explanations, using perturbations or gradients, are a prevalent tool to understand decisions of black box machine learning models.
Yet, differences between these methods still remain mostly unknown, which limits their applicability for practitioners.
In this work, we introduce a unified framework for local and global feature-based explanations using two well-established concepts: functional ANOVA (fANOVA) from statistics, and the notion of value and interaction from cooperative game theory.
We introduce three fANOVA decompositions that determine the influence of feature distributions, and use game-theoretic measures, such as the Shapley value and interactions, to specify the influence of higher-order interactions.
Our framework combines these two dimensions to uncover similarities and differences between a wide range of explanation techniques for features and groups of features.
We then empirically showcase the usefulness of our framework on synthetic and real-world datasets.
\end{abstract}

\section{INTRODUCTION}

\begin{figure*}[t]
    \centering
    \includegraphics[width=\textwidth]{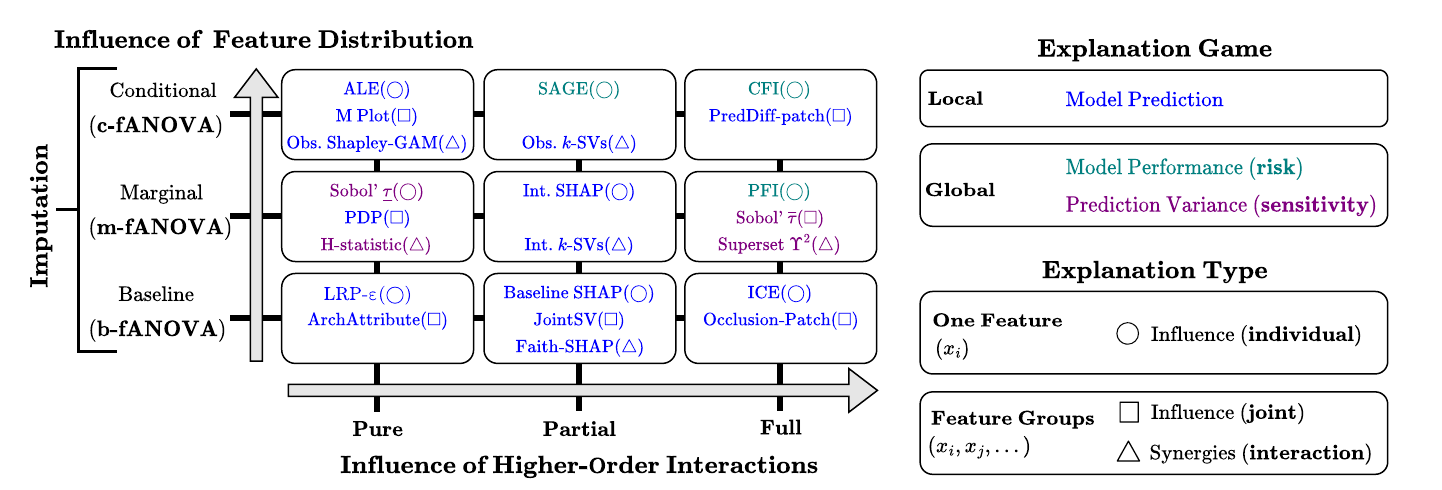}
    \caption{Categorization of selected feature-based explanations with our framework: {\color{blue}Local}, and global methods ({\color{teal}risk}, {\color{violet}sensitivity}) by color, and explanation types individual ($\bigcirc$) and joint ($\square$) influence, as well as interactions ($\triangle$) by symbol. Each imputation method (b/m/c) corresponds to a fANOVA decomposition with increasing influence of feature distributions, whereas pure, partial and full effects are increasingly influenced by higher-order interactions.}
    \label{fig_intro_illustration}
\end{figure*}

% relevance of XAI
The growing application of black box machine learning models in high-stake decision domains has led to increased research in \gls*{XAI} to uncover aspects of their reasoning.
In this context, feature-based explanations are a prominent tool to address research questions, such as ``How does this individual feature influence the model's prediction or performance?''.
The trend of developing new feature-based explanations to address the limitations of existing methods has resulted in a large toolbox, making it unclear how they are related or which method users should choose.
Categorizations, e.g., into local and global, and guidance on pitfalls \citep{molnar2022pitfalls} or within sub-categories of explanation methods \citep{DBLP:conf/icml/SundararajanN20, Ewald2024featimp}, help to understand differences but do not provide a general understanding of relations between methods, nor answer how to choose an appropriate one.
Existing frameworks unify methods for individual features \citep{covert2021explaining,lundstrom2023unifying,Deng2024Unify}, or SHAP \citep{hiabu2022unifying,bordt2022shapley}, and serve as a building block here.
\\
In this paper, we present a \emph{unified framework} for feature-based explanations, which is driven from two different perspectives: \emph{functional analysis of variance (fANOVA)} \citep{stone1994fanova, hooker_discovering_2004} from statistics, as well as the \emph{\gls*{SV}} \citep{shapley_1951} and \textit{\gls*{MT}} \citep{grabisch2000equivalent} from cooperative game theory.
Our framework covers \emph{local explanations} for a specific data point and two global explanations in terms of \emph{sensitivity} and \emph{risk}.
Practitioners determine the \emph{influence of feature distribution} by a specific choice of fANOVA.
This choice corresponds to the \emph{marginalization} of features using either a \emph{baseline (b)}, the \emph{marginal (m)}, or the \emph{conditional (c)} distribution and yields an \emph{increasing influence of feature distributions}.
Based on the fANOVA decomposition, we introduce three types of feature influence measures: \emph{Individual} and \emph{joint} effects capture the contributions of single and groups of features, whereas \emph{interaction} effects quantify synergies and redundancies of groups of features.
Finally, using fundamental concepts from cooperative game theory, we quantify \emph{pure}, \emph{partial}, and \emph{full} effects that yield an increasing \emph{influence of higher-order interactions}.
\cref{fig_intro_illustration} illustrates our framework and categorizes existing methods.

\paragraph{Contributions.} Our main contributions include:

\textbf{(1)} A framework for feature-based explanations using fANOVA and cooperative game theory, including a local and two global explanation games.

\textbf{(2)} Interpretations for three types of explanations according to influence of a) feature distributions, and b) higher-order interactions, undermined by our theoretical results and examples.

\textbf{(3)} A summary of existing methods within our framework, including a guide for practitioners.

\section{BACKGROUND}
\label{sec:background}

Here, we provide the relevant background on fANOVA (\cref{sec_fanova}), gradient-based methods (\cref{sec_gradients}), and game theory (\cref{sec_game_theory}).
Each field quantifies feature-based explanations from a different viewpoint, which our unified framework (\cref{sec_unified_framework}) summarizes.

\paragraph{Notations.} Throughout the paper, we use the feature set $D := \{1,\dots,d\}$, lowercase letters for cardinalities ($s := \vert S \vert$), abbreviated sets ($i:=\{i\}$ and $ij := \{i,j\}$), and denote the complement as $-S := D \setminus S$.
\subsection{Functional ANOVA}\label{sec_fanova}
Functional analysis of variance (fANOVA) \citep{Hoeffding_1948,stone1994fanova,hooker_discovering_2004,Owen_2013} is a popular tool in statistics that \emph{globally} decomposes a black box model $F: \mathbb{R}^d \to \mathcal Y$ on a $d$-dimensional feature space into \emph{pure effects} 
\begin{equation*}
    F(x) = f_\emptyset + \sum_{i=1}^d f_i(x_i) + \sum_{i\neq j} f_{ij}(x_i,x_j) \dots = \sum_{S \subseteq D} f_S(x),
\end{equation*}
where $f_S$ is restricted to features in $S$.
These effects $f_S$ are referred to as \emph{baseline} ($S=\emptyset)$, \emph{main} ($s = 1$) and \emph{interaction} effects ($s > 1$).
A fANOVA always exists, but is not generally unique.
Given the feature distribution $P$, the components are defined as
\begin{align}\label{eq_fanova_effect}
    f_S(x) =\int F(x) \, dP(x_{-S}) - \sum_{T \subset S} f_T(x) ,
\end{align}

i.e. the effect $f_S$ is the expectation over features in $-S$ minus the sum of lower-order effects $f_T$, where $T$ is contained in $S$.
For independent feature distributions, all effects $f_S$ are zero-centered and orthogonal, which yields the \emph{unique variance decomposition} $\mathbb{V}[F(X)] = \sum_{S \subseteq D} \sigma_S^2$ with $\sigma_S^2 := \mathbb{V}[f_S(X)]$.
Based on this decomposition, \emph{sensitivity analysis} \citep{Owen_2013} then quantifies the global importance of a set of features by summarizing the variances of the effects, $\sigma_S^2$, via the Sobol' indices $\underline{\tau},\bar{\tau}$ \citep{Sobol_2001} or the superset measure  $\Upsilon^2$ \citep{hooker_discovering_2004} as
\begin{align*}
&{\underline{\tau}}_S := \sum_{T \subseteq S}\sigma^2_T, 
&&{\overline{\tau}}_S := \sum_{T \cap S \neq \emptyset} \sigma^2_T,
&& \Upsilon_S^2 := \sum_{T \supseteq S} \sigma^2_T.
\end{align*}

Additionally, \cite{Owen_2014} utilizes the \gls*{SV} \citep{shapley_1951} (introduced in \cref{sec_game_theory}) to summarize the variances.
For dependent random variables, the effects $f_T$ are not orthogonal, and thus the variance decomposition does not hold. 
With alternative definitions of $f_T$ \citep{hooker_generalized_2007} or assumptions on $P$ \citep{Chastaing_Gamboa_Prieur_2012} unique and quasi-orthogonal decompositions can be found.
Moreover, it was shown that the definition via Eq.~(\ref{eq_fanova_effect}) yields sensible results when combined with the \gls*{SV} \citep{Owen.2017}.

\subsection{Gradient-Based Attribution Methods}\label{sec_gradients}

Gradient-based attributions are a prominent model-specific approach to explain predictions of black box models, where gradient information is available.
They are widely applied in deep learning, and a variety of methods have been proposed.
They can be broadly categorized into methods that use the gradients of each layer and backpropagate attributions throughout the network \citep{bach2015pixel,Shrikumar_Greenside_Kundaje_2017,lundberg_2017_unified}, and methods that rely on the gradients computed with respect to the features directly \citep{DBLP:conf/icml/SundararajanTY17,DBLP:journals/corr/SmilkovTKVW17,DBLP:journals/natmi/ErionJSLL21}.
Most popular methods can be unified with Taylor interactions \citep{Deng2024Unify}.
Taylor interactions are obtained by a multivariate infinite Taylor expansion of $F$ around a baseline $b \in \mathbb{R}^d$ as

\begin{align*}
    F(x) &= F(b) + \sum_{i \in D} \underbrace{\sum_{\kappa \in \Omega_i}\phi(\kappa)}_{=: \psi(i)}+ \sum_{S \subseteq D, s > 1} \underbrace{\sum_{\kappa \in \Omega_S}I(\kappa)}_{=: J(S)},
\end{align*}
where $\psi,J$ are the generic independent, and interaction effect, respectively.
Here, $\kappa=[\kappa_1,\dots,\kappa_d] \in \mathbb{N}_0^d$ capture the order of partial derivatives for each feature in the Taylor expansion, and $\Omega_S := \{\kappa \mid  \forall i \in S: \kappa_i>0, \forall i \notin S: \kappa_i=0\}$ is the set of degree vectors, where partial derivatives are computed exclusively for features in $S$.
For more details regarding the definition of Taylor interactions, we refer to Appendix~C.

\subsection{Cooperative Game Theory}\label{sec_game_theory}

Feature-based explanations attribute \emph{value} to individual features or groups of features.
This attribution can be viewed from a game theoretical perspective by studying a cooperative \emph{game} $\nu: 2^D \to \mathbb{R}$, which describes a \emph{payoff} for any possible group of features.
This game is typically the prediction (local) or the performance (global) of the model restricted to the particular subset of features \citep{covert2021explaining}.
Given a set of features $T \subseteq D$ and a feature $i \in D$, we measure the \emph{marginal contribution} of this feature as $\Delta_i(T) := \nu(T \cup i) - \nu(T \setminus i)$, i.e., the impact of adding this feature to a group of features.
The \emph{joint marginal contribution} $\Delta_{[ij]}(T) := \nu(T \cup ij) -\nu(T \setminus ij)$ is an extension for the contribution of two features $i,j \in D$. 
%The \emph{interaction} of $i,j$ in the presence of $T$ as $\Delta_{ij}(T) := \Delta_{[ij]} - \Delta_i(T) - \Delta_j(T)$ is the effect of adding both features jointly minus their individual contributions.
The \emph{interaction} $\Delta_{ij}(T) := \Delta_{[ij]}(T) - \Delta_i(T) - \Delta_j(T)$ of $i,j$ in the presence of $T$ is the effect of adding both features jointly minus their individual contributions.
Subtracting the marginal contributions yields the \emph{synergy} or \emph{redundancy} of both features by accounting for their individual effects.
The generalization of this recursion is the \emph{discrete derivative} of a set $S \subseteq D$ in the presence of a set $T \subseteq D \setminus S$ defined as
\begin{align*}
    \Delta_S(T) := \sum_{L \subseteq S} (-1)^{s-\ell}\nu(T \cup L) \text{ for } T \subseteq -S.
\end{align*}
The discrete derivative is a natural measure of interaction for $S\subseteq D$ in the presence of $T$.
The \emph{absence} ($T=\emptyset$) and \emph{presence} ($T=-S$) of all remaining features yields the \gls*{MT} and Co-\gls*{MT} \citep{grabisch2000equivalent}, respectively, as
\begin{align*}
    m(S) := \Delta_S(\emptyset) \text{ and } \tilde m(S) := \Delta_S(-S).
\end{align*}

The \gls*{MT}, or Harsanyi dividend \citep{harsanyi1963simplified}, is the unique measure \citep{rota1964foundations} that satisfies $\nu(T) = \sum_{S \subseteq T} m(S) \text{ for } T \subseteq D$, i.e., $m(S)$ is the \emph{pure} additive contribution obtained by the interaction of $S$, which cannot be attributed to any subgroup of $S$.
Clearly, such contributions are difficult to interpret, since they involve $2^d$ distinct coefficients.
Instead, the \gls*{SV} \citep{shapley_1951} \emph{aggregates} marginal contributions for an individual feature $i \in D$ as

\begin{align*}
    \phi^{\text{SV}}(i) := \sum_{T \subseteq D \setminus i}  \frac{\Delta_i(T)}{d \cdot \binom{d-1}{t}} = \sum_{S \subseteq D: i \in S} \frac{m(S)}{s},
\end{align*}

where the second representation is a well-known result from cooperative game theory \citep{DBLP:journals/fss/Grabisch97}.
In this context, the \gls*{SV} summarizes the main effect and all pure interactions equally distributed among the involved features.
Moreover, the \gls*{SV} is the unique measure that yields a \emph{fair} attribution with four intuitive axioms: linearity, symmetry, dummy and efficiency \citep{shapley_1951}.
Beyond individual contributions, \glspl*{GV} \citep{DBLP:journals/dam/MarichalKF07} and JointSVs \citep{Harris.2022} summarize joint marginal contributions, whereas the Shapley interaction index \citep{Grabisch.1999} summarizes discrete derivatives for all groups of features.
Lastly, \glspl*{SI} \citep{lundberg_consistent_2019,sundararajan2020shapley,bordt2022shapley,tsai_faith-shap_2022} provide interactions for subsets up to size $k=1,\dots,d$, where $k=1$ is the \gls*{SV}, and $k=d$ the \gls*{MT} \citep{bordt2022shapley}.

\section{A UNIFIED FRAMEWORK}\label{sec_unified_framework}

We now present our unified framework for feature-based explanations.
We propose three fANOVA decompositions (\cref{sec_framework_fanova}), and define a \emph{local} explanation game based on the prediction, and two \emph{global} explanation games based on sensitivity and risk (\cref{sec_explanation_games}).
We then introduce three types of explanations: individual, joint and interaction effects (\cref{sec_framework_gametheory_types}), and show that feature-based explanations are essentially determined by two components: First, the choice of distribution $P$ in the fANOVA specifies the \emph{influence of feature distribution} in the explanation.
Second, \emph{pure, partial} or \emph{full} effects (\cref{sec_framework_gametheory_measure}) are increasingly \emph{influenced by higher-order interactions}.
All proofs are deferred to \cref{appx_sec_proofs}.

\subsection{Influence of Feature Distributions}\label{sec_framework_fanova}
The fANOVA is determined by $P$ in the
\begin{equation}\label{eq_F_S}
    \textbf{value function } F_S(x) := \int F(x) \, dP(x_{-S}),
\end{equation}
and iteratively constructed by Eq.~(\ref{eq_fanova_effect}).
We now present three decompositions (induced by choices of $P$) that are increasingly influenced by the feature distribution.

\begin{definition}
\emph{Baseline fANOVA} (\textbf{b-fANOVA}) with baseline $b \in \mathbb{R}^d$ is given by $P^{\text{(b)}}(x_{-S}) := \bigotimes_{i \in -S} \delta_{b_i}$ with Dirac functions $\delta_{b_i}$, and effects $f^{(b)}_S$ derived from
\begin{align*}
    F^{(b)}_S(x) := \int F(x) \, dP^{(b)}(x_{-S}) = F(x_S,b_{-S}).
\end{align*}
\end{definition}

The feature distribution influences b-fANOVA exclusively through the choice of the baseline $b$.
Moreover, b-fANOVA effects correspond to the Taylor interactions \citep{Deng2024Unify}, introduced in \cref{sec_game_theory}.

\begin{theorem}\label{thm_bfanova_taylor}
If $F$ is represented by its Taylor series expanded around $b$ for an instance $x_0$, then the effect $f_S^{(b)}$ is given by the generic effects \citep{Deng2024Unify}
\begin{align*}
    f_S^{(b)}(x_0) = \begin{cases}
        F(b), \text{ if } S = \emptyset &(\text{baseline})
        \\
        \psi(i), \text{ if } S = i &(\text{main effect})
        \\
        J(S),  \text{ else.} &(\text{interaction effect})
    \end{cases}
\end{align*}
\end{theorem}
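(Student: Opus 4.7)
The plan is to unfold the recursive definition of baseline fANOVA effects by Möbius inversion, then plug in the Taylor expansion of $F$ around $b$ and match terms with the generic effects of \cref{sec_gradients}.

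First I would apply Möbius inversion to the recursive definition of $f_S^{(b)}$ from Eq.~(\ref{eq_fanova_effect}), which under b-fANOVA reads $f_S^{(b)}(x_0) = F(x_{0,S},b_{-S}) - \sum_{T \subsetneq S} f_T^{(b)}(x_0)$. Unrolling the recursion on the subset lattice of $S$ yields the closed form
\[
f_S^{(b)}(x_0) \;=\; \sum_{T \subseteq S} (-1)^{s-t}\, F(x_{0,T},\, b_{-T}),
\]
so the claim reduces to identifying this alternating sum with the appropriate generic effect.

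Next I would substitute the multivariate Taylor expansion of $F$ around $b$ at the argument $(x_{0,T}, b_{-T})$. Any factor $(b_i - b_i)^{\kappa_i}$ with $\kappa_i > 0$ vanishes (using the convention $0^0 = 1$), so only degree vectors $\kappa$ whose support lies inside $T$ contribute. Letting $g(R) := \sum_{\kappa \in \Omega_R} \tfrac{1}{\kappa!}\, \partial^\kappa F(b) \prod_{i \in R}(x_{0,i} - b_i)^{\kappa_i}$ collect the Taylor terms with support exactly $R$, the disjoint partition $\mathbb{N}_0^d = \bigcup_{R \subseteq D} \Omega_R$ gives
\[
F(x_{0,T},\, b_{-T}) \;=\; \sum_{R \subseteq T} g(R).
\]
Comparing against the definitions in \cref{sec_gradients} directly yields $g(\emptyset) = F(b)$, $g(\{i\}) = \psi(i)$, and $g(S) = J(S)$ for $s > 1$.

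Finally I would substitute this back into the closed form for $f_S^{(b)}$ and invoke Möbius inversion a second time, now on the subset lattice of $S$. Swapping summation order and applying the standard identity $\sum_{T:\, R \subseteq T \subseteq S}(-1)^{s-t} = \mathbbm{1}[R = S]$ collapses the double sum $\sum_{T \subseteq S}(-1)^{s-t} \sum_{R \subseteq T} g(R)$ to $g(S)$, so the three cases of the theorem can be read off. The main obstacle is purely bookkeeping: verifying that the $0^0 = 1$ convention correctly kills the $-T$ coordinates and that the two nested Möbius inversions combine without sign errors. Since convergence of the Taylor series at $x_0$ is assumed by hypothesis, no analytic subtleties arise.
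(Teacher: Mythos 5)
Your proposal is correct and follows essentially the same route as the paper's proof: unrolling the recursion into the alternating sum $\sum_{T \subseteq S}(-1)^{s-t}F(x_{0,T},b_{-T})$, substituting the Taylor expansion so that only degree vectors supported in $T$ survive, and collapsing the double sum via the inclusion-exclusion identity $\sum_{T:\,L \subseteq T \subseteq S}(-1)^{s-t}=\mathbf{1}[L=S]$ (the paper's Lemma~\ref{appx_lem_inexclusion}). No gaps.
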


In b-fANOVA, the feature distribution is highly compressed in $b$. 
Instead, we define a fANOVA that partially takes into account the feature distribution.

\begin{definition}
\emph{Marginal fANOVA} (\textbf{m-fANOVA}) is given by $P^{\text{(m)}}(x_{-S}) := p(x_{-S})$ with joint marginal distribution $p$, and effects $f^{(m)}_S$ derived from
\begin{align*}
    F^{\text{(m)}}_S(x) := \int F(x) \, d P^{(m)}(x_{-S}) = \mathbb{E}[F(x_S,X_{-S})].
\end{align*}
\end{definition}

Using the joint marginal distribution of the remaining features breaks the feature dependencies between features in $S$ and $-S$, which yields evaluations of $F$ in sparsely sampled or unseen regions of the input space, known as extrapolation \citep{hooker_generalized_2007}. 
Therefore, we propose a third fANOVA based on conditional distributions that fully accounts for the feature distribution.

\begin{definition}
\emph{Conditional fANOVA} (\textbf{c-fANOVA}) is given by $P^{\text{(c)}}(x_{-S}) := p(x_{-S} \mid x_S)$ with conditional distribution $p$, and effects $f^{(c)}_S$ derived from
\begin{align*}
    F^{\text{(c)}}_S(x) := \int F(x) \, d P^{(c)}(x_{-S}) = \mathbb{E}[F(X) \mid X_S = x_S].
\end{align*}
\end{definition}

The c-fANOVA, m-fANOVA and b-fANOVA are related through the following theorem.

\begin{theorem}\label{theorem_fanova_equivalence}
    If $F$ is linear, then m-fANOVA equals b-fANOVA with $b := \mathbb{E}[X]$.
    For independent features, c-fANOVA equals m-fANOVA, and if $F$ is additionally multilinear, then it equals b-fANOVA with $b := \mathbb{E}[X]$.
\end{theorem}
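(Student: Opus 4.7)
The plan is to prove all three assertions by establishing pointwise equality of the \emph{value functions} $F^{(\cdot)}_S$ for every $S \subseteq D$. Since the fANOVA effects are recursively determined from the value functions through Eq.~(\ref{eq_fanova_effect}), agreement of the value functions forces the corresponding effects $f^{(\cdot)}_S$ to coincide automatically. Thus the entire theorem reduces to comparing the three integrals defining $F^{(b)}_S$, $F^{(m)}_S$, and $F^{(c)}_S$ under the respective hypotheses, with no further recursion to unwind.

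For the first claim, I would write a linear $F$ as $F(x) = a_0 + \sum_{i\in D} a_i x_i$ and apply linearity of expectation to obtain $F^{(m)}_S(x) = a_0 + \sum_{i\in S} a_i x_i + \sum_{i\in -S} a_i \mathbb{E}[X_i]$, which is exactly $F(x_S, \mathbb{E}[X]_{-S}) = F^{(b)}_S(x)$ with $b := \mathbb{E}[X]$. Only marginal means enter, so independence is not needed. For the second claim, independence gives $p(x_{-S}\mid x_S) = p(x_{-S})$ for every $S$, so $P^{(c)}$ and $P^{(m)}$ coincide as measures on the complement coordinates, and $F^{(c)}_S \equiv F^{(m)}_S$ follows directly.

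For the third claim, I would first invoke the second to replace c-fANOVA by m-fANOVA, then expand a multilinear $F$ in its monomial representation $F(x) = \sum_{T\subseteq D} c_T \prod_{i\in T} x_i$. Under independence, the expectation factorizes across distinct indices, giving $F^{(m)}_S(x) = \sum_{T\subseteq D} c_T \prod_{i\in T\cap S} x_i \prod_{i\in T\setminus S} \mathbb{E}[X_i]$, which matches $F(x_S, \mathbb{E}[X]_{-S}) = F^{(b)}_S(x)$ with $b := \mathbb{E}[X]$. Combining with the second claim yields $F^{(c)}_S \equiv F^{(b)}_S$.

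The only real subtlety is terminological: ``linear'' must be read as affine (a constant plus a linear form), while ``multilinear'' permits cross-term monomials such as $x_i x_j$, and it is precisely the factorization $\mathbb{E}[\prod_{i} X_i] = \prod_i \mathbb{E}[X_i]$ over disjoint index sets that makes independence indispensable in the second and third parts. Once these conventions are fixed, no computation beyond linearity of expectation and factorization under independence is required, and there is no essential obstacle.
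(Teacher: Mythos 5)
Your proposal is correct and follows essentially the same route as the paper's proof: reduce everything to equality of the value functions $F_S$, use linearity of expectation for the linear case, the identity of conditional and marginal distributions under independence for the second claim, and the monomial expansion of a multilinear $F$ with factorization of expectations for the third. The only cosmetic difference is that you allow an affine intercept in the linear case, which the paper omits but which changes nothing.
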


\begin{table*}[t]
\caption{Main and interaction fANOVA components of $F_{\text{lin}}$ and $F_{2\text{int}}$, where $\bar x_i := x_i - \mu_i$}\label{tab:fanova_example}
\begin{tabular}{@{}l|l|l|l|l@{}}
\toprule
& & \textbf{b-fANOVA} $f^{(b)}$ & \textbf{m-fANOVA} $f^{(m)}$ & \textbf{c-fANOVA} $f^{(c)}$
\\ \midrule
\multirow{2}{*}{\textbf{Main Effect} $f_i$} & $F_{\text{lin}}$ 
&$\beta_i \cdot (x_i - b_i)$ 
& $\beta_i \cdot \bar x_i$ 
& $f_{\text{lin}}^{(m)}+  \bar x_i \frac{\beta_{j}\sigma_{ij}}{\mathbb{V}[X_i]}$
\\
& $F_{2\text{int}}$ 
& $f^{(b)}_{\text{lin}} + \beta_{ij} b_j (x_i - b_i)$ 
& $f^{(m)}_{\text{lin}} + \beta_{ij} \mu_j \bar x_i - \beta_{ij}\sigma_{ij}$
&  $f_{\text{lin}}^{(c)}+ \beta_{ij}\bar x_i (\mu_j + \frac{\sigma_{ij} x_i}{\mathbb{V}[X_i]}) - \beta_{ij}\sigma_{ij}$
\\ \midrule
\multirow{2}{*}{\textbf{Interaction} $f_{ij}$} & $F_{\text{lin}}$ & 0 & 0 & $-\sum_{\ell \in {ij}} \bar x_\ell \frac{\beta_{-\ell}\sigma_{-\ell,\ell}}{\mathbb{V}[X_\ell]}$
\\
& $F_{\text{2int}}$ 
& $\beta_{ij} (x_i - b_i) (x_j - b_j) $ 
& $\beta_{ij} \bar x_i \bar x_j  + \beta_{ij}\sigma_{ij}$
& $f^{\text{(m)}}_{\text{2int}} + f_{\text{lin}}^{(c)} - \beta_{ij}\sigma_{ij}(\frac{\bar x_i x_i}{\mathbb{V}[X_i]} +  \frac{\bar x_j x_j}{\mathbb{V}[X_j]})$
\\ \bottomrule
\end{tabular}
\end{table*}

\paragraph{Illustration of FANOVA Decompositions.}
We now illustrate differences between the fANOVA decompositions based on features from a two-dimensional multivariate normal distribution $X \sim \mathcal{N}(\mu, \Sigma)$ with mean $\mu \in \mathbb{R}^2$ and covariance $\Sigma \in \mathbb{R}^{2 \times 2}$. 
We consider a linear function $F_{\text{lin}}(x) = \beta_1 x_1 +  \beta_2 x_2$, and one including a two-way interaction $F_{\text{2int}}(x) = F_{\text{lin}}(x) + \beta_{12} x_1 x_2$. 
The main and interaction effects of the fANOVA decompositions are summarized in \cref{tab:fanova_example}.
For b- and m-fANOVA of $F_{\text{lin}}$ the main effect is intuitively the deviation of $x_i$ from $b_i$ and the expectation of $\mathbb{E}[X_i] = \mu_i$, respectively, which is scaled by the linear coefficient $\beta_i$ (first row).
For c-fANOVA, this effect is distorted by the cross-correlation with the second feature. 
As stated in \cref{theorem_fanova_equivalence}, it is clear that c-fANOVA reduces to m-fANOVA for independent features and that m-fANOVA equals b-fANOVA for $b_i:=\mathbb{E}[X_i]$.
For $F_{2\text{int}}$, the first order effects of b- and m-fANOVA again only depend on feature $i$, but also include the interaction coefficient $\beta_{ij}$ (second row). 
In contrast, c-fANOVA additionally includes cross-correlation effects based on effects of features correlated with $x_i$. 
The two-way interaction of b- and m-fANOVA is solely defined by the centered interaction effect between the two features and thus results in zero values for $F_{lin}$. 
Yet, c-fANOVA accounts again for cross-correlation effects. 
For a more detailed discussion, we refer to \cref{appx_sec_fanova_examples}.

\subsection{Explanation Games via fANOVA}\label{sec_explanation_games}
In the previous section, we have shown that the choice of $P$ in $F_S$ from Eq.~(\ref{eq_F_S}) determines the influence of the feature distribution in the fANOVA decomposition.
In this section, we define three instances of an 
\begin{align*}
    \textbf{explanation game } \nu: 2^D \to \mathbb{R},
\end{align*}
which consider different properties of the value function $F_S$ given a feature set $S \subseteq D$.
We then show that the \gls*{MT} of $\nu$ directly relates to properties of the fANOVA effects $f_S$.
%In this section, we link the fANOVA decomposition to cooperative game theory by defining three \emph{explanation games} $\nu: 2^D \to \mathbb{R}$, and give interpretations of the \gls*{MT} based on fANOVA effects $f_S$.
For local explanations, we naturally analyze $F_S$ evaluated at $x_0$.

\begin{definition}\label{def_local_explanation_game}
Given $F_S$ and an instance $x_0 \in \mathbb{R}^d$, we define the \textbf{local explanation game} as
\begin{align*}
     \nu_{x_0}^{(\text{loc})}(S) := F_S(x_0).
\end{align*}
\end{definition}

The local explanation game captures the predictions $F_S$ at $x_0$, i.e. the prediction $F$ at $x_0$ restricted to any subset of features $S$.
In this case, the \gls*{MT} directly corresponds to the fANOVA effect evaluated at $x_0$.

\begin{corollary}\label{cor_fanova_moebius}
The \gls*{MT} $m_{x_0}^{(\text{loc})}$ of the local explanation game $\nu^{(\text{loc})}_{x_0}$ is the fANOVA effect $f_S$ evaluated at $x_0$, 
\begin{align*}
     m_{x_0}^{(\text{loc})}(S) = \sum_{T \subseteq S} (-1)^{s-t}F_T(x_0) = f_S(x_0),
\end{align*}
i.e., the pure additive contribution of the features in $S$ in the fANOVA decomposition.
\end{corollary}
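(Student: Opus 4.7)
The plan is to observe that the statement has two equalities, and each follows by an essentially direct unfolding of definitions, with the second equality being exactly the Möbius inversion formula applied to the recursive definition of the fANOVA effects.

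First I would establish the first equality by plugging the local explanation game into the definition of the Möbius transform. Recall from \cref{sec_game_theory} that $m(S) = \Delta_S(\emptyset) = \sum_{L \subseteq S} (-1)^{s-\ell} \nu(L)$. Substituting $\nu = \nu_{x_0}^{(\text{loc})}$ and using \cref{def_local_explanation_game} gives
\begin{equation*}
    m_{x_0}^{(\text{loc})}(S) = \sum_{T \subseteq S}(-1)^{s-t}\nu_{x_0}^{(\text{loc})}(T) = \sum_{T \subseteq S}(-1)^{s-t} F_T(x_0),
\end{equation*}
which yields the first claimed equality.

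For the second equality, I would rewrite the recursive definition of the fANOVA component from Eq.~(\ref{eq_fanova_effect}), namely $f_S(x) = F_S(x) - \sum_{T \subset S} f_T(x)$, in its equivalent summation form
\begin{equation*}
    F_S(x) = \sum_{T \subseteq S} f_T(x) \quad \text{for all } S \subseteq D.
\end{equation*}
This is exactly the relation between a set function and its Möbius transform on the Boolean lattice $2^D$. By the uniqueness of the Möbius transform \citep{rota1964foundations}, invoked earlier in the excerpt to characterize $m$, the inverse relation is
\begin{equation*}
    f_S(x) = \sum_{T \subseteq S}(-1)^{s-t} F_T(x).
\end{equation*}
Evaluating at $x_0$ and combining with the identity from the previous paragraph then yields $m_{x_0}^{(\text{loc})}(S) = f_S(x_0)$, as claimed.

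The proof presents no real obstacle; the only point requiring care is to notice that the recursive fANOVA definition and the Möbius transform are two sides of the same Möbius inversion on the subset lattice, and that once one reads off $F_S(x) = \sum_{T \subseteq S} f_T(x)$, the uniqueness result already cited in the background section immediately delivers the formula for $f_S(x)$. A one-line sanity check by induction on $s$ (base case $f_\emptyset(x) = F_\emptyset(x) = \mathbb{E}[F(X)]$, inductive step using the recursion) could be included if additional rigor is desired, but is not necessary given the direct appeal to Möbius inversion.
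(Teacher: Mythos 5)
Your proposal is correct. It differs from the paper's own proof mainly in how the second equality is justified: you read off $F_S(x)=\sum_{T\subseteq S}f_T(x)$ from the recursion in Eq.~(\ref{eq_fanova_effect}) and then invoke the M\"obius inversion theorem of \cite{rota1964foundations} as a black box to obtain $f_S(x)=\sum_{T\subseteq S}(-1)^{s-t}F_T(x)$, which is precisely the route the paper's own remark after the corollary gestures at. The appendix proof instead gives a self-contained derivation: an explicit induction over the subset size $s$, with the inductive step carried out by exchanging sums and applying the inclusion--exclusion identity (\cref{appx_lem_inexclusion}), thereby re-deriving the inversion rather than citing it. Your version is shorter and makes the structural point (fANOVA recursion and MT are two sides of one M\"obius inversion) more transparent; the paper's version buys self-containedness and reuses a lemma that is needed elsewhere in the appendix anyway. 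One minor imprecision in your closing parenthetical: the base case $f_\emptyset(x)=F_\emptyset(x)=\mathbb{E}[F(X)]$ only holds for m- and c-fANOVA; for b-fANOVA one has $F_\emptyset=F(b)$. Since the corollary is stated for a generic $F_S$, you should leave the base case as $f_\emptyset=F_\emptyset=\nu(\emptyset)=m(\emptyset)$ without evaluating it. This does not affect the validity of your main argument.
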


\begin{remark}
    Corollary~\ref{cor_fanova_moebius} follows directly from the definitions of the \gls*{MT} and the fANOVA components.
    In fact, both are special cases of the Möbius inversion theorem \citep[Proposition 2]{rota1964foundations} with the inclusion ordering.
    While the \gls*{MT} $m^{\text{(loc)}}_{x_0}$ is based on $F_S(x_0)$, the fANOVA effect is based on the functions $F_S$.
\end{remark}

The local explanation game depends on $x_0$, whereas \emph{global} explanation games capture $F_S$ across the random features $X$ or the data-generating process $(X,Y)$.

\begin{definition}\label{def_sensitivity_game}
Given $F_S$, the \textbf{global sensitivity game} is defined as the variance across the features
\begin{align*}
     \nu^{(\text{sens})}(S) := \mathbb{V}[F_S(X)].
\end{align*}
\end{definition}

\begin{definition}\label{def_global_risk_game}
Given $F_S$ and a loss $\ell: \mathcal Y \times \mathcal Y \to \mathbb{R}$, the \textbf{global risk game} is defined as the negative risk
\begin{align*}
    \nu^{(\text{risk})}(S) := -\mathbb{E}_{(x,y)\sim(X,Y)}[\ell(F_S(x),y)].
\end{align*}
\end{definition}

The sensitivity game quantifies the importance of features to the model's prediction whereas the risk game considers its performance.
In general, a large value of $\nu$ indicates a higher importance of $S$.
The following theorem describes the \gls*{MT} of the sensitivity game.

\begin{theorem}\label{theorem_sensitivity}
    The \gls*{MT} of the sensitivity game is 
        \begin{align*}
         m^{(\text{sens})}(S) = \mathbb{V}[f^{(c)}_S(X)]
    \end{align*}
    for independent features. For dependent features, it is distorted by $\sum_{L \cup L' =S,L \neq L'}\text{cov}(f^{(c)}_L(X),f^{(c)}_{L'}(X))$.
\end{theorem}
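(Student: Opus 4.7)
The plan is to expand the Möbius transform of $\nu^{(\text{sens})}$ via the definition, invoke the Möbius inversion to rewrite each $F_T^{(c)}$ as a sum of fANOVA effects, expand the resulting variances into covariances, and then collapse the alternating sum with a standard subset identity.

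First I would apply the definition
\begin{equation*}
m^{(\text{sens})}(S) = \sum_{T \subseteq S} (-1)^{s-t}\, \mathbb{V}[F_T^{(c)}(X)].
\end{equation*}
Since $F_T^{(c)} = \sum_{L \subseteq T} f_L^{(c)}$ by Möbius inversion (the counterpart of Corollary~\ref{cor_fanova_moebius} on the level of functions), I rewrite
\begin{equation*}
\mathbb{V}[F_T^{(c)}(X)] = \sum_{L,L' \subseteq T}\mathrm{cov}\bigl(f_L^{(c)}(X),\, f_{L'}^{(c)}(X)\bigr).
\end{equation*}
Exchanging the order of summation and pulling each covariance out, the coefficient of $\mathrm{cov}(f_L^{(c)}(X),f_{L'}^{(c)}(X))$ becomes $\sum_{T:\; L\cup L' \subseteq T \subseteq S}(-1)^{s-t}$. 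The standard inclusion--exclusion identity
\begin{equation*}
\sum_{T:\; U \subseteq T \subseteq S}(-1)^{s-t} = \mathbbm{1}[U = S]
\end{equation*}
(with $U := L \cup L'$) kills every term except those with $L \cup L' = S$, yielding
\begin{equation*}
m^{(\text{sens})}(S) = \sum_{L \cup L' = S} \mathrm{cov}\bigl(f_L^{(c)}(X), f_{L'}^{(c)}(X)\bigr).
\end{equation*}

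From here, the two claims follow by separating the $L = L' = S$ contribution, which equals $\mathbb{V}[f_S^{(c)}(X)]$, from the remaining cross-terms with $L \neq L'$ and $L \cup L' = S$. In the independent case, I would invoke the orthogonality and zero-centering of the c-fANOVA components (which in that regime coincides with m-fANOVA by \cref{theorem_fanova_equivalence}, i.e., the classical Hoeffding decomposition), so every cross-covariance vanishes and only $\mathbb{V}[f_S^{(c)}(X)]$ remains. For dependent features, the cross-covariances survive and give precisely the stated distortion $\sum_{L\cup L' = S,\, L\neq L'}\mathrm{cov}(f_L^{(c)}(X), f_{L'}^{(c)}(X))$.

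The only genuinely delicate step is the bookkeeping in the double-sum rearrangement: one must verify that every pair $(L,L')$ with $L\cup L'\subseteq S$ appears with the right alternating weight and that the subset identity indeed forces $L\cup L' = S$. Everything else (Möbius inversion and orthogonality in the independent case) is invoked as a known property of fANOVA, so no new calculation is needed beyond the combinatorial collapse.
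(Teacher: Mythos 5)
Your proposal is correct and follows essentially the same route as the paper's proof: expand $\mathbb{V}[F_T^{(c)}]$ via the Möbius recovery $F_T^{(c)}=\sum_{L\subseteq T}f_L^{(c)}$, exchange the order of summation, and collapse the alternating sum with the inclusion--exclusion identity (the paper's Lemma~\ref{appx_lem_inexclusion}) to force $L\cup L'=S$, then invoke orthogonality of the components under independence. The only cosmetic difference is that you treat the diagonal and off-diagonal covariance terms in one unified double sum while the paper separates the variance part and the covariance part before applying the same identity to each.
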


For independent features, the \gls*{MT} of the sensitivity game is the variance of the fANOVA effect $f_S$.
For dependent features, the covariances are exclusively distributed among the \gls*{MT}.
Lastly, for the risk game, the \gls*{MT} $m^{(\text{risk})}(S)$ is the additive \emph{improvement in risk} obtained by $S$.
Given an explanation game, we now specify different types of feature influence.

\subsection{Types of Feature Influence Measure}\label{sec_framework_gametheory_types}

Here we present three types of 
\begin{align*}
    \textbf{feature influences } \phi: 2^D \to \mathbb{R};
\end{align*}
Given an explanation game $\nu$, \textbf{individual feature influence} $\phi(i)$ quantifies the contribution of a single feature $i \in D$, which is naturally based on the marginal contribution $\Delta_i(T) := \nu(T \cup i) - \nu(T\setminus i)$ in the presence of features in $T$.
If individual features are less meaningful, e.g., in high-dimensional image data \citep{DBLP:conf/iclr/ZintgrafCAW17}, then practitioners are frequently interested in the \textbf{joint feature influence} $\phi(S)$ of a group of features $S$, which we base on the joint marginal contribution $\Delta_{[S]}(T) = \nu(T \cup S) - \nu(T \setminus S)$ in the presence of features in $T$.
Beyond joint contributions, to investigate \emph{synergies and redundancies} of groups of features, referred to as \emph{interactions}, we introduce \textbf{feature interactions} $\phi^{I}(S)$ based on discrete derivatives $\Delta_S(T)=\Delta_{[S]}(T)-\sum_{L \subset S} \Delta_L(T)$ in the presence of features in $T$, which account for lower-order effects.
Having established three types of feature influences, we now introduce measures to quantify these.

\subsection{Quantifying Feature Influence}\label{sec_framework_gametheory_measure}

Given an explanation game $\nu$ and explanation type, we propose feature influence in absence of all remaining features as the \textbf{pure effect}, i.e., $\phi^\emptyset(i) := \Delta_i(\emptyset) =\nu(i) - \nu(\emptyset)$ for individuals, $\phi^\emptyset(S) := \Delta_{[S]}(\emptyset) = \nu(S) - \nu(\emptyset)$ for joint influences, and $\phi^{I\emptyset}(S) := \Delta_S(\emptyset) = m(S)$ for interactions.
Second, we introduce feature influence in presence of all remaining features as the \textbf{full effect}, i.e. $\phi^+(i) := \Delta_i(-i)= \nu(D) - \nu(-i)$ for individuals, $\phi^+(S) := \Delta_{[S]}(-S)=\nu(D)-\nu(-S)$ for joint influences, and $\phi^{I+}(S) := \Delta_S(-S) = \tilde m(S)$ for interactions.
Notably, pure interactions $\phi^{I\emptyset}$ and full interactions $\phi^{I+}$ correspond to the \gls*{MT} and Co-\gls*{MT}, respectively.
Lastly, we introduce the \textbf{partial effect} as the \gls*{SV} $\phi^{\text{SV}}$ for individuals, GV $\phi^{\text{GV}}$ for joint influences, and \glspl*{SI} $\phi^{\text{SI}}$ for interactions.
With these definitions, we now analyze how feature influence measures capture the \gls*{MT}, and thus the corresponding properties of fANOVA effects.

\begin{theorem}\label{theorem_influence_summary}
    Feature influence measures summarize the \gls*{MT}, according to \cref{tab_influence_summary}.
    Consequently, local explanations summarize fANOVA effects evaluated at $x_0$ according to \cref{tab_influence_summary}.
\end{theorem}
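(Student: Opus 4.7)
The plan is to derive each cell of the referenced table by the single mechanism of Möbius inversion: every game $\nu$ on $2^D$ admits the representation $\nu(T)=\sum_{S\subseteq T}m(S)$, so each linear combination of game evaluations $\nu(\cdot)$ that defines a feature influence measure rewrites, after swapping the order of summation, as a weighted sum of the Möbius coefficients. The proof proceeds uniformly in the explanation game: the three games $\nu^{(\text{loc})}_{x_0}$, $\nu^{(\text{sens})}$, $\nu^{(\text{risk})}$ never enter the algebra. Only in the very last step will I specialize to $\nu^{(\text{loc})}_{x_0}$ and invoke \cref{cor_fanova_moebius} to turn MT summaries into fANOVA summaries at $x_0$.

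First I would dispatch the pure and full effects. Substituting the MT representation into $\phi^\emptyset(S)=\nu(S)-\nu(\emptyset)$ and $\phi^+(S)=\nu(D)-\nu(-S)$ gives, by telescoping, $\phi^\emptyset(S)=\sum_{\emptyset\neq T\subseteq S}m(T)$ and $\phi^+(S)=\sum_{T:\,T\cap S\neq\emptyset}m(T)$; restricting to $S=\{i\}$ recovers the individual cases $\phi^\emptyset(i)=m(i)$ and $\phi^+(i)=\sum_{S\ni i}m(S)$. For the pure interaction, $\phi^{I\emptyset}(S)=\Delta_S(\emptyset)=m(S)$ is the definition of the MT itself, while the full interaction $\phi^{I+}(S)=\Delta_S(-S)$ equals the Co-MT $\tilde m(S)$ by definition, and a standard inclusion-exclusion on $\tilde m(S)=\sum_{L\subseteq S}(-1)^{s-\ell}\nu(-S\cup L)$ after substituting $\nu=\sum m$ collapses to $\tilde m(S)=\sum_{T\supseteq S}m(T)$. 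All of this is elementary algebra on sums over power sets.

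The main obstacle is the partial effects row, because it requires quoting three distinct operators from the cooperative game theory literature at their MT representations rather than deriving them from scratch. For individual influence I would use the well-known identity $\phi^{\text{SV}}(i)=\sum_{S\ni i}m(S)/s$ already recalled in \cref{sec_game_theory}. For joint influence I would cite the MT representation of the generalized value from \citet{DBLP:journals/dam/MarichalKF07}, and for interactions the MT form of the $k$-th order Shapley interaction \citep{bordt2022shapley,tsai_faith-shap_2022}; each of these is a linear, MT-diagonal operator whose coefficients depend only on the cardinalities $s$ and $t$, so the entries of the table are read off directly. Finally, for the second assertion, \cref{cor_fanova_moebius} gives $m^{(\text{loc})}_{x_0}(S)=f_S(x_0)$, hence every MT-summary produced in the first part is, in the local game, the same summary applied to the fANOVA effects $f_S(x_0)$, which is exactly the statement of the theorem.
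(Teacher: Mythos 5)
Your proposal is correct and follows essentially the same route as the paper's proof: substitute the Möbius recovery property $\nu(T)=\sum_{S\subseteq T}m(S)$ into the pure and full effects and swap sums, cite the known MT representations of the Shapley value, generalized values, and Shapley interactions for the partial row, and reduce the individual column to the case $S=\{i\}$. The only minor difference is that you derive the Co-MT identity $\tilde m(S)=\sum_{T\supseteq S}m(T)$ by inclusion-exclusion where the paper cites it from \citet{grabisch2000equivalent}, which makes your argument marginally more self-contained but does not change the structure.
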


With \cref{theorem_influence_summary}, we can distinguish all feature influence measures based on their summary of the \gls*{MT}.
For local explanations with Corollary~\ref{cor_fanova_moebius}, we can view the \gls*{MT} $m(S)$ as the fANOVA effect $f_S$ evaluated at $x_0$.
For individuals, the pure effect captures the main effect $m(i)$, whereas the full effect additionally includes all effects that involve feature $i$, while the partial effect (\gls*{SV}) distributes the higher-order effects equally among involved features.
For joint feature influences, the pure effect captures the effect $m(S)$ and all effects of subsets contained in $S$.
The full effect captures all effects that involve at least one feature in $S$, including the pure effect.
Moreover, the partial effect captures a share of each of these effects with an index-specific weight $w_t^s$ \citep{Shapiq.2024}.
For feature interactions, the pure interaction equals the isolated effect $m(S)$, whereas the full effect is the sum over all effects that involve all features in $S$.
Lastly, the partial effect captures a share of these higher-order effects with an index-specific weight $w_t^s$.
In summary, pure, partial, and full effects yield an increasing influence of higher-order interactions, which we illustrate with the following example.

\paragraph{Illustration of Feature Influence.}
We now illustrate the influence of main and higher-order effects for different feature influence methods. 
We consider three independent features $X_i \sim \mathcal{N}(0,1)$ and a functional relationship of $F_{3\text{int}}(x) = x_1 + x_2 + x_3 + x_1x_2 + x_1x_2x_3$. In this case, c-fANOVA is equivalent to m-fANOVA and b-fANOVA with $b = \mathbb{E}[X]$, and each term in $F$ corresponds to a fANOVA effect.
The weights of fANOVA effects for different feature influences of the local explanation are summarized in \cref{tab:influence_example}.
For individual contributions (rows 1-3), the pure effect captures the main effect $x_1$, whereas the partial effect includes the equally distributed interactions $x_1x_2$ and $x_1 x_2 x_3$.
Moreover, the full effect $\phi^+$ accounts for both interactions with weight $1$.
For joint influence (rows 4-6), the pure effect captures the interaction $x_1x_2$, as well as the individual effects $x_1,x_2$.
The partial and full effects additionally capture the higher-order interaction $x_1x_2x_3$.
Lastly, for interactions (rows 7-9), the pure effect captures the interaction $x_1x_2$, whereas the partial and full effect also account for the higher-order interaction $x_1x_2x_3$.

\begin{table*}[t]
\caption{Game-theoretic measures to quantify individual and joint feature influence, as well as feature interactions based on the \gls*{MT} $m$ of the game $\nu$. All methods capture the pure effect, and differently summarize higher-order interactions.}
\label{tab_influence_summary}
\begin{tabular}{@{}ll||ll@{}}
\toprule
& \textbf{Individual Effect}  & \textbf{Joint Effect}  & \textbf{Interaction Effect} 
\\ \midrule
\textbf{Pure} ($\phi^\emptyset$) & $m(i)$ &  $m(S) + \sum_{\emptyset \neq L\subset S} m(L)$ & $m(S)$
\\
\textbf{Partial} ($\phi^\text{SV/GV/SI}$) & $m(i) + \sum_{T \supset i} \frac{m(T)}{t}$ &  $m(S) + \sum_{T \cap S \notin \{S,\emptyset\}} w^{s}_{\vert T \setminus S\vert} m(T)$ & $m(S) + \sum_{T \supset S} w^{s}_{\vert T \setminus S\vert} m(T)$ 
\\
\textbf{Full} ($\phi^+$) & $m(i) + \sum_{T \supset i}m(T)$ & $m(S) + \sum_{T \cap S \notin \{S,\emptyset\}} m(T)$ & $m(S) + \sum_{T \supset S} m(T)$
\\ \bottomrule
\end{tabular}
\end{table*}

\begin{table}[ht]
\caption{Weights of fANOVA effects in local explanation $\phi_x$ with pure, partial, and full for Individual (Ind), Joint, and Interaction (Int) effects of feature $x_1$ ($i=\{1\}$) and interaction $x_1 \times x_2$ ($ij=\{1,2\}$).}\label{tab:influence_example}
\begin{tabular}{@{}l|l|l|l|l|l@{}}
\toprule
\multicolumn{2}{l|}{\textbf{$F_{3\text{int}}$ fANOVA effects}}  & $x_1$ & $x_2$ & $x_1x_2$ & $x_1x_2x_3$
\\ \midrule
\multirow{3}{*}{\textbf{Ind}} & \textbf{Pure} $\phi_x^\emptyset(i)$
&$1$ 
& $0$ 
& $0$
& $0$
\\
& \textbf{Partial} $\phi_x^{SV}(i)$ 
& $1$ 
& 0
& $\frac{1}{2}$ 
&  $\frac{1}{3}$ 
\\ 
& \textbf{Full} $\phi_x^+(i)$
& $1$
& $0$
& $1$ 
&  $1$ 
\\ \midrule
\multirow{3}{*}{\textbf{Joint}} & \textbf{Pure} $\phi_x^\emptyset(ij)$ 
&$1$ 
& $1$ 
& $1$ 
& $0$
\\
& \textbf{Partial} $\phi_x^{GV}(ij)$ 
&$1$
& $1$
& $1$ 
&  $w^2_1$ 
\\
& \textbf{Full} $\phi_x^+(ij)$ 
& $1$
& $1$
& $1$ 
&  $1$ 
\\ \midrule
\multirow{3}{*}{\textbf{Int}} & \textbf{Pure} $\phi_x^{I\emptyset}(ij)$ 
&$0$ 
& $0$
& $1$ 
& $0$
\\
& \textbf{Partial} $\phi_x^{SI}(ij)$ 
&$0$ 
& $0$
& $1$ 
&  $w^2_1$ 
\\
& \textbf{Full} $\phi_x^{I+}(ij)$ 
& $0$ 
& $0$
& $1$ 
&  $1$ 
\\ \bottomrule
\end{tabular}
\end{table}

\section{UNIFYING EXPLANATIONS}

We now use our previously established framework to categorize existing feature-based explanations, including a guide for practitioners.
A selection of algorithms is displayed in \cref{fig_intro_illustration}, and we refer to \cref{appx_section_table} for a full categorization.

\subsection{Local Feature-Based Explanations}
Local methods explain feature influence on predictions for an instance based on the local explanation game.

\paragraph{Influence via Perturbations.}
For individuals, baseline \citep{DBLP:conf/icml/SundararajanN20}, interventional (int.) \citep{DBLP:conf/aistats/JanzingMB20}, and observational (obs.) SHAP \citep{lundberg_2017_unified} rely on partial effects with b-, m- and c-fANOVA, respectively.
Occlusion-1 \citep{Zeiler_Fergus_2014} and individual conditional expectations (ICEs) \citep{goldstein_peeking_2015} use the full effect and b-fANOVA.
For groups of features, ArchAttribute \citep{tsang2020does} quantifies the pure joint effect with b-fANOVA.
Occlusion-patch and ICE compute the full joint effect with b-fANOVA,
whereas PredDiff-patch \citep{DBLP:conf/iclr/ZintgrafCAW17} uses c-fANOVA. 
Global feature effect methods, like \gls*{PDP} \citep{friedman_greedy_2001} and its conditional variant known as marginal (M) plot \citep{apley_visualizing_2020}, visualize the pure joint effect using m- and c-fANOVA across a feature's range.
\begin{corollary}
    In practice, \gls*{PDP} and M-Plot can be used to compute m- and c-fANOVA, respectively.
\end{corollary}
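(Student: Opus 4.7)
The plan is to reduce the corollary to a direct identification of definitions, followed by an invocation of the iterative fANOVA construction in Eq.~(\ref{eq_fanova_effect}).

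First, I would recall the standard definitions of the two global effect plots. The partial dependence plot of \citet{friedman_greedy_2001} is defined pointwise as $\text{PDP}_S(x_S) := \mathbb{E}[F(x_S, X_{-S})]$, where the expectation is taken with respect to the marginal distribution of $X_{-S}$. Comparing this immediately with the definition of $F_S^{(m)}$ in the definition of m-fANOVA, the two expressions coincide. Analogously, the M-plot of \citet{apley_visualizing_2020} is defined as $\text{M}_S(x_S) := \mathbb{E}[F(X) \mid X_S = x_S]$, which by inspection agrees with the value function $F_S^{(c)}$ of c-fANOVA.

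Second, I would use the recursive construction of fANOVA effects from Eq.~(\ref{eq_fanova_effect}). Once the value functions $F_S$ are available for every $S \subseteq D$, the effects $f_S$ are uniquely determined by the telescoping relation $f_S(x) = F_S(x) - \sum_{T \subset S} f_T(x)$. Hence, knowledge of PDP$_S$ for all $S$ uniquely determines the entire m-fANOVA decomposition, and likewise knowledge of M$_S$ for all $S$ uniquely determines the c-fANOVA decomposition. In particular, the pure joint effect of the local explanation game, $\phi^\emptyset_{x_0}(S) = F_S(x_0) - F_\emptyset$, is recovered by evaluating the plot at $x_S$ and subtracting the constant $F_\emptyset = \mathbb{E}[F(X)]$, which corresponds to how \gls*{PDP} and M-plot are displayed across a feature's range in practice.

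Since both steps are just renamings followed by an appeal to a previously stated equation, there is no substantive obstacle; the only subtlety worth flagging is that ``computing m-/c-fANOVA'' should be understood in the sense of recovering all effects $f_S$ through the recursion, not merely reading off a single $F_S$. I would make this explicit in one sentence so that the corollary is not misread as identifying PDP/M-plot with a single fANOVA component in isolation.
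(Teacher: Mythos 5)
Your proposal matches the paper's own argument: both identify the PD function with $F_S^{(m)}$ and the M-plot function with $F_S^{(c)}$, and then observe that the fANOVA effects follow from the recursion in Eq.~(\ref{eq_fanova_effect}). Your explicit remark that the corollary refers to recovering all effects $f_S$ via the recursion, rather than a single component, is a fair clarification of what the paper leaves implicit, but the route is the same.
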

Inspired by game theory, joint \glspl*{SV} \citep{Harris.2022} and Faith-SHAP \citep{tsai_faith-shap_2022} capture partial joint and interaction effects with b-fANOVA.
The int. Shapley-GAM \citep{bordt2022shapley} and int. k-\glspl*{SV} \citep{lundberg_consistent_2019} use pure and partial effects of m-fANOVA, whereas obs. Shapley-GAM and k-\glspl*{SV} rely on pure and partial effects of c-fANOVA.

\paragraph{Influence via Gradients.}
For some model classes, most prominently in deep learning, gradients with respect to individual features can be calculated.
With \cref{thm_bfanova_taylor}, we linked Taylor interactions and b-fANOVA.
Gradient-based methods distribute the Taylor interactions, and partially account for b-fANOVA effects, as described by \cite{Deng2024Unify}.
An advantage of gradients is that effects without the feature of interest vanish $\frac{\partial F(x)}{\partial x_i} = \sum_{S \subseteq D: i \in S} \frac{\partial f_S(x)}{\partial x_i}$.
A clear disadvantage is that the derivatives $\frac{\partial f_S}{\partial x_i}$ differ from the actual effects $f_S$.
The gradient evaluated at $x_0$ is a linear approximation of $F$ at $x_0$, which yields a rough influence measure (Grad$\times$Input).
Other methods, like integrated gradients (IG) \citep{DBLP:conf/icml/SundararajanTY17} or ALE \citep{apley_visualizing_2020}, compute average gradients, where ALE relies on conditional expectations and IG on a path integral \citep{Friedman_2004}.
IG relates to $F$ via the Aumann-\gls*{SV} \citep{aumann_shapley1974}, but the choice of baseline has a strong impact \citep{sturmfels2020visualizing,Koenen_Wright_2024}.

\subsection{Global Feature-Based Explanations}
Global methods explain the feature influence on a global level, independent of specific instances.

\paragraph{Influence via Sensitivity Analysis.}
Sensitivity analysis studies the variance of $F_S$ \citep{Owen_2013}, captured by the global sensitivity game $\nu^{\text{sens}}$.
Sobol' indices, $\underline{\tau}_S,\overline{\tau}_S$ \citep{Sobol_2001} correspond to the pure and full joint effect, respectively.
The superset measure $ \Upsilon_S^2$ \citep{hooker_discovering_2004} is the full interaction effect.
To measure the strength of interaction, \cite{friedman_predictive_2008} proposed the H-statistic, which corresponds to the pure interaction effect.
Moreover, \cite{Owen_2014} proposed the partial effect for individuals.
Above measures require independent features and summarize m-fANOVA effects, which by \cref{theorem_sensitivity} are the \gls*{MT}.
For dependent features, \cite{Owen.2017} proposed the partial effect for c-fANOVA, and demonstrated intuitive results for a variety of distributions and models.

\paragraph{Influence via Performance (Risk).}
Feature-based explanations using the expected model loss include the data-generating process and provide global feature importance scores based on the risk game.
Permutation feature importance (PFI) \citep{Breiman2001, fisher2019all} and conditional feature importance (CFI) \citep{strobl2008conditional, molnar2023model} quantify the full individual effect based on m- and c-fANOVA.
In contrast, SAGE \citep{covert2020sage} computes the partial individual effect for c-fANOVA (theoretically) and m-fANOVA (practically).

%CXPlain \citep{DBLP:conf/nips/SchwabK19} and
%LossSHAP (ohne expectations)

%INVASE (expected loss for given input)

\subsection{A Guide for Practitioners}
Our framework facilitates the process of selecting a suitable feature-based explanation for a given interpretation goal to answering the following four questions:

\textbf{(1) Explanation game:} Do we explain a single instance (local) or the general model behavior (global)? 

\textbf{(2) Explanation Type:} Do we report influence of \emph{individuals} or groups (\emph{joint}), or synergies (\emph{interaction})? 

%\textbf{(2) Distribution influence}: 
\textbf{(3) Imputation method:} Should the feature distribution not (\emph{baseline}), partially (\emph{marginal}), or fully (\emph{conditional}) be captured by the explanation?

\textbf{(4) Higher-order interactions:} Should higher-order interactions not (\emph{pure}), \emph{partially}, or \emph{fully} influence the explanation? 

%\cref{fig_intro_illustration} provides a categorization of selected methods according to our framework.

\paragraph{Explanation Cost.}
Computational complexity is typically assessed by the number of model calls to $F$.
In the local explanation game the primary factor is the value function $F_S$, for which a single model call suffices for b-fANOVA, whereas m- and c-fANOVA rely on approximations of expectations, requiring varying numbers of evaluations depending on the desired estimation quality \citep{baniecki2025efficient}.
For global explanations, local explanation game values must be computed across the feature distribution or the labeled data.
Once game values are available, pure and full individual and joint effects each require \emph{two} game evaluations, while pure and full interactions for a set of size $s$ demand $2^s$ evaluations.
Partial effects are the most computationally expensive, necessitating $2^d$ game evaluations or their approximations. Importantly, game evaluations can be reused across different explanation games, types, and effects.

\section{EXPERIMENTS}\label{sec_experiments}
We now showcase our framework on synthetic and real-world datasets. 
For m-fANOVA, we use a background dataset of $512$ samples, and for c-fANOVA the synthetic ground-truth.
Further details and experiments are described in \cref{appx_sec_experiment_setup} and \cref{appx_sec_further_exp}.
Our code is available at \url{https://github.com/FFmgll/unifying_feature_based_explanations}.
%Further experiments, datasets and results are provided in \cref{appx_sec_experiment_setup} and \cref{appx_sec_further_exp}.

\begin{figure}[t]
    \includegraphics[width=\columnwidth]{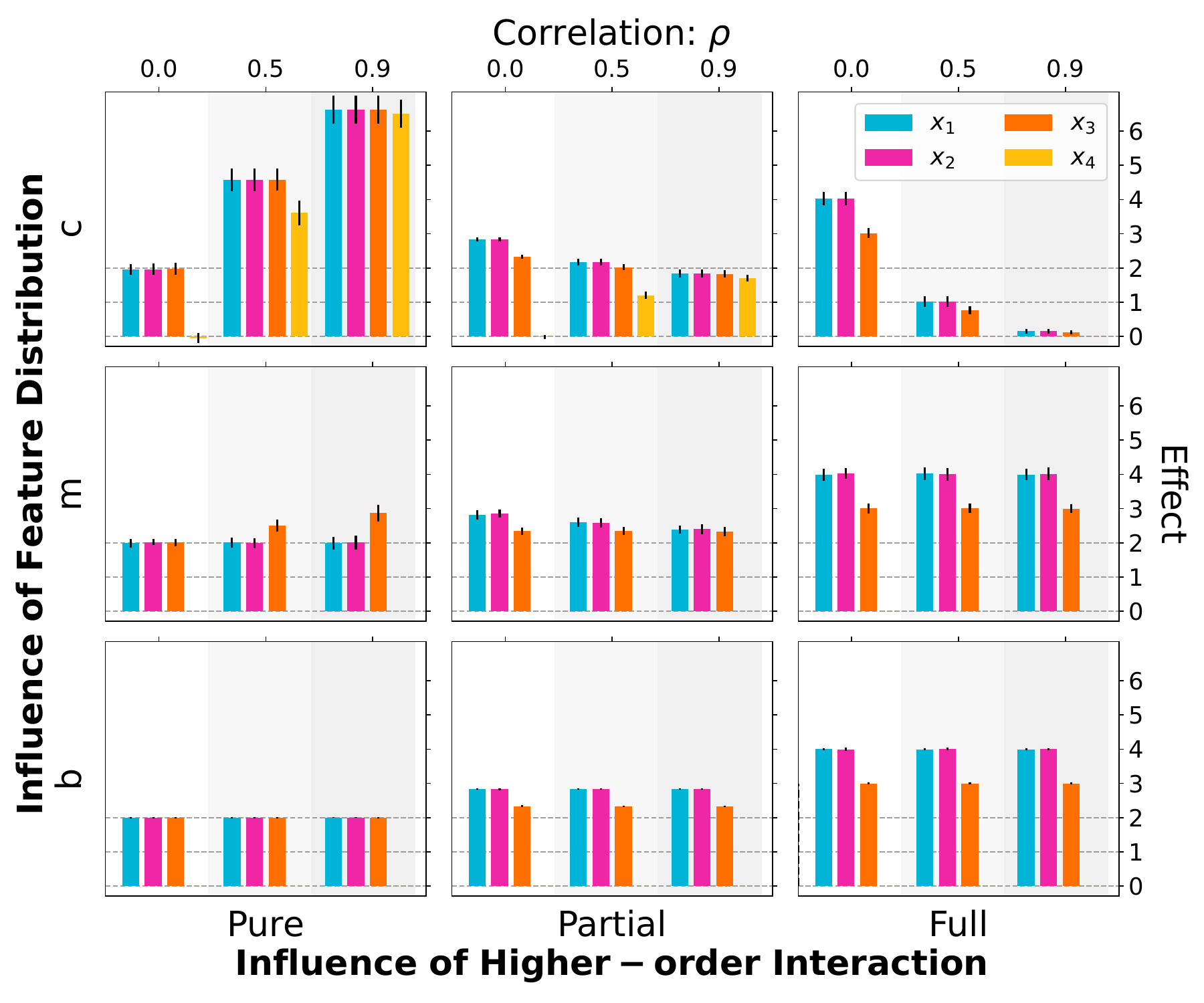}
    \caption{Local explanations for the instance $x = (1,1,1,1)$ averaged over $30$ repetitions of varying random seeds (fluctuation is shown by error bars). Note that the model is correctly specified as $F(x) = {\color{myblue}2x_1} + {\color{mypink}2x_2} + {\color{myorange}2x_3} + {\color{myblue}x_1} {\color{mypink}x_2} + {\color{myblue}x_1} {\color{mypink}x_2} {\color{myorange}x_3}$.}
    \label{fig_sim_results}
\end{figure}

\subsection{Synthetic Example}\label{sec_experiments_synth}
We consider a 4-dimensional normal distribution $X \sim \mathcal{N}(0, \Sigma)$ with $\sigma_{ii}^2 = 1$ and $\sigma_{ij} = \rho$ for $i \neq j$.
We then vary $\rho \in \{0, 0.5, 0.9\}$, i.e., from uncorrelated to highly correlated.
The target is given by $F(x) = 2 x_1 + 2 x_2 + 2 x_3 + x_1 x_2 + x_1 x_2 x_3$ and corrupted by noise $\epsilon \sim \mathcal{N}(0,0.01)$.
We then fit a correctly specified linear model (LM) to each setting.
\cref{fig_sim_results} shows the individual local explanations of our framework for one instance for varying feature dependencies.
As expected, explanations based on b-, m-, and c-fANOVA only differ when features are correlated. While b-fANOVA explanations are not affected by feature dependencies, m-fANOVA uses the joint marginal distribution, which particularly influences pure feature effects. C-fANOVA is highly influenced by feature dependencies due to cross-correlation effects. Thus, for $\rho = 0.9$ all features including the noise feature result in equally high pure effects. In contrast, full effects are small, since the remaining three features capture the effects of the feature of interest through cross-correlation effects. For b- and m-fANOVA, the comparison between pure, partial, and full clearly shows the presence of feature interactions between $x_1$, $x_2$, and $x_3$ with stronger effects between the first two features.

\subsection{Real-World Application}\label{sec_experiments_real_world}
We now demonstrate differences between the explanation methods in two real-world settings, and uncover interactions in local and global explanations.

\paragraph{California Housing.} We consider the global sensitivity game on the \emph{California housing} dataset \citep{Kelley.1997}.
We display pure (\gls*{MT}) and full (superset measures $\Upsilon^2$) interaction effects for the first and second order based on m-fANOVA in \cref{fig_california}.
Pure effects (a) show that features \emph{Latitude (La)} and \emph{Longitude (Lo)} have a strong individual importance with a negative pairwise interaction indicating a redundancy between these two features.
On the contrary, different full effects (b) indicate the presence of higher-order interactions.
While \emph{MedianIncome (MI)} shows almost no main (individual pure) effect, it has a strong full individual effect due to higher-order interactions particularly including \emph{La} and \emph{Lo} highlighted by the full pairwise interaction effects.

\begin{figure}[t]
    \centering
    \begin{minipage}[c]{0.04\columnwidth}
        \textbf{a)}
    \end{minipage}
    \hfill
    \begin{minipage}[c]{0.39\columnwidth}
        \includegraphics[width=\textwidth]{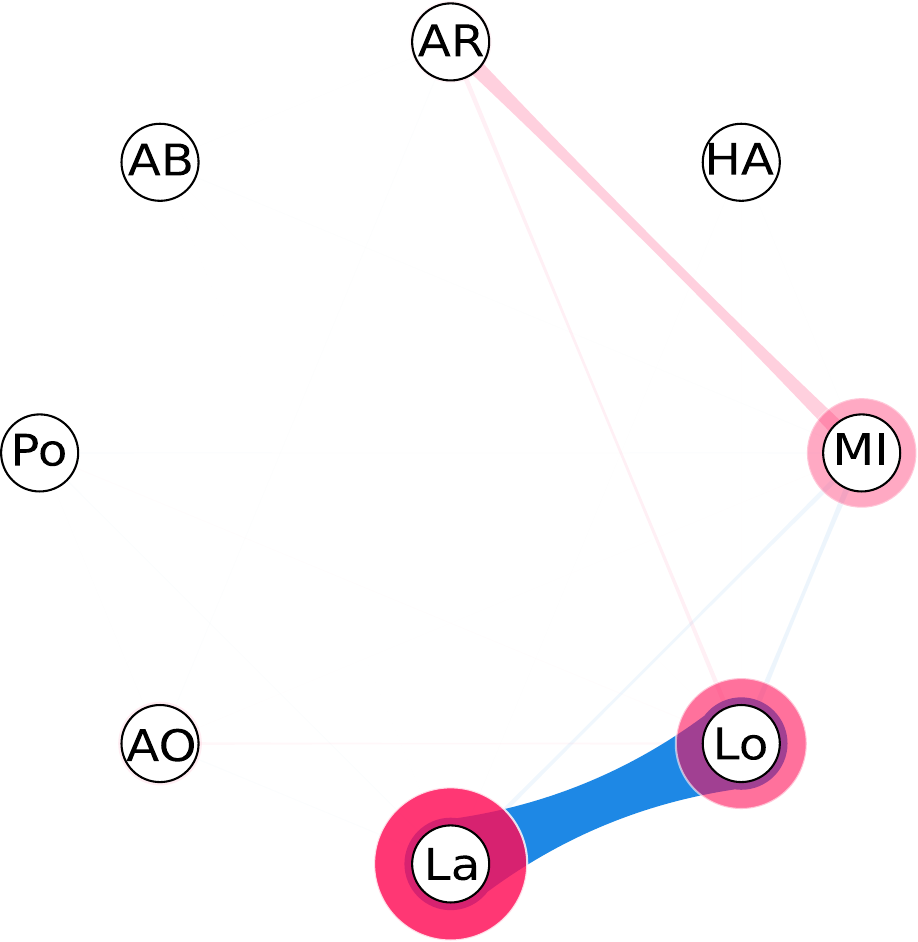}
    \end{minipage}
    \hfill
    \begin{minipage}[c]{0.04\columnwidth}
        \textbf{b)}
    \end{minipage}
    \hfill
    \begin{minipage}[c]{0.39\columnwidth}
        \includegraphics[width=\textwidth]{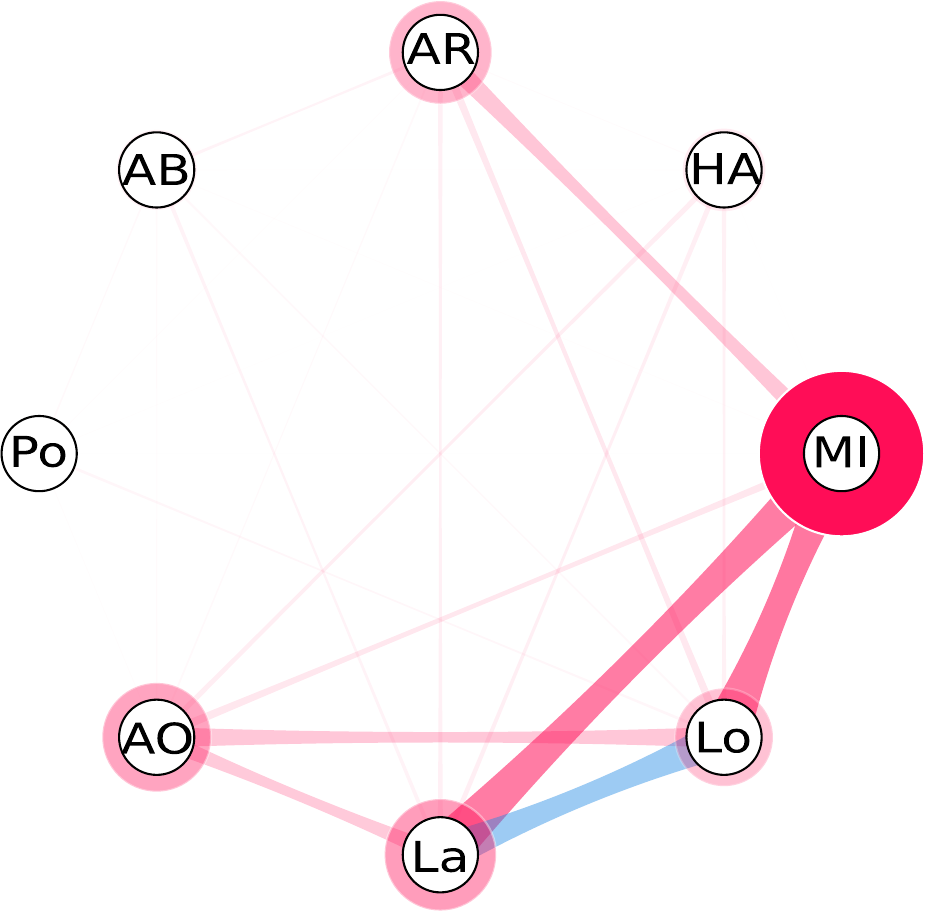}
    \end{minipage}
    \vspace{0.5em}
    \caption{Pure \textbf{(a)} and full \textbf{(b)} individual effects and two-way interactions for the \emph{global sensitivity game} of an XGBoost model trained on \emph{California housing}. Blue and red colors denote a reduction and increase of variance.}
    \label{fig_california}
\end{figure}

\paragraph{Sentiment Language Model.} In \cref{fig_language}, we further investigate local explanations on a sentiment analysis language model fine-tuned on the \emph{IMDB} \citep{Maas.2011} dataset.
The sentiment scores range between $\left[-1,1\right]$, where values close to $-1$ denote
a \emph{negative} sentiment and values close to $1$ correspond to a \emph{positive} sentiment.
We provide the model the sentence \emph{``The acting was bad, but the movie enjoyable''}.
The sentiment for this sentence is $\approx 0.8117$, whereas the baseline sentiment is $\approx 0.5361$, indicating a bias towards positive reviews.
Individual effects (left) strongly vary between pure, partial and full effects, indicating the presence of higher-order interactions.
Specifically, pure individual effects highlight the positive influence of ``enjoyable'', whereas ``bad'' and ``but'' have a negative influence. Interestingly, ``acting'' also shows a negative influence, which possibly indicates a bias in the prediction. The partial individual effect (\gls*{SV}) shows a stronger positive influence for ``enjoyable'', and a weaker negative influence for ``acting'' and ``bad''. In contrast, ``but'' now receives a positive partial influence, which indicates its involvement in positive higher-order interactions.
In fact, the partial interaction effects up to order $2$ (2-\glspl*{SV}, right) show that (``bad'',``but''), and  (``but'',``enjoyable'') have a strong positive partial interaction effect.
This observation is intuitive, since (``bad'',``but'') suggests that a negative sentiment can be reversed in meaning.
Further experiments on other games, models, and effects are provided in Appendix~G.

\begin{figure}[t]
    \hfill
    \begin{minipage}[c]{0.52\columnwidth}
        \tiny{\textbf{Pure, Individual}}\\[0.3em]
        \includegraphics[width=\textwidth]{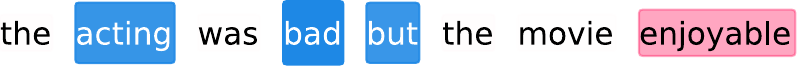}\\[1em]
        \tiny{\textbf{Partial, Individual}}\\[0.3em]
        \includegraphics[width=\textwidth]{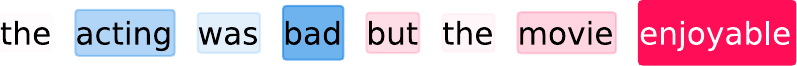}\\[1em]
        \tiny{\textbf{Full, Individual}}\\[0.3em]
        \includegraphics[width=\textwidth]{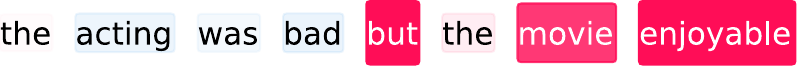}
        \vspace{0.3em}
    \end{minipage}
    \hfill
    \begin{minipage}[c]{0.45\columnwidth}
        \includegraphics[width=\textwidth]{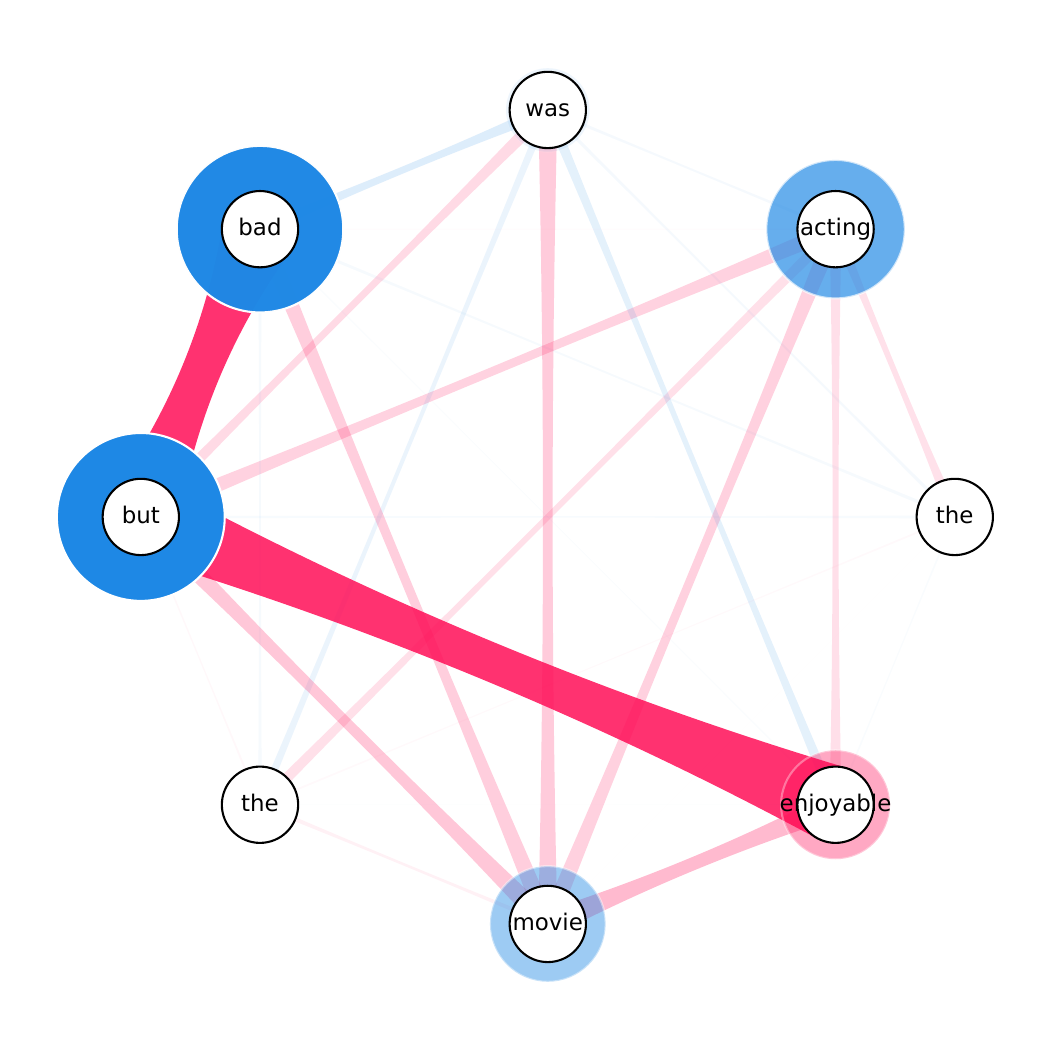}
    \end{minipage}
    \vspace{0.5em}
    \caption{Individual \textbf{(left)} and second order partial interaction \textbf{(2-\glspl*{SV}, right)} effects for the \emph{local explanation game} of a sentiment analysis language model. While ``but'' has a negative pure effect, the partial and full effects are positive, indicating positive higher-order interactions. In fact, (``bad'',``but''), and  (``but'',``enjoyable'') have a strong positive partial interaction effect.}
    \label{fig_language}
\end{figure}

\section{RELATED WORK}

Several unified frameworks for individual feature influence measures were introduced: \cite{lundberg_2017_unified} propose SHAP, which applies the \gls*{SV} across different methods.
\cite{DBLP:conf/icml/SundararajanN20} analyze the SV with baseline and marginal imputations, i.e. m- and b-fANOVA.
\cite{DBLP:conf/nips/HanSL22} characterize local explanations using a function approximation.
\cite{lundstrom2023unifying} introduce synergy functions to interpret gradient-based explanations, which correspond to b-fANOVA effects, whereas \cite{Deng2024Unify} unified gradient-based explanations with Taylor interactions, a fine-grained component of b-fANOVA effects.
\cite{covert2021explaining} propose a framework for local and global explanations, which summarizes perturbation-based methods.
\cite{tan2023considerations} unify global explanations by an additive decomposition and study how non-additive components influence the explanation.
Beyond individual influence measures,
\cite{hiabu2022unifying} unify the SV with PDP using a functional decomposition with marginal distributions, corresponding to m-fANOVA.
\cite{bordt2022shapley} introduced the Shapley-GAM and unified \glspl*{SI} for marginal and conditional distributions, which relates to m-fANOVA and c-fANOVA.

\section{LIMITATIONS}
Our unified framework covers a variety of local and global feature-based explanations, where \cref{appx_sec_extensions_framework} highlights a broader perspective.
However, instance-based explanations, such as counterfactuals \citep{DBLP:journals/corr/abs-1711-00399}, are not included.
Moreover, feature-based explanations that rely on distance measures, such as LIME \citep{DBLP:conf/kdd/Ribeiro0G16} are related to SHAP \citep{lundberg_2017_unified}, but not fully covered.
Similarly, some gradient-based methods summarize derivatives on a more fine-grained level than the related b-fANOVA effects \citep{Deng2024Unify}.
For practical considerations, the m- and c-fANOVA requires the \emph{approximation} of (conditional) expectations, where a variety of methods exist \citep{olsen2024comparative}, which we did not address here.
Moreover, the \gls*{SV} and \glspl*{SI} require an exponential amount of game evaluations, which also necessitates approximation methods \citep{DBLP:journals/jmlr/MitchellCFH22,Chen2023Overview_ExplainabilityWithShapley,Fumagalli.2023,fumagalli2024kernelshapiq,Kolpaczki.2024}.
For tree-based models with distributional assumptions, it was shown that exact calculation is feasible \citep{lundberg_consistent_2019,DBLP:conf/aaai/MuschalikFHH24}.

\section{CONCLUSION}

We combined fANOVA, a popular tool in statistics, and cooperative game theory, to present a unified framework for feature-based explanations. 
Our framework includes three \emph{explanation games}, one for local, and two for global explanations in terms of sensitivity and performance.
Within each explanation game, we then presented three types of explanations: \emph{Individual} and \emph{joint} feature effects quantify the \emph{features' influence} for individuals and groups of individuals, whereas \emph{interaction} effects measure \emph{synergies and redundancies} of groups of features.
By choosing b-, m- or c-fANOVA, practitioners are able to include \emph{increasing influence of the features' distribution} in their explanations, which we showcase on synthetic examples.
We further proposed three game-theoretic concepts, including the \gls*{SV} and \glspl*{SI}, to quantify pure, partial and full effects that yield an \emph{increasing influence of higher-order interactions}, which we again showcase on synthetic examples.
Finally, we showed that our framework unifies a variety of existing feature-based explanations, offering clear guidance for their interpretation and application.
We demonstrated its usefulness on synthetic and real-world datasets.

\clearpage
\section*{ACKNOWLEDGEMENTS}
We gratefully thank the anonymous reviewers for their valuable feedback for improving this work.
Fabian Fumagalli and Maximilian Muschalik gratefully acknowledge funding by the Deutsche Forschungsgemeinschaft (DFG, German Research Foundation): TRR 318/1 2021 – 438445824. Julia Herbinger gratefully acknowledges funding by the German Ministry for Education and Research through project Explaining 4.0 (ref. 01IS200551), the European Union’s Horizon Europe Innovation program through project SPIN-FERT (ref. 101157265), and the German Federal Ministry of Education and Research (BMBF) through project DCropS4OneHealth-2 (ref. 16LW0528K).
The authors of this work take full responsibility for its content.

\renewcommand{\bibname}{REFERENCES}
\bibliography{bib}

\begin{thebibliography}{}

\bibitem[Aas et~al., 2021]{aas_explaining_2021}
Aas, K., Jullum, M., and Løland, A. (2021).
\newblock {Explaining individual predictions when features are dependent: More accurate approximations to {Shapley} values}.
\newblock {\em Artificial Intelligence}, 298:103502.

\bibitem[Ancona et~al., 2018]{DBLP:conf/iclr/AnconaCO018}
Ancona, M., Ceolini, E., {\"{O}}ztireli, C., and Gross, M. (2018).
\newblock {Towards better understanding of gradient-based attribution methods for Deep Neural Networks}.
\newblock In {\em 6th International Conference on Learning Representations, {ICLR} 2018, Vancouver, BC, Canada, April 30 - May 3, 2018, Conference Track Proceedings}. OpenReview.net.

\bibitem[Apley and Zhu, 2020]{apley_visualizing_2020}
Apley, D.~W. and Zhu, J. (2020).
\newblock {Visualizing the effects of predictor variables in black box supervised learning models}.
\newblock {\em Journal of the Royal Statistical Society: Series B (Statistical Methodology)}, 82(4):1059--1086.

\bibitem[Au et~al., 2022]{au2022grouped}
Au, Q., Herbinger, J., Stachl, C., Bischl, B., and Casalicchio, G. (2022).
\newblock {Grouped feature importance and combined features effect plot}.
\newblock {\em Data Mining and Knowledge Discovery}, 36(4):1401--1450.

\bibitem[Aumann and Shapley, 1974]{aumann_shapley1974}
Aumann, R.~J. and Shapley, L.~S. (1974).
\newblock {\em {Values of Non-Atomic Games}}.
\newblock Princeton University Press.

\bibitem[Bach et~al., 2015]{bach2015pixel}
Bach, S., Binder, A., Montavon, G., Klauschen, F., M{\"u}ller, K.-R., and Samek, W. (2015).
\newblock {On pixel-wise explanations for non-linear classifier decisions by layer-wise relevance propagation}.
\newblock {\em PloS one}, 10(7):e0130140.

\bibitem[Baniecki et~al., 2025]{baniecki2025efficient}
Baniecki, H., Casalicchio, G., Bischl, B., and Biecek, P. (2025).
\newblock {Efficient and Accurate Explanation Estimation with Distribution Compression}.
\newblock In {\em {The Thirteenth International Conference on Learning Representations (ICLR)}}.

\bibitem[Bl{\"{u}}cher et~al., 2022]{Blucher.2022}
Bl{\"{u}}cher, S., Vielhaben, J., and Strodthoff, N. (2022).
\newblock Preddiff: Explanations and interactions from conditional expectations.
\newblock {\em Artif. Intell.}, 312:103774.

\bibitem[Bordt and von Luxburg, 2023]{bordt2022shapley}
Bordt, S. and von Luxburg, U. (2023).
\newblock {From {Shapley} values to generalized additive models and back}.
\newblock In {\em International Conference on Artificial Intelligence and Statistics}, pages 709--745. PMLR.

\bibitem[Breiman, 2001]{Breiman2001}
Breiman, L. (2001).
\newblock {Random Forests}.
\newblock {\em Machine Learning}, 45(1):5--32.

\bibitem[Chastaing et~al., 2012]{Chastaing_Gamboa_Prieur_2012}
Chastaing, G., Gamboa, F., and Prieur, C. (2012).
\newblock {Generalized Hoeffding-Sobol decomposition for dependent variables - application to sensitivity analysis}.
\newblock {\em Electronic Journal of Statistics}, 6(none):2420–2448.

\bibitem[Chen et~al., 2023]{Chen2023Overview_ExplainabilityWithShapley}
Chen, H., Covert, I.~C., Lundberg, S.~M., and Lee, S.-I. (2023).
\newblock {Algorithms to estimate Shapley value feature attributions}.
\newblock {\em Nature Machine Intelligence}, 5(6):590--601.

\bibitem[Chen and Guestrin, 2016]{XGBoost}
Chen, T. and Guestrin, C. (2016).
\newblock Xgboost: A scalable tree boosting system.
\newblock {\em Proceedings of the 22nd ACM SIGKDD International Conference on Knowledge Discovery and Data Mining}, pages 785--794.

\bibitem[Covert et~al., 2021]{covert2021explaining}
Covert, I., Lundberg, S., and Lee, S.-I. (2021).
\newblock {Explaining by removing: A unified framework for model explanation}.
\newblock {\em Journal of Machine Learning Research}, 22(209):1--90.

\bibitem[Covert et~al., 2020]{covert2020sage}
Covert, I., Lundberg, S.~M., and Lee, S.-I. (2020).
\newblock {Understanding Global Feature Contributions With Additive Importance Measures}.
\newblock In {\em Advances in Neural Information Processing Systems}, volume~33, pages 17212--17223. Curran Associates, Inc.

\bibitem[Deng et~al., 2024]{Deng2024Unify}
Deng, H., Zou, N., Du, M., Chen, W., Feng, G., Yang, Z., Li, Z., and Zhang, Q. (2024).
\newblock {Unifying Fourteen Post-Hoc Attribution Methods With Taylor Interactions}.
\newblock {\em IEEE Transactions on Pattern Analysis and Machine Intelligence}, 46(7):4625--4640.

\bibitem[Erion et~al., 2021]{DBLP:journals/natmi/ErionJSLL21}
Erion, G.~G., Janizek, J.~D., Sturmfels, P., Lundberg, S.~M., and Lee, S. (2021).
\newblock {Improving performance of deep learning models with axiomatic attribution priors and expected gradients}.
\newblock {\em Nat. Mach. Intell.}, 3(7):620--631.

\bibitem[Ewald et~al., 2024]{Ewald2024featimp}
Ewald, F.~K., Bothmann, L., Wright, M.~N., Bischl, B., Casalicchio, G., and K{\"o}nig, G. (2024).
\newblock {A Guide to Feature Importance Methods for Scientific Inference}.
\newblock In {\em Explainable Artificial Intelligence}, pages 440--464, Cham. Springer Nature Switzerland.

\bibitem[Fisher et~al., 2019]{fisher2019all}
Fisher, A., Rudin, C., and Dominici, F. (2019).
\newblock {All Models are Wrong, but Many are Useful: Learning a Variable's Importance by Studying an Entire Class of Prediction Models Simultaneously.}
\newblock {\em Journal of Machine Learning Research}, 20(177):1--81.

\bibitem[Friedman, 2004]{Friedman_2004}
Friedman, E.~J. (2004).
\newblock {Paths and consistency in additive cost sharing}.
\newblock {\em Int. J. Game Theory}, 32(4):501–518.

\bibitem[Friedman, 2001]{friedman_greedy_2001}
Friedman, J.~H. (2001).
\newblock {Greedy Function Approximation: A Gradient Boosting Machine}.
\newblock {\em The Annals of Statistics}, 29(5):1189--1232.

\bibitem[Friedman and Popescu, 2008]{friedman_predictive_2008}
Friedman, J.~H. and Popescu, B.~E. (2008).
\newblock {Predictive Learning via Rule Ensembles}.
\newblock {\em The Annals of Applied Statistics}, 2(3):916--954.

\bibitem[Fujimoto et~al., 2006]{Fujimoto.2006}
Fujimoto, K., Kojadinovic, I., and Marichal, J. (2006).
\newblock {Axiomatic characterizations of probabilistic and cardinal-probabilistic interaction indices}.
\newblock {\em Games and Economic Behavior}, 55(1):72--99.

\bibitem[Fumagalli et~al., 2024]{fumagalli2024kernelshapiq}
Fumagalli, F., Muschalik, M., Kolpaczki, P., H{\"{u}}llermeier, E., and Hammer, B. (2024).
\newblock {KernelSHAP-IQ: Weighted Least Square Optimization for Shapley Interactions}.
\newblock In {\em Forty-first International Conference on Machine Learning ({ICML} 2024)}.

\bibitem[Fumagalli et~al., 2023]{Fumagalli.2023}
Fumagalli, F., Muschalik, M., Kolpaczki, P., H{\"u}llermeier, E., and Hammer, B.~E. (2023).
\newblock {{SHAP}-{IQ}: Unified Approximation of any-order Shapley Interactions}.
\newblock In {\em Thirty-seventh Conference on Neural Information Processing Systems {(NeurIPS 2023)}}.

\bibitem[Goldstein et~al., 2015]{goldstein_peeking_2015}
Goldstein, A., Kapelner, A., Bleich, J., and Pitkin, E. (2015).
\newblock {Peeking Inside the Black Box: Visualizing Statistical Learning With Plots of Individual Conditional Expectation}.
\newblock {\em Journal of Computational and Graphical Statistics}, 24(1):44--65.

\bibitem[Grabisch, 1997]{DBLP:journals/fss/Grabisch97}
Grabisch, M. (1997).
\newblock {k-order additive discrete fuzzy measures and their representation}.
\newblock {\em Fuzzy Sets Syst.}, 92(2):167--189.

\bibitem[Grabisch et~al., 2000]{grabisch2000equivalent}
Grabisch, M., Marichal, J.-L., and Roubens, M. (2000).
\newblock {Equivalent representations of set functions}.
\newblock {\em Mathematics of Operations Research}, 25(2):157--178.

\bibitem[Grabisch and Roubens, 1999]{Grabisch.1999}
Grabisch, M. and Roubens, M. (1999).
\newblock {An axiomatic approach to the concept of interaction among players in cooperative games}.
\newblock {\em International Journal of Game Theory}, 28(4):547--565.

\bibitem[Han et~al., 2022]{DBLP:conf/nips/HanSL22}
Han, T., Srinivas, S., and Lakkaraju, H. (2022).
\newblock Which explanation should {I} choose? {A} function approximation perspective to characterizing post hoc explanations.
\newblock In Koyejo, S., Mohamed, S., Agarwal, A., Belgrave, D., Cho, K., and Oh, A., editors, {\em Advances in Neural Information Processing Systems 35: Annual Conference on Neural Information Processing Systems 2022 (NeurIPS 2022)}.

\bibitem[Harris et~al., 2022]{Harris.2022}
Harris, C., Pymar, R., and Rowat, C. (2022).
\newblock {Joint Shapley values: a measure of joint feature importance}.
\newblock In {\em Proceedings of the International Conference on Learning Representations ({ICLR})}.

\bibitem[Harsanyi, 1963]{harsanyi1963simplified}
Harsanyi, J.~C. (1963).
\newblock {A simplified bargaining model for the n-person cooperative game}.
\newblock {\em International Economic Review}, 4(2):194--220.

\bibitem[Herbinger et~al., 2022]{herbinger2022repid}
Herbinger, J., Bischl, B., and Casalicchio, G. (2022).
\newblock {Repid: Regional effect plots with implicit interaction detection}.
\newblock In {\em International Conference on Artificial Intelligence and Statistics}, pages 10209--10233. PMLR.

\bibitem[Hiabu et~al., 2023]{hiabu2022unifying}
Hiabu, M., Meyer, J.~T., and Wright, M.~N. (2023).
\newblock {Unifying local and global model explanations by functional decomposition of low dimensional structures}.
\newblock In {\em International Conference on Artificial Intelligence and Statistics}, pages 7040--7060. PMLR.

\bibitem[Hoeffding, 1948]{Hoeffding_1948}
Hoeffding, W. (1948).
\newblock {A Class of Statistics with Asymptotically Normal Distribution}.
\newblock {\em The Annals of Mathematical Statistics}, 19(3):293–325.

\bibitem[Hooker, 2004]{hooker_discovering_2004}
Hooker, G. (2004).
\newblock {Discovering additive structure in black box functions}.
\newblock In {\em Proceedings of the tenth {ACM} {SIGKDD} International Conference on {Knowledge} Discovery and Data mining}, pages 575--580.

\bibitem[Hooker, 2007]{hooker_generalized_2007}
Hooker, G. (2007).
\newblock {Generalized Functional ANOVA Diagnostics for High-Dimensional Functions of Dependent Variables}.
\newblock {\em Journal of Computational and Graphical Statistics}, 16(3):709--732.

\bibitem[Iwasawa and Matsumori, 2024]{iwasawa2024interactiondecompositionpredictionfunction}
Iwasawa, H. and Matsumori, Y. (2024).
\newblock {Interaction Decomposition of prediction function}.

\bibitem[Janizek et~al., 2021]{DBLP:journals/jmlr/JanizekSL21}
Janizek, J.~D., Sturmfels, P., and Lee, S. (2021).
\newblock {Explaining Explanations: Axiomatic Feature Interactions for Deep Networks}.
\newblock {\em J. Mach. Learn. Res.}, 22:104:1--104:54.

\bibitem[Janzing et~al., 2020]{DBLP:conf/aistats/JanzingMB20}
Janzing, D., Minorics, L., and Bl{\"{o}}baum, P. (2020).
\newblock {Feature relevance quantification in explainable {AI:} {A} causal problem}.
\newblock In {\em The 23rd International Conference on Artificial Intelligence and Statistics, {AISTATS} 2020, 26-28 August 2020, Online [Palermo, Sicily, Italy]}, volume 108 of {\em Proceedings of Machine Learning Research}, pages 2907--2916. {PMLR}.

\bibitem[{Kelley Pace} and Barry, 1997]{Kelley.1997}
{Kelley Pace}, R. and Barry, R. (1997).
\newblock {Sparse spatial autoregressions}.
\newblock {\em Statistics \& Probability Letters}, 33(3):291--297.

\bibitem[Koenen and Wright, 2024]{Koenen_Wright_2024}
Koenen, N. and Wright, M.~N. (2024).
\newblock {Toward Understanding the Disagreement Problem in Neural Network Feature Attribution}.
\newblock In {\em Explainable Artificial Intelligence}, page 247–269, Cham. Springer Nature Switzerland.

\bibitem[Kolpaczki et~al., 2024]{Kolpaczki.2024}
Kolpaczki, P., Muschalik, M., Fumagalli, F., Hammer, B., and H\"{u}llermeier, E. (2024).
\newblock {{SVARM-IQ}: Efficient Approximation of Any-order {S}hapley Interactions through Stratification}.
\newblock In {\em {Proceedings of The 27th International Conference on Artificial Intelligence and Statistics, (AISTATS 2024)}}, volume 238 of {\em Proceedings of Machine Learning Research}, pages 3520--3528. PMLR.

\bibitem[Lhoest et~al., 2021]{Lhoest_Datasets_A_Community_2021}
Lhoest, Q., Villanova~del Moral, A., von Platen, P., Wolf, T., Šaško, M., Jernite, Y., Thakur, A., Tunstall, L., Patil, S., Drame, M., Chaumond, J., Plu, J., Davison, J., Brandeis, S., Sanh, V., Le~Scao, T., Canwen~Xu, K., Patry, N., Liu, S., McMillan-Major, A., Schmid, P., Gugger, S., Raw, N., Lesage, S., Lozhkov, A., Carrigan, M., Matussière, T., von Werra, L., Debut, L., Bekman, S., and Delangue, C. (2021).
\newblock Datasets: A community library for natural language processing.
\newblock In {\em Proceedings of the 2021 Conference on Empirical Methods in Natural Language Processing: System Demonstrations, (EMNLP 2021)}, pages 175--184. Association for Computational Linguistics.

\bibitem[Lundberg et~al., 2019]{lundberg_consistent_2019}
Lundberg, S.~M., Erion, G.~G., and Lee, S.-I. (2019).
\newblock {Consistent Individualized Feature Attribution for Tree Ensembles}.
\newblock {\em arXiv preprint arXiv:1802.03888}.

\bibitem[Lundberg and Lee, 2017]{lundberg_2017_unified}
Lundberg, S.~M. and Lee, S.-I. (2017).
\newblock {A Unified Approach to Interpreting Model Predictions}.
\newblock In {\em Advances in Neural Information Processing Systems}, volume~30.

\bibitem[Lundstrom and Razaviyayn, 2023]{lundstrom2023unifying}
Lundstrom, D. and Razaviyayn, M. (2023).
\newblock {A unifying framework to the analysis of interaction methods using synergy functions}.
\newblock In {\em International Conference on Machine Learning}, pages 23005--23032. PMLR.

\bibitem[Maas et~al., 2011]{Maas.2011}
Maas, A.~L., Daly, R.~E., Pham, P.~T., Huang, D., Ng, A.~Y., and Potts, C. (2011).
\newblock Learning word vectors for sentiment analysis.
\newblock In {\em Proceedings of the 49th Annual Meeting of the Association for Computational Linguistics: Human Language Technologies, (HLT 2011)}, pages 142--150. Association for Computational Linguistics.

\bibitem[Marichal et~al., 2007]{DBLP:journals/dam/MarichalKF07}
Marichal, J., Kojadinovic, I., and Fujimoto, K. (2007).
\newblock {Axiomatic characterizations of generalized values}.
\newblock {\em Discret. Appl. Math.}, 155(1):26--43.

\bibitem[Mitchell et~al., 2022]{DBLP:journals/jmlr/MitchellCFH22}
Mitchell, R., Cooper, J., Frank, E., and Holmes, G. (2022).
\newblock {Sampling Permutations for Shapley Value Estimation}.
\newblock {\em J. Mach. Learn. Res.}, 23:43:1--43:46.

\bibitem[Molnar et~al., 2023]{molnar2023model}
Molnar, C., K{\"o}nig, G., Bischl, B., and Casalicchio, G. (2023).
\newblock {Model-agnostic feature importance and effects with dependent features: a conditional subgroup approach}.
\newblock {\em Data Mining and Knowledge Discovery}, pages 1--39.

\bibitem[Molnar et~al., 2022]{molnar2022pitfalls}
Molnar, C., K{\"o}nig, G., Herbinger, J., Freiesleben, T., Dandl, S., Scholbeck, C.~A., Casalicchio, G., Grosse-Wentrup, M., and Bischl, B. (2022).
\newblock {General pitfalls of model-agnostic interpretation methods for machine learning models}.
\newblock In {\em xxAI-Beyond Explainable AI: International Workshop, Held in Conjunction with ICML 2020, July 18, 2020, Vienna, Austria, Revised and Extended Papers}, pages 39--68. Springer.

\bibitem[Muschalik et~al., 2024a]{Shapiq.2024}
Muschalik, M., Baniecki, H., Fumagalli, F., Kolpaczki, P., Hammer, B., and Hüllermeier, E. (2024a).
\newblock {shapiq: Shapley Interactions for Machine Learning}.
\newblock In {\em The Thirty-eight Conference on Neural Information Processing Systems Datasets and Benchmarks Track}.

\bibitem[Muschalik et~al., 2024b]{DBLP:conf/aaai/MuschalikFHH24}
Muschalik, M., Fumagalli, F., Hammer, B., and H{\"{u}}llermeier, E. (2024b).
\newblock {Beyond TreeSHAP: Efficient Computation of Any-Order Shapley Interactions for Tree Ensembles}.
\newblock In {\em Thirty-Eighth {AAAI} Conference on Artificial Intelligence, ({AAAI} 2024)}, pages 14388--14396. {AAAI} Press.

\bibitem[Olsen et~al., 2024]{olsen2024comparative}
Olsen, L. H.~B., Glad, I.~K., Jullum, M., and Aas, K. (2024).
\newblock {A comparative study of methods for estimating model-agnostic {Shapley} value explanations}.
\newblock {\em Data Mining and Knowledge Discovery}, pages 1--48.

\bibitem[Owen, 2013]{Owen_2013}
Owen, A.~B. (2013).
\newblock {Variance Components and Generalized Sobol’ Indices}.
\newblock {\em SIAM/ASA Journal on Uncertainty Quantification}, 1(1):19–41.

\bibitem[Owen, 2014]{Owen_2014}
Owen, A.~B. (2014).
\newblock {Sobol’ Indices and Shapley Value}.
\newblock {\em SIAM/ASA Journal on Uncertainty Quantification}, 2(1):245–251.

\bibitem[Owen and Prieur, 2017]{Owen.2017}
Owen, A.~B. and Prieur, C. (2017).
\newblock {On Shapley Value for Measuring Importance of Dependent Inputs}.
\newblock {\em {SIAM/ASA} Journal for Uncertainty Quantification}, 5(1):986--1002.

\bibitem[Paszke et~al., 2019]{pytorch}
Paszke, A., Gross, S., Massa, F., Lerer, A., Bradbury, J., Chanan, G., Killeen, T., Lin, Z., Gimelshein, N., Antiga, L., Desmaison, A., Kopf, A., Yang, E., DeVito, Z., Raison, M., Tejani, A., Chilamkurthy, S., Steiner, B., Fang, L., Bai, J., and Chintala, S. (2019).
\newblock Pytorch: An imperative style, high-performance deep learning library.
\newblock In {\em Advances in Neural Information Processing Systems 32: Annual Conference on Neural Information Processing Systems 2019 (NeurIPS 2019)}, pages 8024--8035.

\bibitem[Pedregosa et~al., 2011]{Sklearn}
Pedregosa, F., Varoquaux, G., Gramfort, A., Michel, V., Thirion, B., Grisel, O., Blondel, M., Prettenhofer, P., Weiss, R., Dubourg, V., Vanderplas, J., Passos, A., Cournapeau, D., Brucher, M., Perrot, M., and Duchesnay, E. (2011).
\newblock {Scikit-learn: Machine Learning in Python}.
\newblock {\em Journal of Machine Learning Research}, 12:2825--2830.

\bibitem[Ribeiro et~al., 2016]{DBLP:conf/kdd/Ribeiro0G16}
Ribeiro, M.~T., Singh, S., and Guestrin, C. (2016).
\newblock {"Why Should {I} Trust You?": Explaining the Predictions of Any Classifier}.
\newblock In {\em Proceedings of the 22nd {ACM} {SIGKDD} International Conference on Knowledge Discovery and Data Mining, San Francisco, CA, USA, August 13-17, 2016}, pages 1135--1144. {ACM}.

\bibitem[Robnik{-}Sikonja and Kononenko, 2008]{Robnik2008}
Robnik{-}Sikonja, M. and Kononenko, I. (2008).
\newblock Explaining classifications for individual instances.
\newblock {\em {IEEE} Trans. Knowl. Data Eng.}, 20(5):589--600.

\bibitem[Rota, 1964]{rota1964foundations}
Rota, G.-C. (1964).
\newblock {On the foundations of combinatorial theory: I. Theory of M{\"o}bius functions}.
\newblock In {\em Classic Papers in Combinatorics}, pages 332--360. Springer.

\bibitem[Sanh et~al., 2019]{Sanh.2019}
Sanh, V., Debut, L., Chaumond, J., and Wolf, T. (2019).
\newblock Distilbert, a distilled version of bert: smaller, faster, cheaper and lighter.
\newblock {\em CoRR}, abs/1910.01108.

\bibitem[Shapley, 1953]{shapley_1951}
Shapley, L.~S. (1953).
\newblock {A Value for N-person Games}.
\newblock {\em Contributions to the Theory of Games}, 2(28):307--317.

\bibitem[Shrikumar et~al., 2017]{Shrikumar_Greenside_Kundaje_2017}
Shrikumar, A., Greenside, P., and Kundaje, A. (2017).
\newblock {Learning Important Features Through Propagating Activation Differences}.
\newblock In {\em Proceedings of the 34th International Conference on Machine Learning, ICML 2017, Sydney, NSW, Australia, 6-11 August 2017}, volume~70 of {\em Proceedings of Machine Learning Research}, page 3145–3153. PMLR.

\bibitem[Smilkov et~al., 2017]{DBLP:journals/corr/SmilkovTKVW17}
Smilkov, D., Thorat, N., Kim, B., Vi{\'{e}}gas, F.~B., and Wattenberg, M. (2017).
\newblock {SmoothGrad: removing noise by adding noise}.
\newblock {\em CoRR}, abs/1706.03825.

\bibitem[Sobol', 2001]{Sobol_2001}
Sobol', I.~M. (2001).
\newblock {Global sensitivity indices for nonlinear mathematical models and their Monte Carlo estimates}.
\newblock {\em Mathematics and Computers in Simulation}, 55(1):271–280.

\bibitem[Stone, 1994]{stone1994fanova}
Stone, C.~J. (1994).
\newblock {The Use of Polynomial Splines and Their Tensor Products in Multivariate Function Estimation}.
\newblock {\em The Annals of Statistics}, 22(1):118 -- 171.

\bibitem[Strobl et~al., 2008]{strobl2008conditional}
Strobl, C., Boulesteix, A.-L., Kneib, T., Augustin, T., and Zeileis, A. (2008).
\newblock {Conditional variable importance for random forests}.
\newblock {\em BMC bioinformatics}, 9:1--11.

\bibitem[Strumbelj and Kononenko, 2010]{DBLP:journals/jmlr/StrumbeljK10}
Strumbelj, E. and Kononenko, I. (2010).
\newblock {An Efficient Explanation of Individual Classifications using Game Theory}.
\newblock {\em J. Mach. Learn. Res.}, 11:1--18.

\bibitem[{\v{S}}trumbelj and Kononenko, 2014]{strumbelj_2014}
{\v{S}}trumbelj, E. and Kononenko, I. (2014).
\newblock {Explaining prediction models and individual predictions with feature contributions}.
\newblock {\em Knowledge and Information Systems}, 41:647--665.

\bibitem[Strumbelj et~al., 2009]{DBLP:journals/dke/StrumbeljKR09}
Strumbelj, E., Kononenko, I., and Robnik{-}Sikonja, M. (2009).
\newblock {Explaining instance classifications with interactions of subsets of feature values}.
\newblock {\em Data Knowl. Eng.}, 68(10):886--904.

\bibitem[Sturmfels et~al., 2020]{sturmfels2020visualizing}
Sturmfels, P., Lundberg, S., and Lee, S.-I. (2020).
\newblock {Visualizing the Impact of Feature Attribution Baselines}.
\newblock {\em Distill}.
\newblock https://distill.pub/2020/attribution-baselines.

\bibitem[Sundararajan et~al., 2020]{sundararajan2020shapley}
Sundararajan, M., Dhamdhere, K., and Agarwal, A. (2020).
\newblock {The {S}hapley {T}aylor interaction index}.
\newblock In {\em International conference on machine learning}, pages 9259--9268. PMLR.

\bibitem[Sundararajan and Najmi, 2020]{DBLP:conf/icml/SundararajanN20}
Sundararajan, M. and Najmi, A. (2020).
\newblock {The Many Shapley Values for Model Explanation}.
\newblock In {\em Proceedings of the 37th International Conference on Machine Learning, {ICML} 2020, 13-18 July 2020, Virtual Event}, volume 119 of {\em Proceedings of Machine Learning Research}, pages 9269--9278. {PMLR}.

\bibitem[Sundararajan et~al., 2017]{DBLP:conf/icml/SundararajanTY17}
Sundararajan, M., Taly, A., and Yan, Q. (2017).
\newblock {Axiomatic Attribution for Deep Networks}.
\newblock In {\em Proceedings of the 34th International Conference on Machine Learning, {ICML} 2017, Sydney, NSW, Australia, 6-11 August 2017}, volume~70 of {\em Proceedings of Machine Learning Research}, pages 3319--3328. {PMLR}.

\bibitem[Tan et~al., 2023]{tan2023considerations}
Tan, S., Hooker, G., Koch, P., Gordo, A., and Caruana, R. (2023).
\newblock {Considerations when learning additive explanations for black-box models}.
\newblock {\em Machine Learning}, 112(9):3333--3359.

\bibitem[Tsai et~al., 2023]{tsai_faith-shap_2022}
Tsai, C.-P., Yeh, C.-K., and Ravikumar, P. (2023).
\newblock {Faith-shap: The faithful {Shapley} interaction index}.
\newblock {\em Journal of Machine Learning Research}, 24(94):1--42.

\bibitem[Tsang et~al., 2020]{tsang2020does}
Tsang, M., Rambhatla, S., and Liu, Y. (2020).
\newblock {How does this interaction affect me? interpretable attribution for feature interactions}.
\newblock {\em Advances in neural information processing systems}, 33:6147--6159.

\bibitem[Wachter et~al., 2017]{DBLP:journals/corr/abs-1711-00399}
Wachter, S., Mittelstadt, B.~D., and Russell, C. (2017).
\newblock {Counterfactual Explanations without Opening the Black Box: Automated Decisions and the {GDPR}}.
\newblock {\em CoRR}, abs/1711.00399.

\bibitem[Wolf et~al., 2020]{Wolf_Transformers_State-of-the-Art_Natural_2020}
Wolf, T., Debut, L., Sanh, V., Chaumond, J., Delangue, C., Moi, A., Cistac, P., Rault, T., Louf, R., Funtowicz, M., Davison, J., Shleifer, S., von Platen, P., Ma, C., Jernite, Y., Plu, J., Xu, C., Scao, T.~L., Gugger, S., Drame, M., Lhoest, Q., and Rush, A.~M. (2020).
\newblock Transformers: State-of-the-art natural language processing.
\newblock In {\em Proceedings of the 2020 Conference on Empirical Methods in Natural Language Processing: System Demonstrations, (EMNLP 2020)}, pages 38--45. Association for Computational Linguistics.

\bibitem[Zeiler and Fergus, 2014]{Zeiler_Fergus_2014}
Zeiler, M.~D. and Fergus, R. (2014).
\newblock {Visualizing and Understanding Convolutional Networks}.
\newblock In {\em Computer Vision - ECCV 2014 - 13th European Conference, Zurich, Switzerland, September 6-12, 2014, Proceedings, Part I}, volume 8689 of {\em Lecture Notes in Computer Science}, page 818–833. Springer.

\bibitem[Zintgraf et~al., 2017]{DBLP:conf/iclr/ZintgrafCAW17}
Zintgraf, L.~M., Cohen, T.~S., Adel, T., and Welling, M. (2017).
\newblock {Visualizing Deep Neural Network Decisions: Prediction Difference Analysis}.
\newblock In {\em 5th International Conference on Learning Representations, {ICLR} 2017, Toulon, France, April 24-26, 2017, Conference Track Proceedings}. OpenReview.net.

\end{thebibliography}

%%%%%%%%%%%%%%%%%%%%%%%%%%%%%%%%%%%%%%%%%%%%%%%%%%%%%%%%%%%%
\newpage
\appendix
\onecolumn 

\startcontents[sections]
\printcontents[sections]{l}{1}{\setcounter{tocdepth}{3}}

\clearpage
\section{PROOFS}\label{appx_sec_proofs}
In this section, we provide proofs for the claims made in the main paper.
We will frequently use the following lemma.

\begin{lemma}[Inclusion-Exclusion Principle]\label{appx_lem_inexclusion}
    For a set $S$ and a subset $L \subseteq S$, the following holds
    \begin{align*}
        \sum_{T\subseteq S: T \supseteq L} (-1)^{s-t} = \begin{cases}
            1, \text{ if } L = S, \\
            0, \text{ else.}
        \end{cases}  
    \end{align*}
\end{lemma}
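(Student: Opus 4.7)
The plan is to verify the identity by reparametrizing the sum so that the index runs over subsets of $S \setminus L$ and then applying the binomial theorem.

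First I would handle the trivial case $L = S$: the only admissible $T$ is $S$ itself, so the sum reduces to $(-1)^{s-s} = 1$, matching the claim.

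For the case $L \subsetneq S$, I would substitute $T = L \cup R$ where $R$ ranges over subsets of $S \setminus L$. This is a bijection between $\{T : L \subseteq T \subseteq S\}$ and $2^{S \setminus L}$, and gives $t = \ell + r$ with $r = |R|$. Letting $k := s - \ell = |S \setminus L| > 0$, the sum rewrites as
\begin{align*}
\sum_{T \subseteq S : T \supseteq L} (-1)^{s-t}
= \sum_{R \subseteq S \setminus L} (-1)^{k - r}
= (-1)^k \sum_{r=0}^{k} \binom{k}{r} (-1)^{-r}.
\end{align*}
Since $(-1)^{-r} = (-1)^r$, the binomial theorem yields
\begin{align*}
\sum_{r=0}^{k} \binom{k}{r} (-1)^{r} = (1 + (-1))^k = 0^k = 0,
\end{align*}
because $k \geq 1$. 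Combining both cases gives the claimed dichotomy.

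There is no real obstacle here; the only thing to be careful about is the reparametrization step and ensuring the sign bookkeeping (rewriting $s - t = (s-\ell) - r$) is correct so that the binomial theorem applies cleanly.
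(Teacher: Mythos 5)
Your proof is correct and follows essentially the same route as the paper's: both reduce the sum to $\sum_{r=0}^{k}\binom{k}{r}(-1)^r$ with $k = s-\ell$ and invoke the binomial theorem, the only cosmetic difference being that you make the bijection $T \mapsto T\setminus L$ explicit while the paper counts the sets $T$ of each size directly. No gaps; the sign bookkeeping is handled correctly.
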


\begin{proof}
    It is clear that for $L=S$, we have $T=S$, and thus the sum equals 1.
    We therefore assume now $L \subset S$, and proceed to show
    \begin{align*}
         \sum_{T\subseteq S: T \supseteq L} (-1)^{s-t} 
         &= \sum_{t=\ell}^{s} (-1)^{s-t} \sum_{T \subseteq S: T \supseteq L, \vert T \vert = t} 
         =\sum_{t=\ell}^{s} (-1)^{s-t} \binom{s-\ell}{t-\ell}
         \\
         &= (-1)^{s+\ell}\sum_{t=0}^{s-\ell} (-1)^t \binom{s-\ell}{t} =(-1)^{s+\ell} (-1+1)^{s-\ell} = 0,
    \end{align*}
    where we have used the binomial theorem in the last row, and $s-\ell > 0$.
\end{proof}

\subsection{Proof of Theorem 1}
\begin{proof}
To prove the result, we compute the b-fANOVA effect $f_S^{(b)}$, which is given by definition as
\begin{align*}
    f_S^{(b)}(x) := F_S^{(b)}(x) - \sum_{T \subset S} f_T^{(b)}(x) = \sum_{T \subseteq S}(-1)^{s-t} F_T^{(b)}(x) = \sum_{T \subseteq S}(-1)^{s-t} F(x_T,b_{-T}),
\end{align*}
where the explicit form of $f^{(b)}_S$ is standard in the fANOVA literature, cf. \cref{appx_sec_proof_cor1} for a derivation.
Clearly, for $S= \emptyset$, we have $f_\emptyset^{(b)} = F(b)$.
For $S$ with $s>0$, by assumption, $F$ can be represented by its infinite Taylor expansion evaluated at $[x_S,b_{-S}]$, cf. Eq.~(\ref{appx_eq_taylor_exp}), as

\begin{align*}
    f_S^{(b)}(x) = \sum_{T \subseteq S}(-1)^{s-t} F(x_T,b_{-T}) = \sum_{T \subseteq S} (-1)^{s-t} \sum_{\kappa \in \mathbb{N}^d_0} \left(C(\kappa) \cdot \nabla F(\kappa) \cdot \prod_{i \in T} (x_i - b_i)^{\kappa_i} \prod_{i \in -T} (b_i - b_i)^{\kappa_i}\right).
\end{align*}

In the sum over $\kappa$ terms vanish, if $\kappa_i > 0$ for a $i \in -T$, since then $(b_i-b_i)^{\kappa_i}$ is zero.
Therefore, only terms remain, where $\kappa \in \Omega_{L}$ with $L \subseteq T$.
Hence, by definition of $I$, cf. \cref{appx_sec_background_grad}, and with exchanging sums
\begin{align*}
    f_S^{(b)}(x) &= \sum_{T \subseteq S} (-1)^{s-t} \sum_{L \subseteq T} \sum_{\kappa \in \Omega_L} \left(C(\kappa) \cdot \nabla F(\kappa) \cdot \prod_{i \in L} (x_i - b_i)^{\kappa_i}\right) 
    \\
    &= \sum_{T \subseteq S} (-1)^{s-t} \sum_{L \subseteq T} \sum_{\kappa \in \Omega_L} I(\kappa)
    \\
    &=\sum_{L \subseteq S} \sum_{T\subseteq S: T \supseteq L} (-1)^{s-t}\sum_{\kappa \in \Omega_L}  I(\kappa) 
    \\
    &=\sum_{L \subseteq S} \sum_{\kappa \in \Omega_L}  I(\kappa) \sum_{T \subseteq S: T \supseteq L} (-1)^{s-t}.
\end{align*}
By \cref{appx_lem_inexclusion}, this sum reduces to
\begin{align*}
    f_S^{(b)}(x) = \sum_{\kappa \in \Omega_S} I(\kappa) = 
    \begin{cases}
        \psi(i), \text{ if } S=i,
        \\
        J(S), \text{ if } s > 1,
    \end{cases}
\end{align*}
which concludes the proof.
\end{proof}

\subsection{Proof of Theorem 2}
\begin{proof}
    The fANOVA effects $f_S$ are determined via the value function $F_S$.
    Hence it suffices to show that $F_S$ are equivalent under these conditions, which was already partially stated by \cite{lundberg_2017_unified}.
    For a linear model $F(x) = \sum_{i \in D} \beta_i x_i$, we have due to linearity of expectations
    \begin{equation*}
        F^{(m)}_S(x) =  \mathbb{E}[ F(x_S,X_{-S})] = \sum_{i \in S} \beta_i x_i + \sum_{i \in -S} \beta_i \mathbb{E}[X_i] = F^{(b)}_S(x),
    \end{equation*}
    which concludes equality of m- and b-fANOVA for a linear model.
    For independent features,
    \begin{align*}
        F_S^{(c)}(x) = \mathbb{E}[ F(X) \mid X_S = x_S] = \mathbb{E}[ F(x_S,X_{-S}) \mid X_S = x_S] = \mathbb{E}[ F(x_S,X_{-S})] = F^{(m)}_S(x),
    \end{align*}
    since $X_S \perp X_{-S}$,
    which shows equality of m- and c-fANOVA.
    Second, assume $F$ is additionally multilinear (linear in each feature), i.e. for a feature $i \in D$ and $\tilde x := [x_1,\dots,\alpha x_i + \beta, \dots, x_d]$ it holds $F(\tilde x) = \alpha F(x) + \beta$.
    This is known as a multilinear map, and thus $F$ can be represented by
    \begin{align*}
        F(x) = \sum_{L\subseteq D} c_L \prod_{i \in L} x_i,
    \end{align*}
    where $c_L \in \mathbb{R}$ are some constants.
    Then, we have for the value function of m-fANOVA and $b_i := \mathbb{E}[X_i]$ due to independence and linearity of expectations
    \begin{align*}
        F_S^{(m)}(x) &= \mathbb{E}[F(x_S,X_{-S})] = \sum_{L \subseteq D} c_L \mathbb{E}[\prod_{i \in L\cap S}x_i \prod_{i \in L \cap -S} X_i] =   \sum_{L \subseteq D} c_L \prod_{i \in L\cap S}x_i \prod_{i \in L \cap -S}\mathbb{E}[X_i] 
        \\
        &=\sum_{L \subseteq D} c_L \prod_{i \in L\cap S}x_i \prod_{i \in L \cap -S}b_i 
        = F(x_S,b_{-S}) = F^{(b)}(x_S,b_{-S}),
    \end{align*}
    which concludes the proof.
\end{proof}

\begin{remark}
    Results derived from \cref{theorem_fanova_equivalence} for multilinear functions can be extended to linear combinations of multilinear functions, since expectations are linear. In summary, every model $F$ that can be described as the sum of several weighted multilinear functions will have similar b- and m-fANOVA effects for independent features, if $b := \mathbb{E}[X]$.
\end{remark}

\subsection{Proof of Theorem 3}
\begin{proof}
    In the following, our goal is to show that
    \begin{align*}
        m^{(\text{sens})}(S) = \mathbb{V}[f^{(c)}_S(X)] + \sum_{L \cup L' = S, L \neq L'} \text{cov}(f^{(c)}_L(X),f^{(c)}_{L'}(X)).
    \end{align*}
    We begin by computing the global sensitivity game $\nu^{(\text{sens})}$ as the variance of the value function $F^{(c)}_T = \sum_{S \subseteq T} f^{(c)}_S$ with
    \begin{align*}
        \nu^{(\text{sens})}(T) := \mathbb{V}[F^{(c)}_T(X)] 
        = \mathbb{V}[\sum_{L \subseteq T} f^{(c)}_L(X)]
        = \sum_{L \subseteq T} \mathbb{V}[f_L^{(c)}(X)] + \sum_{L,L' \subseteq T: L \neq L'} \text{cov}(f^{(c)}_{L}(X),f^{(c)}_{L'}(X)).
    \end{align*}
    The \gls{MT} is then given by definition as
    \begin{align*}
        m^{(\text{sens})}(S) := \sum_{T \subseteq S}(-1)^{s-t} \nu^{(\text{sens})}(T) = \sum_{T \subseteq S} (-1)^{s-t}\left(  \sum_{L \subseteq T} \mathbb{V}[f_L^{(c)}(X)] + \sum_{L,L' \subseteq T: L \neq L'} \text{cov}(f^{(c)}_{L}(X),f^{(c)}_{L'}(X))\right)
    \end{align*}
    We now proceed with each part separately:
    By exchanging sums and using \cref{appx_lem_inexclusion}, the first part of the sum reduces to
    \begin{align*}
        \sum_{T \subseteq S} (-1)^{s-t}  \sum_{L \subseteq T} \mathbb{V}[f_L^{(c)}(X)] &= \sum_{L \subseteq S} \mathbb{V}[f_L^{(c)}(X)] \sum_{T \subseteq S: T \supseteq S} (-1)^{s-t} =  \sum_{L \subseteq S} \mathbb{V}[f_L^{(c)}(X)] \mathbf{1}_{L=S} = \mathbb{V}[f_S^{(c)}(X)].
    \end{align*}
    For the second part, a well-known result from sensitivity analysis \citep{Owen_2013} is that for independent features, $f_L$ and $f_{L'}$ are independent as well for $L \neq L'$, and thus their covariance is zero.
    Hence, for independent features, the \gls{MT} is equal to the first part.
    For dependent features, by exchanging sums, the second part becomes
    \begin{align*}
        \sum_{T \subseteq S} (-1)^{s-t} \sum_{L,L' \subseteq T: L \neq L'} \text{cov}(f^{(c)}_{L}(X),f^{(c)}_{L'}(X)) = \sum_{L,L' \subseteq S: L \neq L'} \text{cov}(f^{(c)}_{L}(X),f^{(c)}_{L'}(X)) \sum_{T\subseteq S: T \supseteq L, T\supseteq L'} (-1)^{s-t}.
    \end{align*}
    Moreover by applying \cref{appx_lem_inexclusion}, we obtain for the last sum
    \begin{align*}
       \sum_{T\subseteq S: T \supseteq L, T\supseteq L'} (-1)^{s-t} = \sum_{T\subseteq S: T \supseteq L \cup L'} (-1)^{s-t} = \mathbf{1}_{L \cup L' = S},
    \end{align*}
    and thus for the previous sum 
    \begin{align*}
        \sum_{L,L' \subseteq S: L \neq L'} \text{cov}(f^{(c)}_{L}(X),f^{(c)}_{L'}(X)) \sum_{T\subseteq S: T \supseteq L, T\supseteq L'} (-1)^{s-t} 
        &= \sum_{L,L' \subseteq S: L \neq L'} \text{cov}(f^{(c)}_{L}(X),f^{(c)}_{L'}(X)) \mathbf{1}_{L\cup L' =S} 
        \\
        &= \sum_{L \cup L' = S, L \neq L'} \text{cov}(f^{(c)}_{L}(X),f^{(c)}_{L'}(X)).
    \end{align*}
    Together with the first part, we obtain
        \begin{align*}
        m^{(\text{sens})}(S) = \mathbb{V}[f^{(c)}_S(X)] + \sum_{L \cup L' = S, L \neq L'} \text{cov}(f^{(c)}_L(X),f^{(c)}_{L'}(X)),
    \end{align*}
    which concludes the proof.
\end{proof}

\subsection{Proof of Theorem 4}
\begin{proof}
Our goal is to show that the \gls{MT} is distributed according to \cref{tab_influence_summary}.
Note that joint effects and interaction effects, as well as the formulas in \cref{tab_influence_summary}, reduce to individual effects for $S=i$, and thus it suffices to prove the results for general $S \subseteq D$ and joint and interaction effects.
In the following, we heavily rely on the recovery property of the \gls{MT} \citep{rota1964foundations}
\begin{align*}
    \nu(T) = \sum_{S\subseteq T} m(S).
\end{align*}

\paragraph{Joint effects.}
The pure joint effect is given by 
\begin{align*}
    \phi^\emptyset(S)= \nu(S)-\nu(\emptyset) = \sum_{L \subseteq S} m(L) - m(\emptyset) = \sum_{\emptyset \neq L\subseteq S} m(L) = m(S) + \sum_{\emptyset \neq L\subset S} m(L),
\end{align*}
    as shown in \cref{tab_influence_summary}.
    The full joint effect is given by
\begin{align*}
    \phi^+(S)&= \nu(D)-\nu(-S) = \sum_{L \subseteq D} m(L) - \sum_{L \subseteq -S}m(L) = \sum_{L \subseteq D: L \not\subseteq -S} m(L) = \sum_{L \subseteq D: L \cap S \neq \emptyset} m(L)
    \\
    &=m(S) + \sum_{L \subseteq D: L \cap S \notin \{\emptyset,S\}} m(L),
\end{align*}   
as shown in \cref{tab_influence_summary}.
Lastly, the partial joint effect is a well-known result of \glspl{GV}, cf. \citep[Proposition 4.3]{DBLP:journals/dam/MarichalKF07}.

\paragraph{Interaction effects.}
For interaction effects, the pure interaction effect is by definition the \gls{MT} $m(S)$, as shown in \cref{tab_influence_summary}.
Moreover, the full interaction effect is by definition the Co-\gls{MT}, where the representation in \cref{tab_influence_summary} is a well-known result from cooperative game theory, cf. \citep[Table 3, third row, second column]{grabisch2000equivalent}.
Lastly, the partial effect is a well-known result for interaction indices, cf. \citep[Proposition 4.1]{Fujimoto.2006}.
Additionally, for \glspl{SI}, the representations are given in the corresponding paper and are summarized in \citep[Theorem 8]{bordt2022shapley}.

Since joint and interaction effects reduce to individual effects for $S=i$, this concludes the proof.
\end{proof}

\subsection{Proof of Corollary 1}\label{appx_sec_proof_cor1}

\begin{proof}
    This result follows directly by induction over the size of subsets $s$, where we omit the dependence of $x_0$ in $\nu_{x_0}$ and $m_{x_0}$ for readability.
    For $s=0$, we have $f_{\emptyset} = F_\emptyset = \nu(\emptyset) = m(\emptyset)$.
    According to the definition of fANOVA effects for individuals $(s=1)$, we have
    \begin{align*}
        f_i = F_i - \sum_{T \subset i} f_T = F_i - F_\emptyset = \sum_{T \subseteq i} (-1)^{1-t} F_T = m(i).
    \end{align*}
    Moreover, assume the statement holds for $1,\dots,s-1$ features, then  
    \begin{align*}
        f_S &= F_S - \sum_{T \subset S} f_T = F_S - \sum_{T\subset S} \sum_{L \subseteq T} (-1)^{t-\ell} F_L = F_S - \sum_{L \subseteq S} F_L \sum_{T\subset S: T\supseteq L} (-1)^{t-\ell} 
        \\
        &= F_S - \sum_{L \subseteq S} F_L \left(\sum_{T\subseteq S: T\supseteq L} (-1)^{t-\ell} - (-1)^{s-\ell}\right) 
        = F_S -\sum_{L \subseteq S} F_L (-1)^{s-\ell}\sum_{T\subseteq S: T\supseteq L} (-1)^{s-t} + \sum_{L \subseteq S} F_L (-1)^{s-\ell}
        \\
        &= F_S  - F_S  + \sum_{L \subseteq S} F_L (-1)^{s-\ell}= m(S),
    \end{align*}
    where we have used \cref{appx_lem_inexclusion} in the last row.
\end{proof}
\subsection{Proof of Corollary 2}
The PD function $\phi^{\text{PD}}$ for a set of features $S \subseteq D$ at $x$ is defined as
\begin{align*}
    \phi^{\text{PD}}_x(S) = \mathbb{E}[F(x_S,X_{-S})] = F^{(m)}_S(x).
\end{align*}
On the other hand the M plot function is given by
\begin{align*}
    \phi_x^{\text{M}}(S) = \mathbb{E}[F(X) \mid X_S = x_S] = F^{(c)}_S(x).
\end{align*}

In summary, the underlying functions of \gls{PDP} and M plot correspond to the value functions of $m$- and $c$-fANOVA.
In practice, Monte Carlo sampling is used to approximate these functions \citep{friedman_greedy_2001,apley_visualizing_2020}.
Using $F_S(x)$, we can compute m- and c-fANOVA, respectively.

\clearpage
\section{EXTENSIONS BEYOND OUR FRAMEWORK}\label{appx_sec_extensions_framework}
In this section, we additionally provide some details on a broader perspective of the framework.

\subsection{Beyond b-, m- and c-fANOVA}
In our framework, we introduced three types of fANOVA decompositions, which correspond to the most prominent examples used in \gls{XAI} literature.
The decomposition is specified by the choice of imputation method in the value function $F_S$.
Less prominent examples include re-training of models \citep{DBLP:journals/dke/StrumbeljKR09}, where for a subset of features $S \subseteq D$ the model is fit to the data using only the features in $S$.
While b-, m-, and c-fANOVA compress the feature distribution information in the explanation, model re-training additionally includes the model fitting procedure.
In future work, it would be interesting to extend our theoretical analysis to such extensions for simple examples, like the linear regression model.

\subsection{From Decompositions to Influence Measures and Back}

Using a suitable type of imputation method, and thereby definition of $F_S$, we obtain a fANOVA decomposition.
This is ensured by the recursive definition of fANOVA effects from Eq.~(\ref{eq_fanova_effect}).
In fact, for the last effect, we have
\begin{align*}
    f_D(x) = F(x) - \sum_{S \subset D} f_S(x),
\end{align*}
which yields a decomposition, as described for instance by \cite{apley_visualizing_2020,iwasawa2024interactiondecompositionpredictionfunction}.
Moreover, this recursive definition related to the \gls{MT} also ensures the orthogonality-like property \citep{apley_visualizing_2020,iwasawa2024interactiondecompositionpredictionfunction}.
Notably, partial individual, joint and interaction effects \emph{summarize} this fANOVA again into a functional decomposition \citep{bordt2022shapley} with components up to order $k$, i.e. 

\begin{align*}
    F(x) = \sum_{S\subseteq D: s \leq k} \phi_k(x,S).
\end{align*}

In this context, the explanation $\phi_k$ up to order $k$ can be viewed as a restricted fANOVA with components up to order $k$.
In future work, it would be interesting to understand how well such a low-complexity decomposition describes the model.

\clearpage
\section{ADDITIONAL BACKGROUND}\label{appx_sec_background}

In this section, we provide additional theoretical background.

\subsection{Background on Gradient-Based Attribution Methods}\label{appx_sec_background_grad}
Taylor interactions \citep{Deng2024Unify} are the components of the infinite Taylor expansion of $F$ at a baseline $b$
\begin{align}\label{appx_eq_taylor_exp}
    F(x) &= \sum_{\kappa \in \mathbb{N}^d_0}  \frac{1}{\kappa_1!\dots\kappa_d!} \cdot \frac{\partial^{\kappa_1+\dots+\kappa_d} F(b)}{\partial^{\kappa_1} x_1 \dots \partial{\kappa^d x_d}} \cdot (x_1-b_1)^{\kappa_1} \dots (x_d-b_d)^{\kappa_d}  = \sum_{\kappa \in \mathbb{N}^d_0} C(\kappa) \cdot \nabla F(\kappa) \cdot \pi(\kappa) 
\end{align}
where $\frac{\partial^k F}{\partial^k}$ are the $k$-th order partial derivative, and the Taylor interactions are defined as 
\begin{align*}
    I(\kappa) &:= C(\kappa) \cdot \nabla F(\kappa) \cdot \pi(\kappa) \text{ with}
    \\
    C(\kappa) &:= \frac{1}{\kappa_1!\dots\kappa_d!}
    \\
    \nabla F(\kappa) &:= \frac{\partial^{\kappa_1+\dots+\kappa_d} F(b)}{\partial^{\kappa_1} x_1 \dots \partial{\kappa^d x_d}}
    \\
    \pi(\kappa) &:= (x_1-b_1)^{\kappa_1} \dots (x_d-b_d)^{\kappa_d}.
\end{align*}
The sum ranges over all possible orders of partial derivatives per feature $\kappa=[\kappa_1,\dots,\kappa_d] \in \mathbb{N}_0^d$.
This set can be partitioned into the following sets
\begin{equation*}
    \{\kappa \in \mathbb{N}^d_0\} = \{[0,\dots,0]\} \bigsqcup \sqcup_{i \in D} \Omega_i \bigsqcup \sqcup_{S \subseteq D, s > 1} \Omega_S,
\end{equation*}
where $\Omega_S := \{\kappa \mid  \forall i \in S: \kappa_i>0, \forall i \notin S: \kappa_i=0\}$ is the set of degree vectors, where partial derivatives are computed exclusively for features in $S$.
Notably, for $\kappa=[0,\dots,0]$, we have $I(\kappa) = F(b)$.
Therefore, the Taylor interactions can be summarized by three sums
\begin{align*}
    F(x) &= \sum_{\kappa \in \mathbb{N}^d_0} I(\kappa) = F(b) + \sum_{i \in D} \sum_{\kappa \in \Omega_i}I(\kappa) + \sum_{S \subseteq D, s > 1} {\sum_{\kappa \in \Omega_S}I(\kappa)}
    = F(b) + \sum_{i \in D} \sum_{\kappa \in \Omega_i}\phi(\kappa) + \sum_{S \subseteq D, s > 1} {\sum_{\kappa \in \Omega_S}I(\kappa)},
\end{align*}
where we defined $\phi(\kappa) := I(\kappa)$ for $\kappa \in \Omega_i$.
$\phi$ is called the Taylor independent effect, whereas $I(\kappa)$ for $\kappa \in \Omega_S$ with $s>1$ is called the Taylor interaction effect.
The Taylor independent and interaction effects can be further summarized according to the features $i \in D$ and subsets $S \subseteq D$ with $s>1$ as
\begin{align*}
    F(x) &= F(b) + \sum_{i \in D} \sum_{\kappa \in \Omega_i}\phi(\kappa) + \sum_{S \subseteq D, s > 1} {\sum_{\kappa \in \Omega_S}I(\kappa)} = F(b) + \sum_{i \in D} \psi(i) + \sum_{S\subseteq D, s>1} J(S),
\end{align*}

where $\psi$ is the generic independent effect, and $J$ the generic interaction effect, defined by
\begin{align}\label{appx_eq_taylor_interactions}
    \psi(i) := \sum_{\kappa \in \Omega_i}\phi(\kappa) \text{ and } J(S) := {\sum_{\kappa \in \Omega_S}I(\kappa)}.
\end{align}

\subsection{Background on Cooperative Game Theory}

In this section, we briefly cover the definitions of \glspl{GV} \citep{DBLP:journals/dam/MarichalKF07} and Interaction Indices (II) \citep{Fujimoto.2006}. Both were introduced based on axioms as a weighted sum of joint marginal contributions and discrete derivatives as
\begin{align*}
    \phi^{\text{GV}}(S) := \sum_{T \subseteq D \setminus S} p_t^s \Delta_{[S]}(T) \text{ and } \phi^{\text{II}}(S) := \sum_{T \subseteq D \setminus S} p_t^s \Delta_{S}(T).
\end{align*}
The weights $p_t^s$ correspond to a discrete probability distribution and depend only on the subset sizes of $T$ and $S$.
Particular instances of IIs and \glspl{GV} are the Shapley II \citep{Grabisch.1999} and the Shapley GV \citep{DBLP:journals/dam/MarichalKF07}, which are obtained by $p_t^s = \frac{1}{d-s+1} \binom{d-s}{t}^{-1}$.
It was shown by \cite{Fujimoto.2006,DBLP:journals/dam/MarichalKF07} that \glspl{GV} and IIs can be written via the \gls{MT} as
\begin{align*}
     \phi^{\text{GV}}(S) = \sum_{T \subseteq D : T \cap S \neq \emptyset} w_{\vert T \setminus S\vert}^s m(T) \text{ and } \phi^{\text{II}}(S) = \sum_{T \subseteq D: T \supseteq S} w_{\vert T \setminus S\vert}^s m(T).
\end{align*}

The corresponding conversion formulas from $p$ to $w$ can be found in \cite{Fujimoto.2006,DBLP:journals/dam/MarichalKF07}.
For example, the Shapley GV and Shapley II are obtained with $w^s_\ell = \frac{1}{\ell+1}$.

For machine learning applications, particularly for local explanations, these indices are not optimal, since they do not sum to the overall payout (model prediction), referred to as \emph{efficiency} \citep{tsai_faith-shap_2022}.
As a remedy, \glspl{SI} $\phi^{\text{SI}}_k$ are a sub-family of IIs, which are defined for subsets up to an explanation order $k$, i.e. for subsets $S \subseteq D$ with $1 \leq s \leq k$.
It is required that \glspl{SI} are efficient, i.e. the sum of all values yields the overall payout, which is particularly interesting for local explanations, as it requires that the sum of all explanations yields the model's prediction.
Given an explanation order $k$, we define
\begin{align*}
    \textbf{generalized efficiency: } \nu(D) = \sum_{S\subseteq D: s\leq k} \phi_k^{\text{SI}}(S),
\end{align*}
where typically $\phi_k^{\text{SI}}(\emptyset) = \nu(\emptyset)$.
\Glspl{SI} were built on the SII \citep{Grabisch.1999}, known as $k$-\glspl{SV} ($k$-SII) \citep{bordt2022shapley}.
Moreover, the Shapley Taylor interaction index \citep{sundararajan2020shapley} and the faithful Shapley interaction index \citep{tsai_faith-shap_2022} have been proposed.
These \glspl{SI} differ by their choice of axioms, and it was shown that they differently summarize higher-order interactions \citep{bordt2022shapley}.
Moreover, the \glspl{SI} for $k=1$ yield the \glspl{SV} and for $k=d$ the \gls{MT}.
The \glspl{SI} thus have a complexity-accuracy trade-off with $k$.
For the exact definition of weights, we refer to each paper.

The Joint\glspl{SV} \citep{Harris.2022} are similar to \glspl{SI}, as they are also based on an explanation order $k$ and yield an efficient index.
However, they are based on \glspl{GV}, and we refer to the definition of weights to the corresponding paper.

\clearpage
\section{ADDITIONAL DETAILS ON ILLUSTRATIVE EXAMPLES}

In this section, we provide more details and derivations of the illustrative examples in Section 3.

\subsection{Details on Illustration of fANOVA Decompositions}\label{appx_sec_fanova_examples}

In Section 3.1, we demonstrate the difference of b-, m- and c-fANOVA on a purely linear function $F_{\text{lin}}= \beta_1 x_1 +  \beta_2 x_2$ containing only the main effects of the two multivariate normally distributed features $X \sim \mathcal{N}(\mu, \Sigma)$ with mean vector $\mu \in \mathbb{R}^2$ and covariance matrix $\Sigma \in \mathbb{R}^{2 \times 2}$, and a second function additionally containing the linear interaction between the two features: $F_{2\text{int}} = F_{\text{lin}} + \beta_{12} x_1x_2$. 
We now derive the main and two-way interaction fANOVA effects for each of the two functional relationships and each of the three distributional assumptions of the underlying fANOVA decomposition, which are summarized in Table 1 in the main paper.

\subsubsection{Baseline fANOVA (b-fANOVA)}

Here, we define the functional components for the two functional relationships $F_{\text{lin}}$ and $F_{2\text{int}}$ based on the b-fANOVA.

\paragraph{Linear Main Effect Model.}

\begin{align*}
    f^{(b)}_\emptyset &= F_{\text{lin}}(b)\\ &= \beta_1 b_1 + \beta_2 b_2 \\
    f^{(b)}_i (x) &= F_{\text{lin}}(x_i, b_j) - f^{(b)}_\emptyset\\ &= \beta_i x_i + \beta_j b_j - (\beta_i b_i + \beta_j b_j )\\  &= \beta_i (x_i - b_i) \\
    f^{(b)}_{12} (x) &= F_{\text{lin}}(x_1, x_2) - f^{(b)}_1(x) - f^{(b)}_2(x) - f^{(b)}_\emptyset\\ &= \beta_1 x_1 + \beta_2 x_2 - \beta_1 (x_1 - b_1)  - \beta_2 (x_2 - b_2) - (\beta_1 b_1 + \beta_2 b_2 ) \\ &= 0 
\end{align*}
It follows that $F_{\text{lin}}(x) = f^{(b)}_\emptyset + f^{(b)}_1 (x) + f^{(b)}_2 (x)$, hence, the function can be decomposed into a constant plus the centered first-order (main) effects of the two features. The centering is based on the baseline value $b$ for b-fANOVA.

\paragraph{Linear Model with a Two-Way Interaction.}

\begin{align*}
    f^{(b)}_\emptyset &= F_{2\text{int}}(b)\\ &= \beta_1 b_1 + \beta_2 b_2 + \beta_{12} b_1 b_2\\
    f^{(b)}_i (x) &= F_{2\text{int}}(x_i, b_j) - f^{(b)}_\emptyset\\ 
    &= \beta_i x_i + \beta_j b_j + \beta_{ij} x_i b_j - (\beta_i b_i + \beta_j b_j +\beta_{ij} b_i b_j)\\  
    &= \beta_i (x_i - b_i) + \beta_{ij} b_j (x_i - b_i)\\
    f^{(b)}_{12} (x) 
    &= F_{2\text{int}}(x_1, x_2) - f^{(b)}_1(x) - f^{(b)}_2(x) - f^{(b)}_\emptyset\\ 
    &= \beta_1 x_1 + \beta_2 x_2  + \beta_{12} x_1 x_2 - (\beta_1 (x_1 - b_1) + \beta_{12} b_2 (x_1 - b_1))\\ 
    &- (\beta_2 (x_2 - b_2)  + \beta_{12} b_1 (x_2 - b_2)) - f^{(b)}_\emptyset\\ 
    &= \beta_{12}(x_1 - b_1) (x_2 - b_2) 
\end{align*}
It follows that $F_{\text{int}}(x) = f^{(b)}_\emptyset + f^{(b)}_1 (x) + f^{(b)}_2 (x) + f^{(b)}_{12} (x)$, hence, the function can be decomposed into a constant, the centered first-order (main) effects of the two features, and their joint centered interaction.

\subsubsection{Marginal fANOVA (m-fANOVA)}

Here, we define the functional components for the two functional relationships $F_{\text{lin}}$ and $F_{2\text{int}}$ based on the m-fANOVA.

\paragraph{Linear Main Effect Model.}

\begin{align*}
    f^{(m)}_\emptyset &= \mathbb{E}[F_{\text{lin}}(X)]\\ &= \beta_1 \mu_1 + \beta_2 \mu_2 \\
    f^{(m)}_i (x) &= \mathbb{E}[F_{\text{lin}}(x_i, X_j)] - f^{(m)}_\emptyset\\ &= \beta_i x_i + \beta_j \mu_j - (\beta_i \mu_i + \beta_j \mu_j )\\  &= \beta_i (x_i - \mu_i) \\
    f^{(m)}_{12} (x) &= F_{\text{lin}}(x_1, x_2) - f^{(m)}_1(x) - f^{(m)}_2(x) - f^{(m)}_\emptyset\\ &= \beta_1 x_1 + \beta_2 x_2 - \beta_1 (x_1 - \mu_1)  - \beta_2 (x_2 - \mu_2) - (\beta_1 \mu_1 + \beta_2 \mu_2 ) )\\ &= 0
\end{align*}
It follows that $F_{\text{lin}}(x) = f^{(m)}_\emptyset + f^{(m)}_1 (x) + f^{(m)}_2 (x)$, hence, the function can be decomposed into a constant plus the mean-centered first-order (main) effects of the two features. As shown in Theorem 2, b-fANOVA corresponds to m-fANOVA for this linear functional relationship if $b:= \mu$. 

\paragraph{Linear Model with a Two-Way Interaction.}

\begin{align*}
    f^{(m)}_\emptyset &= \mathbb{E}[F_{2\text{int}}(X)]\\ 
    &= \beta_1 \mu_1 + \beta_2 \mu_2 + \beta_{12} (\sigma_{12} + \mu_1 \mu_2)\\
    f^{(m)}_i (x) 
    &= \mathbb{E}[F_{2\text{int}}(x_i, X_j)] - f^{(m)}_\emptyset\\ 
    &= \beta_i x_i + \beta_j \mu_j + \beta_{ij} x_i \mu_j - (\beta_i \mu_i + \beta_j \mu_j +\beta_{ij} (\sigma_{ij} + \mu_i \mu_j))\\  
    &= \beta_i (x_i - \mu_i) + \beta_{ij} \mu_j (x_i - \mu_i) - \beta_{ij} \sigma_{ij}\\
    f^{(m)}_{12} (x) &= F_{2\text{int}}(x_1, x_2) - f^{(m)}_1(x) - f^{(m)}_2(x) - f^{(m)}_\emptyset\\ 
    &= \beta_1 x_1 + \beta_2 x_2  + \beta_{12} x_1 x_2 - (\beta_1 (x_1 - \mu_1) + \beta_{12} \mu_2 (x_1 - \mu_1) - \beta_{12} \sigma_{12})\\ 
    &- (\beta_2 (x_2 - \mu_2)  + \beta_{12} \mu_1 (x_2 - \mu_2) - \beta_{12} \sigma_{12}) - f^{(m)}_\emptyset\\ 
    &= \beta_{12}(x_1 - \mu_1) (x_2 - \mu_2) + \beta_{12}\sigma_{12}
\end{align*}
with $\mathbb{E}[X_1 X_2] = cov(X_1,X_2) + \mathbb{E}[X_1]\mathbb{E}[X_2] = \sigma_{12} + \mu_1 \mu_2$.
The functional relationship can be decomposed by $F_{\text{int}}(x) = f^{(m)}_\emptyset + f^{(m)}_1 (x) + f^{(m)}_2 (x) + f^{(m)}_{12} (x)$ using m-fANOVA. Hence, the function can be decomposed into a constant, the mean-centered first-order (main) effects of the two features, and their joint mean-centered interaction.
Since m-fANOVA integrates over the joint marginal distribution, correlations of features in $-S$ can influence the functional components by an additive shift (here: $\beta_{12} \sigma_{12}$). However, dependencies between $S$ and $-S$ in m-fANOVA are broken, and thus are not influencing the definition of the functional components.

\subsubsection{Conditional fANOVA (c-fANOVA)}

Here, we define the functional components for the two functional relationships $F_{\text{lin}}$ and $F_{2\text{int}}$ based on the c-fANOVA. 

\paragraph{Linear Main Effect Model.}

\begin{align*}
    f^{(c)}_\emptyset &= \mathbb{E}[F_{\text{lin}}(X)]\\ &= \beta_1 \mu_1 + \beta_2 \mu_2 \\
    f^{(c)}_i (x) &= \mathbb{E} [F_{\text{lin}}(X) \vert X_i = x_i] - f^{(c)}_\emptyset\\ &= \beta_i x_i + \beta_j \left(\mu_j + \frac{\sigma_{ij}}{\mathbb{V}[X_i]} (x_i - \mu_i)\right) - (\beta_i \mu_i + \beta_j \mu_j )\\  
    &= \beta_i (x_i - \mu_i) + (x_i - \mu_i) \frac{\beta_j\sigma_{ij}}{\mathbb{V}[X_i]} \\
    f^{(c)}_{12} (x) &= F_{\text{lin}}(x_1, x_2) - f^{(c)}_1(x) - f^{(c)}_2(x) - f^{(c)}_\emptyset\\ &= \beta_1 x_1 + \beta_2 x_2 - \beta_1 (x_1 - \mu_1) - (x_1 - \mu_1) \frac{\beta_2\sigma_{12}}{\mathbb{V}[X_1]}\\  
    &- \beta_2 (x_2 - \mu_2) - (x_2 - \mu_2) \frac{\beta_1\sigma_{12}}{\mathbb{V}[X_2]} - (\beta_1 \mu_1 + \beta_2 \mu_2 ) \\ 
    &= -\sum_{\ell \in \{1,2\}} (x_\ell - \mu_\ell) \frac{\beta_{-\ell}\sigma_{-\ell, \ell}}{\mathbb{V}[X_\ell]}
\end{align*}
with $\mathbb{E}[X \vert X_i = x_i] = \mu_j + \frac{\sigma_{ij}}{\mathbb{V}[X_i]} (x_i - \mu_i)$.
It follows that $F_{\text{lin}}(x) = f^{(c)}_\emptyset + f^{(c)}_1 (x) + f^{(c)}_2 (x) + f^{(c)}_{12} (x)$, where the second-order term adjusts for the cross-correlation effects added in the first-order terms to account for the influence of the feature distribution.

\paragraph{Linear Model with a Two-Way Interaction.}

\begin{align*}
    f^{(c)}_\emptyset &= \mathbb{E}[F_{2\text{int}}(X)]\\ &= \beta_1 \mu_1 + \beta_2 \mu_2 + \beta_{12} (\sigma_{12} + \mu_1 \mu_2)\\
    f^{(c)}_i (x) &= \mathbb{E}[F_{2\text{int}}(X) \vert X_i = x_i] - f^{(c)}_\emptyset\\ 
    &= \beta_i x_i + \beta_j \left(\mu_j + \frac{\sigma_{ij}}{\mathbb{V}[X_i]} (x_i - \mu_i)\right) + \beta_{ij} x_i \left(\mu_j + \frac{\sigma_{ij}}{\mathbb{V}[X_i]} (x_i - \mu_i)\right)\\ 
    &- (\beta_i \mu_i + \beta_j \mu_j  +\beta_{ij} (\sigma_{ij} + \mu_i \mu_j)\\  
    &= \beta_i (x_i - \mu_i) + (x_i - \mu_i) \frac{\beta_j\sigma_{ij}}{\mathbb{V}[X_i]} + \beta_{ij} (x_i - \mu_i) \left(\mu_j + \frac{\sigma_{ij}x_i}{\mathbb{V}[X_i]}\right) - \beta_{ij} \sigma_{ij}\\
    f^{(c)}_{12} (x) &= F_{2\text{int}}(x_1, x_2) - f^{(c)}_1(x) - f^{(c)}_2(x) - f^{(c)}_\emptyset\\ 
    &= \beta_{ij}(x_1 - \mu_1) (x_2 - \mu_2) + \beta_{12}\sigma_{12}\\ 
    &-\sum_{\ell \in \{1,2\}} (x_\ell - \mu_\ell) \frac{\beta_{-\ell}\sigma_{-\ell, \ell}}{\mathbb{V}[X_\ell]} - \beta_{12} \sigma_{12} \left(\frac{(x_1-\mu_1)x_1}{\mathbb{V}[X_1]} + \frac{(x_2-\mu_2)x_2}{\mathbb{V}[X_2]}\right) \\ 
    &= f_{2\text{int}}^{(m)} + f_{\text{lin}}^{(c)}  - \beta_{12} \sigma_{12} \left(\frac{(x_1-\mu_1)x_1}{\mathbb{V}[X_1]} + \frac{(x_2-\mu_2)x_2}{\mathbb{V}[X_2]}\right)\\
\end{align*}
with $\mathbb{E}[X_1 X_2] = cov(X_1,X_2) + \mathbb{E}[X_1]\mathbb{E}[X_2] = \sigma_{12} + \mu_1 \mu_2$ and $\mathbb{E}[X \vert X_i = x_i] = \mu_j + \frac{\sigma_{ij}}{\mathbb{V}[X_i]} (x_i - \mu_i)$.
The functional relationship can be decomposed by $F_{\text{int}}(x) = f^{(c)}_\emptyset + f^{(c)}_1 (x) + f^{(c)}_2 (x) + f^{(c)}_{12} (x)$ using c-fANOVA. 
Compared to m-fANOVA, c-fANOVA integrates over the conditional distribution. Hence, features in $-S$ may influence feature effects of $S$ through cross-correlation effects if features are dependent on each other.

\subsection{Details on Illustration of Feature Influence Methods}\label{sec_framework_example_influence}

In this section, we provide more detailed definitions of the feature influence methods of the illustrative example of Section 3.4. 
As defined in Section 3.4, we assume that we are given three independent and standard normally distributed features $X_1, X_2, X_3 \sim \mathcal{N}(0,1)$ and the following functional relationship: $F(x) = x_1 + x_2 + x_3 + x_1x_2 + x_1x_2x_3$. In this case the m-fANOVA is equivalent to the c-fANOVA and the b-fANOVA with $b = \mathbb{E}[X]$.
We now define the feature influence measures based on the fANOVA decomposition. The respective weights of each effect are summarized in Table 3 in the main paper.

\paragraph{Individual Effects for $i = 1$.}

\begin{align*}
    \phi_x^\emptyset(i) &= f_1(x) = x_1\\
    \phi_x^{SV}(i) &= f_1(x) + \frac{1}{2} f_{12}(x) + \frac{1}{3} f_{123}(x) = x_1 + \frac{1}{2} x_1 x_2 + \frac{1}{3} x_1 x_2 x_3\\
    \phi_x^+(i) &= f_1(x) + f_{12}(x) + f_{123}(x) = x_1 + x_1 x_2 + x_1 x_2 x_3 \\
\end{align*}

\paragraph{Joint Effects for $ij = \{1,2\}$.}

\begin{align*}
    \phi^{\emptyset}(ij) &= f_1(x) + f_2(x) + f_{12}(x) = x_1 + x_2 + x_1 x_2\\ 
    \phi^{GV}(ij) &= f_1(x) + f_2(x) + f_{12}(x)  + w_1^2 f_{123}(x) =x_1 + x_2 + x_1 x_2 + w_1^2 x_1x_2x_3\\
    \phi^{+}(ij) &= f_1(x) + f_2(x) + f_{12}(x)  + f_{123}(x) = x_1 + x_2 + x_1 x_2 + x_1 x_2 x_3 \\
\end{align*}

\paragraph{Interaction Effects for $ij = \{1,2\}$.}

    \begin{align*}
        \phi^{I\emptyset}(ij) &= f_{12}(x) = x_1 x_2\\
        \phi^{SI}(ij) &= f_{12}(x) + w_1^2 f_{123}(x) = x_1 x_2 + w_1^2 x_1 x_2 x_3\\
        \phi^{I+}(ij) &= f_{12}(x) + f_{123}(x) = x_1 x_2 + x_1 x_2 x_3\\
    \end{align*}
where $\mathbb{E}[F(X)] = f_\emptyset = 0$ and $\mathbb{E}[F(x_S, X_{-S})] = \mathbb{E}[F(X)\vert X_S = x_S]$ and $\mathbb{E}[X_i X_j] = \mathbb{E}[X_i] \mathbb{E}[X_j]$ holds due to the above stated distributional assumptions.

Note that for a linear functional relationship without interactions and uncorrelated features, the individual influence methods result in the same contribution defined by the main effect:
$\phi^\emptyset(i) =
        \phi^{SV}(i) = 
        \phi^+(i) =  
        \beta_i \cdot (x_i - \mu_i)$, 
while the interaction influence methods become zero:
$\phi^{I\emptyset}(S) = \phi^{SI}(S) = \phi^{I+}(S)   = 0$.

Similarly to individual influence methods, the pure, partial, and full effects result in the same contribution defined by the sum of the main effects: $\phi^{\emptyset}(S)=\phi^{GV}(S)=\phi^{+}(S) = \sum_{i \in S}\phi^\emptyset(i) = \sum_{i \in S} \beta_i \cdot (x_i - \mu_i)$.

\clearpage
\section{DETAILS ON UNIFYING FEATURE-BASED EXPLANATIONS}
\label{appx_section_table}

In this section, we present a more comprehensive categorization of existing methods using our framework.
We categorize all methods in \cref{appx_tab_unification}.
We further describe and explain our categorization for local explanations (\cref{appx_sec_unif_local}), global sensitivity analysis (\cref{appx_sec_unif_sens}), and global risk-based explanations (\cref{appx_sec_unif_risk}).

\begin{table}[htb]
  \caption{Categorization of XAI methods for {\color{blue} local}, {\color{violet} sensitivity}, and {\color{teal} risk} explanation game, where $^*$ is based on gradients}
    \begin{tabular}{m{2cm}|m{2cm}|m{2cm}|m{8cm}}
    \toprule
      Distribution coverage & Higher-order Interaction Influence & Feature Influence& Explanations  \\
      \midrule
       \multirow{9}{*}{b-fANOVA}  & \multirow{3}{*}{pure (no)} & individual & {\color{blue} ArchAttribute}, {\color{blue} Gradient$\times$Input$^*$ (linear $F$)}, {\color{blue} LRP-$\epsilon^*$}\\
         &  & joint & {\color{blue} ArchAttribute} \\
         &  & interaction &  \\\cmidrule{2-4}
         & \multirow{3}{*}{partial} & individual & {\color{blue} BSHAP}, {\color{blue} Integrated Gradients}, {\color{blue} DeepSHAP$^*$}, {\color{blue} DeepLIFT$^*$} \\
         &  & joint  &  {\color{blue} Joint SVs} \\
         &  & interaction &  {\color{blue}Shapley Taylor Interaction},  {\color{blue}Faith-SHAP}, {\color{blue} Integrated Hessians$^*$}\\\cmidrule{2-4}
         & \multirow{3}{*}{full} & individual &  {\color{blue} ICE}, {\color{blue} Occlusion-1}, {\color{blue} Gradient$\times$Input$^*$ (more than full)}\\
         &  & joint  & {\color{blue} ICE}, {\color{blue}Occlusion-patch} \\
         &  & interaction &  \\
       \midrule
       \multirow{9}{*}{m-fANOVA}  & \multirow{3}{*}{pure (no)} & individual &  {\color{blue} PDP}, {\color{violet} Closed Sobol' Index ($\underline{\tau}$)}\\
         &  & joint &  {\color{blue} PDP},   {\color{violet} Closed Sobol' Index ($\underline{\tau}$)}\\
         &  & interaction &  {\color{blue} Int. Shapley-GAM}, {\color{violet} H-statistic}\\\cmidrule{2-4}
         & \multirow{3}{*}{partial} & individual & {\color{blue} Int. SHAP}, {\color{blue} Expected Gradient$^*$},  {\color{violet} Sobol'-SV}, {\color{teal} SAGE (in practice)}\\
         &  & joint  &  \\
         &  & interaction &  {\color{blue} Int. $k$-SVs}\\\cmidrule{2-4}
        & \multirow{3}{*}{full} & individual &  {\color{violet} Total Sobol' Index ($\bar\tau$)}, {\color{violet} Superset measure ($\Upsilon^2$}), {\color{teal} PFI}\\
         &  & joint &  {\color{violet} Total Sobol' Index ($\bar\tau$)}, {\color{teal} Grouped PFI}\\
         &  & interaction & {\color{violet} Superset measure ($\Upsilon^2$)}\\
       \midrule
       \multirow{9}{*}{c-fANOVA}  & \multirow{3}{*}{pure (no)} & individual &  {\color{blue} M Plot}, {\color{blue} ALE$^*$}\\
         &  & joint &  {\color{blue} M Plot} \\
         &  & interaction & {\color{blue} Obs. Shapley-GAM}, {\color{blue} ALE}$^*$ 
         \\\cmidrule{2-4}
         & \multirow{3}{*}{partial} & individual & {\color{blue} Obs. SHAP}, {\color{blue} TreeSHAP}, {\color{violet} Sobol'-SV}, {\color{teal} SAGE}\\
         &  & joint &   \\
         &  & interaction &  {\color{blue} Obs. $k$-SVs}\\\cmidrule{2-4}
         & \multirow{3}{*}{full} & individual & {\color{teal} CFI}, {\color{blue} PredDiff-1}\\
         &  & joint & {\color{blue} PredDiff-Patch} \\
         &  & interaction & {\color{blue}PredDiff-Interaction}\\
       \bottomrule
    \end{tabular}
    \label{appx_tab_unification}
\end{table}

\subsection{Unifying Local Feature-Based Explanations}\label{appx_sec_unif_local}
Here, we summarize local feature-based explanations, i.e. explanations based on the local explanation game.

\subsubsection{Perturbation-Based Explanations}
First, we summarize perturbation-based explanations.

%full
\paragraph{Occlusion-1 and Occlusion-Patch.}
Occlusion-1 and Occlusion-Patch \citep{Zeiler_Fergus_2014} were introduced for image data.
By masking the relevant feature (or feature patch) the change in prediction is measured.
More formally, in the general case, Occlusion-Patch computes for a patch of features $S \subseteq D$
\begin{align*}
    \phi^{\text{Occlusion-Patch}}(S) := F(x) - F^{(b)}_{-S}(x) = \nu_x^{(\text{loc})}(D)-\nu_x^{(\text{loc})}(-S) = \phi^+(S),
\end{align*}
using a baseline $b$.
Hence, Occlusion-Patch corresponds to the full joint effect $\phi^+$, and to the full individual effect for Occlusion-1 using b-fANOVA.

\paragraph{PredDiff-1, PredDiff-Patch and PredDiff-Interaction}
PredDiff \citep{Robnik2008} is a popular method for image classification tasks \citep{DBLP:conf/iclr/ZintgrafCAW17}.
It measures similarly to Occlusion the change in prediction when masking features.
However, instead of using a baseline, PredDiff uses the conditional distribution, i.e., for a patch $S \subseteq D$
\begin{align*}
    \phi^{\text{PredDiff-Patch}}(S) := F(x) - \mathbb{E}[F(X) \mid X_{-S}=x_{-S}] = F(x) - F_{-S}^{(c)}(x) = \nu_x^{(\text{loc})}(D)-\nu_x^{(\text{loc})}(-S) = \phi^+(S).
\end{align*}
Hence, PredDiff-Patch measures the full joint effect, and PredDiff-1 the full individual effect of c-fANOVA.
PredDiff-Interaction \citep{Blucher.2022} measures the interaction by extracting individual effects, thereby measuring full interaction effects of c-fANOVA. 

\paragraph{Individual Conditional Expectations (ICE).}
ICE curves \citep{goldstein_peeking_2015} visualize the influence of a feature of interest on the prediction in the presence of all other features. The ICE function itself is defined for individual or groups of features at some baseline value $x_S = b_S$ which is typically chosen from $\mathcal{X}_S$, while the remaining features are set to the observed feature values of the instance to be explained:
\begin{equation*}
    \phi^{\text{ICE}}(S) := F(b_S, x_{-S})
\end{equation*}
Centering ICE curves to remove additive shifts is typically done to understand and quantify the influence of feature interactions \citep{herbinger2022repid}. One possibility is to center it according to the actual prediction of the instance, which leads to the full individual ($S = i$) and joint influence measure $\phi^+(S)$ using b-fANOVA. It is defined by
\begin{equation*}
    \phi^{\text{c-ICE}}(S) := - (F(b_S, x_{-S}) - F(x) ) = f_S^{(b)}(x) + \sum_{T \subseteq D: T \cap S \notin \{S,\emptyset\}} f_T^{(b)}(x).
\end{equation*}

%partial
\paragraph{Shapley Additive Explanations (SHAP).}
Baseline SHAP (BSHAP) \citep{DBLP:journals/jmlr/StrumbeljK10,DBLP:conf/icml/SundararajanN20} computes  the \gls{SV} on the set function $\nu^{(\text{loc})}(S) = F_S^{(b)}(x_0)$, i.e., the local explanation game.
Therefore, BSHAP corresponds to the partial individual effect of $b$-fANOVA.
On the contrary, interventional (int.) SHAP \citep{lundberg_2017_unified, DBLP:conf/aistats/JanzingMB20} computes the \gls{SV} on the marginal expectations $\nu^{(\text{loc})}(S) = F_S^{(m)}(x_0)$, i.e., the partial individual effects of m-fANOVA.
Lastly, observational (obs.) SHAP \citep{strumbelj_2014,lundberg_2017_unified, aas_explaining_2021} computes the \gls{SV} on the conditional expectations $\nu^{(\text{loc})}(S) = F_S^{(c)}(x_0)$, i.e., the partial individual effect of c-fANOVA.

\paragraph{TreeSHAP.}
TreeSHAP \citep{lundberg_consistent_2019} is a model-specific method for tree-based models.
It computes the exact \glspl{SV} on the tree-induced conditional distribution, i.e., the conditional distribution approximated by the learned tree structure.
Hence, TreeSHAP corresponds to the partial indivdual effect of c-fANOVA.
Notably, conditional expectations approximated by trees might have significant drawbacks in practice \citep{aas_explaining_2021}.

\paragraph{$k$-Shapley Values ($k$-SVs) and the Shapley-GAM.}
$k$-\glspl{SV} \citep{bordt2022shapley} are an extension of pairwise Shapley interactions \citep{lundberg_consistent_2019} introduced on trees.
The $k$-\glspl{SV} compute the Shapley interaction index \citep{Grabisch.1999} for the highest order, and adjust lower-order interactions recursively.
$k$-\glspl{SV} can be applied with baseline, marginal (interventional), and conditional (observational) imputations, thereby measuring partial interaction effects of b-, m- and c-fANOVA, respectively.
The Shapley-GAM \citep{bordt2022shapley} is the edge case of $k$-\glspl{SV} for $k=d$.
They provide the pure interaction effects for b-,m-, and c-fANOVA, depending on which imputation method is used.

\paragraph{Shapley Taylor Interaction Index.}
The Shapley Taylor interaction index \citep{sundararajan2020shapley} is an extension of the \gls{SV} that measures partial interaction effects up to explanation order $k$.
The highest-order interactions (order $=k$) thereby capture all higher-order interactions (order $>k$).
In general, the Shapley-Taylor interactions can be applied with any set function, but were introduced on b-fANOVA.

\paragraph{Faithful Shapley Interaction Index (Faith-SHAP).}
The faithful Shapley interaction index (Faith-SHAP) \citep{tsai_faith-shap_2022} is an extension of \glspl{SV}, and specifies partial interactions up to explanation order $k$.
In contrast to Shapley-Taylor interactions, it distributes higher-order interactions on all interactions that are computed. 
It thereby corresponds to a faithful approximation of the game given the Shapley-weighted regression problem \citep{tsai_faith-shap_2022}.
Faith-SHAP can be applied to any set function, but was presented using a baseline, and therefore corresponds to b-fANOVA.

\paragraph{Joint Shapley Values (JointSV).}
Joint\glspl{SV} \citep{Harris.2022} were introduced as an extension to \glspl{SV} for interactions up to explanation order $k$.
Joint\glspl{SV} are based on joint marginal contributions and thereby measure a partial joint effect.
While Joint\glspl{SV} can be applied to any set function, they were introduced with baseline imputation, thereby corresponding to b-fANOVA.

%pure
\paragraph{ArchAttribute.}
ArchAttribute \citep{tsang2020does} is a feature attribution method that quantifies the pure effect of an individual feature or a group of features compared to a baseline context and is defined by:
\begin{equation*}
    \phi^{\text{ArchAttr}}(S) := F(x_S, b_{-S}) - F(b).
\end{equation*}
This definition directly corresponds to the individual pure effect $\phi_x^\emptyset(i) = f_i^{(b)}(x)$ for $S = i$ and the joint pure effect $\phi_x^\emptyset(S) = \sum_{\emptyset \neq L \subseteq S} f_L^{(b)}(x)$ for $\vert S \vert > 1$ using b-fANOVA.

\paragraph{Partial Dependence Plot (PDP).}
The PDP is a global feature effect method that visualizes the expected marginal effect of an individual or a group of features on the predicted outcome \citep{friedman_greedy_2001}. The underlying partial dependence (PD) function is defined at a data point $x_S \in \mathcal{X}_S$ by 
\begin{equation*}
    \phi^{\text{PD}}(S) := \mathbb{E}[F(x_S, X_{-S})].
\end{equation*}
Similarly to ICE curves, PDPs are centered to remove additive shifts \citep{herbinger2022repid}. Centering the PD function with respect to mean prediction leads to the pure individual ($S = i$) and joint influence $\phi^\emptyset(S)$ using the m-fANOVA. The centered PD function is defined by
\begin{equation*}
    \phi^{\text{c-PD}}(S) := \mathbb{E}[F(x_S, X_{-S})] - \mathbb{E}[F(X)] = \sum_{\emptyset \neq L \subseteq S} f_L^{(m)} (x).
\end{equation*}

\paragraph{Marginal (M) Plot.}
The M Plot is the conditional variant of the PDP, i.e., a global feature effect method that visualizes the conditional expected marginal effect of an individual or a group of features on the predicted outcome \citep{friedman_greedy_2001, apley_visualizing_2020}. The underlying marginal (M) function is defined at a data point $x_S \in \mathcal{X}_S$ by 
\begin{equation*}
    \phi^{\text{M}}(S) := \mathbb{E}[F(X) \vert X_{S} = x_S].
\end{equation*}
Such as for PDPs, also M Plots can be centered with respect to mean prediction which results in the pure individual ($S = i$) and joint influence $\phi^\emptyset(S)$ using the c-fANOVA. The centered M function is defined by
\begin{equation*}
    \phi^{\text{c-M}}(S) := \mathbb{E}[F(X) \vert X_{S} = x_S] - \mathbb{E}[F(X)] = \sum_{\emptyset \neq L \subseteq S} f_L^{(c)} (x).
\end{equation*}

\subsubsection{Gradient-Based Attribution Methods}

In this section, we briefly introduce gradient-based attribution methods and categorize them according to our framework.
There exist two categories of gradient-based methods: Methods either rely directly on the output, or compute attributions layer by layer and back-propagate these.
It was shown \citep{DBLP:conf/iclr/AnconaCO018} that the layer-wise attribution computation is not necessary.
In the following, we analyze layer-wise attribution methods for each layer independently, which follows \cite{Deng2024Unify}.

\paragraph{Grad$\times$Input.}
Grad$\times$Input is a fundamental method to attribute importance to individual features \citep{Shrikumar_Greenside_Kundaje_2017}.
The method simply computes
\begin{equation*}
    \phi^{\text{Grad}\times\text{Input}}(i) := \frac{\partial{F(x)}}{\partial x_i} x_i.
\end{equation*}

This method typically assumes centered features, i.e. $\mathbb{E}[X] = 0$, and we generalize it with the baseline $b := \mathbb{E}[X]$ as
\begin{equation*}
    \phi^{\text{Grad}\times\text{Input}}(i) := \frac{\partial{F(x)}}{\partial x_i} (x_i-b_i).
\end{equation*}
As discussed in \cref{sec_unified_framework}, the gradient describes the sum of all fANOVA effects to
\begin{align*}
    \frac{\partial F}{\partial x_i} = \sum_{S\subseteq D: i \in S} \frac{\partial f^{(b)}_S(x)}{\partial x_i}.
\end{align*}
By using the infinite Taylor expansion around $b$ from Eq.~(\ref{appx_eq_taylor_exp}), we find for b-fANOVA effects
\begin{align*}
    (x_i-b_i) \cdot \frac{\partial f^{(b)}_S(x)}{\partial x_i}
    &= (x_i-b_i)\cdot \sum_{\kappa \in \Omega_S} C(\kappa) \cdot \nabla F(\kappa) \cdot \frac{\partial}{\partial x_i} \pi(\kappa)
    = (x_i-b_i)\cdot \sum_{\kappa \in \Omega_S} C(\kappa) \cdot \nabla F(\kappa) \cdot \frac{\partial}{\partial x_i} \prod_{j\in D}(x_j-b_j)^{\kappa_j} 
    \\
    &=\sum_{\kappa \in \Omega_S} C(\kappa) \cdot \nabla F(\kappa) \cdot \prod_{j\in D}(x_j-b_j)^{\kappa_j} \cdot \kappa_i =\sum_{\kappa \in \Omega_S} \underbrace{C(\kappa) \cdot \nabla F(\kappa) \cdot \pi(\kappa)}_{\phi(\kappa) \text{ for } S=i, \text{ else } I(\kappa)} \cdot \kappa_i.
\end{align*}

Hence, Grad$\times$Input computes a variant of the full effect, where every interaction effect $f^{(b)}_S$ involving $i \in S$ is more than fully accounted in the explanation as
\begin{align*}
     \phi^{\text{Grad}\times\text{Input}}(i) = \frac{\partial{F(x)}}{\partial x_i} (x_i-b_i) = \sum_{\kappa \in \Omega_i} \phi(\kappa) \cdot \kappa_i + \sum_{S\subseteq D: i \in S, s>1} \sum_{\kappa \in \Omega_S} I(\kappa) \cdot \kappa_i  
\end{align*}

In fact, the Taylor interaction effect $I(\kappa$) contained in the b-fANOVA effect $f^{(b)}_S$ is weighted by the order $\kappa_i \geq 1$ of $x_i$, which yields an even stronger influence of higher-order functions. 
If $x_i$ appears only linearly in each interaction, then it corresponds directly to the full effect.

Moreover, when assuming a linear function, as shown by \citet{Deng2024Unify}, this attribution becomes equal to $\phi(\kappa)$ with $\kappa=[0,\dots,\kappa_i=1,\dots,0]$ and the Taylor independent effect $\phi$.
In this case, by Eq.~(\ref{appx_eq_taylor_interactions}) $\phi(i)$ is part of the generic independent effect $\psi(i)$, since $\phi(\kappa) \in \Omega_i$.
Hence, by Corollary~\ref{cor_fanova_moebius}, $\phi(i)$ is part of the pure b-fANOVA effect $f^{(b)}_i$.
If we assume linearity of $F$ in $x_i$ around $x_0$, i.e., vanishing higher-order derivatives, then $\phi(i)=\psi(i)=f^{(b)}_i$.

\paragraph{LRP-$\epsilon$.}
Layer-wise relevance propagation (LRP) is a method proposed by \cite{bach2015pixel}, and similar  to Grad$\times$Input.
In fact, for ReLU activation functions, both methods coincide \citep{DBLP:conf/iclr/AnconaCO018}.
In general, LRP is applied layer-wise, by back-propagating attributions layer by layer, and in each layer, the method behaves similarly to Grad$\times$Input \citep{Deng2024Unify}.
Thus, it measures in each layer the pure effect with the additional linearity assumption, similar to Grad$\times$Input.

\paragraph{DeepSHAP.}
DeepSHAP \citep{lundberg_2017_unified} computes attributions layer-wise, where the \gls{SV} is used.
It was shown by \cite{Deng2024Unify}, that for each layer, the layer-wise attribution is given by
\begin{align*}
    \phi(i) = \sum_{\kappa \in \Omega_i}\phi(\kappa) + \sum_{S \supset i: s>1} \frac{1}{s} \sum_{\kappa \in \Omega_S} I(\kappa) = \psi(i) + \sum_{S \supset i: s>1} \frac{1}{s} J(S) =  f_i^{(b)} + \sum_{S \supset i: s>1} \frac{1}{s} f^{(b)}_S,
\end{align*}
which corresponds to the partial individual effect (\gls{SV}) of b-fANOVA in \cref{tab_influence_summary}.

\paragraph{Integrated Gradients (IG).}
Integrated Gradient (IG) \citep{DBLP:conf/icml/SundararajanTY17} is a method motivated by the Aumann-\gls{SV} \citep{aumann_shapley1974} from the additive cost sharing literature \citep{Friedman_2004}.
IG computes the average gradient over a straight path from $x_0$ to a baseline $b$ as
\begin{align*}
    \phi^{\text{IG}}(i) := (x_i - b_i) \cdot \int_{\alpha=0}^1 \frac{\partial F(b+\alpha \cdot (x-b))}{\partial x_i} d\alpha.
\end{align*}
It was shown \citep{Deng2024Unify} that IG summarizes Taylor interactions as
\begin{align}\label{eq_intgrad_effects}
    \phi^{\text{IG}}(i) = \psi(i) + \sum_{i \in S: s > 1}\sum_{\kappa \in \Omega_S}\frac{\kappa_i}{\sum_{i \in D}{\kappa_i}}I(\kappa),
\end{align}
which corresponds to the b-fANOVA main effect and a partial b-fANOVA interaction effect.

\paragraph{Expected Gradients (EG).}
Expected Gradients (EG) \citep{DBLP:journals/natmi/ErionJSLL21} is an extension of IG to random baselines, which computes IG as an expectation over baselines as
\begin{align*}
    \phi^{\text{EG}}(i) := \mathbb{E}_{b \sim P(b)}\left[(x_i - b_i) \cdot \int_{\alpha=0}^1 \frac{\partial F(b+\alpha \cdot (x-b))}{\partial x_i} d\alpha \right].
\end{align*}

Using the non-recursive definition of b-fANOVA effects (\cref{cor_fanova_moebius}) taking the expectation of $b \sim B$, yields
\begin{align*}
    \mathbb{E}[f^{(B)}(x)] = \mathbb{E}[\sum_{L \subseteq S}(-1)^{s-\ell} F^{(B)}_L(x)] 
    = \mathbb{E}[\sum_{L \subseteq S}(-1)^{s-\ell} F(x_L,B_{-L})] =\sum_{L \subseteq S}(-1)^{s-\ell} \mathbb{E}[F(x_L,B_{-L})].
\end{align*}
The distribution of $B$ is usually determined by the collected data, i.e., $B \sim X$, hence $\mathbb{E}[F(x_L,B_{-L})] = F_L^{(m)}(x)$ corresponds to m-fANOVA.
Reversing the previous equations, we obtain
\begin{align*}
    \mathbb{E}[f_S^{(B)}(x)] = f_S^{(m)}(x).
\end{align*}
Similarly, the Taylor independent and interaction effects with b-fANOVA correspond to the Taylor independent and interaction effects with m-fANOVA, when the expectation is taken over $b$, where $b$ follows the distribution of the features $x$.
In conclusion, EG measures the partial individual effect with respect to m-fANOVA.

\paragraph{DeepLIFT.}
DeepLIFT \citep{Shrikumar_Greenside_Kundaje_2017} is a method that computes attributions layer by layer, similar to LRP but by including a baseline $b$.
It was shown \citep{Deng2024Unify} that DeepLIFT similar to IG is applied in each layer, and thus DeepLIFT accounts in each layer for b-fANOVA effects partially.

\paragraph{Integrated Hessians (IH).}
Integrated Hessians (IH) \citep{DBLP:journals/jmlr/JanizekSL21}
is an extension of IG to interactions.
For two features $i,j$, IH computes
\begin{align*}
    \phi^{\text{IH}}(ij) := (x_i - b_i)(x_j - b_j) \cdot \int_{\alpha_i=0}^1 \int_{\alpha_j=0}^1 \alpha_i \alpha_j \frac{\partial F(b+\alpha_i \alpha_j \cdot (x-b))}{\partial x_i \partial x_j} d\alpha_i d\alpha_j,
\end{align*}
i.e., applying IG for each feature.
By taking the gradient, all effects vanish that do not involve feature $x_i$, as seen in Eq.~(\ref{eq_intgrad_effects}).
In detail, for IG, we have
\begin{align*}
    \int_{\alpha=0}^1 \frac{\partial F(b + \alpha\cdot (x-b))}{\partial x_i} d\alpha 
    &=  \sum_{S \subseteq D}\sum_{\kappa \in \Omega_S}\int_{\alpha=0}^1 \frac{\partial I_{b+\alpha \cdot (x-b)}(\kappa)}{\partial x_i} d \alpha,
\end{align*}
and moreover with the definition of Taylor interactions, we receive by Eq.~(\ref{appx_eq_taylor_exp})
\begin{align*}
    \int_{\alpha=0}^1 \frac{\partial I_{b+\alpha \cdot (x-b)}(\kappa)}{\partial x_i} d \alpha = C(\kappa) \nabla F(\kappa) \pi(\kappa)  \frac{\kappa_i}{x_i-b_i} \int_{\alpha=0}^1 \alpha^{\sum_{i \in D} \kappa_i }d\alpha = I(\kappa) \frac{1}{x_i-b_i} \frac{\kappa_i}{\sum_{i\in D}\kappa_i},
\end{align*}
which is the result provided by \cite{Deng2024Unify}, and stated in \cref{eq_intgrad_effects}.
Note that all terms vanish, where $\kappa_i =0$.
Again, by repeating the same argument for $x_j$, we obtain
\begin{align*}
    \phi^{\text{IH}}(ij) = \sum_{ij \subseteq S} \sum_{\kappa \in \Omega_S}\frac{\kappa_i \kappa_j}{(\sum_{i \in D} \kappa_i)^2} I(\kappa) = J(ij) + \sum_{ij \subseteq S: s>2} \sum_{\kappa \in \Omega_S} \frac{\kappa_i \kappa_j}{(\sum_{i \in D} \kappa_i)^2} I(\kappa),
\end{align*}
i.e., IH corresponds to a partial (pairwise) interaction effect of b-fANOVA.

\paragraph{Accumulated Local Effects (ALE)}
Accumulated Local Effects (ALE) \citep{apley_visualizing_2020} computes first the gradients using the conditional distribution and then averages over the feature's values in a certain range
\begin{align*}
    \phi_x^{\text{ALE}}(i) := \int_{x_{\min}}^{x_i}\mathbb{E}[\frac{\partial F(X)}{\partial x_i} \mid X_i = z] dz.
\end{align*}

ALE analyzes the conditional expectation of the partial derivative of $F$, which corresponds to the pure effect with c-fANOVA on the derivative.
By averaging over the values $X_i=z$ from $z=x_{\min}$ to $x_i$, ALE captures effects related to $F$, which however are difficult to state precisely due to the unknown conditional distribution. 
ALE further computes second-order effects by applying ALE on both features consecutively, and subtracting main effects, thereby measuring interaction effects.
This further ensures a full decomposition \citep{apley_visualizing_2020}, if applied on all possible effects, which is a result of the \gls{MT}-like computation of effects.

\subsection{Unifying Global Feature-Based Explanations}
In this section, we unify global feature-based explanations using our framework.

\subsubsection{Sensitivity-Based Explanations}\label{appx_sec_unif_sens}
We first discuss measures from sensitivity analysis, which are based on the global sensitivity game.

\paragraph{Sobol' Indices.}
The Sobol' indices \citep{Sobol_2001, Owen_2013} were introduced as a global variance importance measure.
The closed Sobol' index $\underline\tau$ measures the variance of marginalized predictions $\phi^{\text{ClosedSobol'}}(S) := \mathbb{V}[F^{(m)}_S(X)] = \sum_{T \subseteq S} \sigma^2_T$, where feature independence is assumed, thereby corresponding to m-fANOVA.
Clearly, the closed Sobol' index is the pure joint effect, or pure individual effect.
In contrast, the total Sobol' index $\bar\tau$ quantifies the full individual and joint effect of m-fANOVA, which is clear from its definition and \cref{tab_influence_summary}.

\paragraph{H-Statistic.}
The H-statistic \citep{friedman_predictive_2008} assumes feature independence, and was introduced as a measure of interaction.
For two features $i,j$ it approximates the variance of the joint minus the individual marginalized predictions as 
\begin{align*}
    \phi^{\text{H-Statistic}}(ij) := \frac{\mathbb{V}[F^{(m)}_{ij}(X)-F^{(m)}_{i}(X)-F^{(m)}_j(X)]}{\mathbb{V}[F^{(m)}_{ij}(X)]} = \frac{\mathbb{V}[f^{(m)}_{ij}(X)]}{\mathbb{V}[F^{(m)}_{ij}(X)]},
\end{align*}
and thus corresponds to a scaled variant of the pure interaction effect of the global sensitivity game using m-fANOVA for independent features.

\paragraph{Sobol'-SV.}
The Sobol'-\gls{SV} was introduced for independent \citep{Owen_2014} and dependent \citep{Owen.2017} features, thereby corresponding to m- and c-fANOVA, respectively.
It measures the partial individual effect due to the \gls{SV} applied on the marginalized prediction $F_S$.

\paragraph{Superset Measure.}
The superset measure $\Upsilon^2$ measures the full interaction effect, which is clear by its definition and \cref{tab_influence_summary}, i.e., $\Upsilon^2$ corresponds to the Co-\gls{MT} applied on m-fANOVA.

\subsubsection{Performance-Based (Risk-Based) Explanations}\label{appx_sec_unif_risk}

In this section, we summarize performance-based measures that rely on the global risk game.

\paragraph{Permutation Feature Importance (PFI) and Grouped PFI.}
The PFI \citep{Breiman2001, fisher2019all} is a global feature importance method that measures the performance drop after removing an individual feature by marginally perturbing the feature of interest. Therefore, the PFI is defined by the full individual influence measure $\phi^+(i)$ based on the global risk game and m-fANOVA:
\begin{equation*}
    \phi^{\text{PFI}}(i) =  \mathbb{E}[\ell(F^{(m)}_{-i}(X), Y)] - \mathbb{E}[\ell(F(X), Y)] = \phi^+(i) = m^{(risk)}(i) + \sum_{T\supset i} m^{(risk)}(T),
\end{equation*}
where $F_{-i}^{(m)}$ corresponds to the outcome when feature $i$ is removed by marginalization in presence of the remaining features $-i$ using m-fANOVA.

The Grouped PFI \citep{au2022grouped} is a global feature importance measure for groups of features and corresponds to PFI by marginally perturbing a feature group instead of an individual feature. Hence, it is defined by the full joint influence measure $\phi^+(S)$ based on the global risk game and m-fANOVA:
\begin{equation*}
    \phi^{\text{GPFI}}(S) =  \mathbb{E}[\ell(F^{(m)}_{-S}(X), Y)] - \mathbb{E}[\ell(F(X), Y)] = \phi^+(S) =  m^{(risk)}(S) + \sum_{T \cap S \notin \{S,\emptyset\}} m^{(risk)}(T).
\end{equation*}

\paragraph{Conditional Feature Importance (CFI).}
Conditional Feature Importance (CFI) \citep{strobl2008conditional,molnar2023model} is similar to PFI but relies on conditional expectations, thereby measuring the full individual and full joint effect of c-fANOVA.

\paragraph{Shapley Additive Global Importance (SAGE).}
Shapley additive global importance (SAGE) \citep{covert2020sage} was proposed as the \gls{SV} on the \emph{normalized} risk game using c-fANOVA, i.e.,
\begin{align*}
    \nu^{\text{SAGE}}(S) := \nu^{(\text{risk})}(S) - \nu^{(\text{risk})}(\emptyset) = \mathbb{E}[\ell(F^{(c)}_\emptyset(X),Y)] - \mathbb{E}[\ell(F^{(c)}_S(X),Y)].
\end{align*}
Notably, the \gls{SV} is not affected by such transformations.
Hence, SAGE measures the partial individual effect of c-fANOVA. 
In practice, marginal expectations are used to compute SAGE values thereby corresponding to m-fANOVA.

\clearpage
\section{DETAILS ON EXPERIMENTAL SETUP AND REPRODUCIBILITY}\label{appx_sec_exp_setup}

This section contains details on the experimental setup, models and datasets. All code and supplementary material is available at \url{https://github.com/FFmgll/unifying_feature_based_explanations}.

\subsection{Synthetic Examples}\label{appx_sec_synth_data_model}

In this section, we describe in more detail the data settings and modeling approach of the synthetic example in Section 5.1 and of further examples analyzed in more detail in Appendix G.1.

\paragraph{Datasets.}
For all our synthetic examples, we consider $4$ features that are multivariate normally distributed, i.e., $X \sim \mathcal{N}(\mu, \Sigma)$ with $\mu = 0$, $\sigma_{ii}^2 = 1$ and $\sigma_{ij} = \rho$ for $i \neq j$. We vary the correlation coefficient between no, medium, and high correlation by setting it to $\rho \in \{0,0.5,0.9\}$. For these three data settings, we define the following functional relationships
\begin{align*}
    F_{\text{lin}}(x) &= 2x_1 + 2 x_2 + 2x_3,\\
    F_{\text{int}}(x) &= F_{\text{lin}} + x_1 x_2 + x_1 x_2 x_3,\\
    F_{\text{add}}(x) &= 2x_1 + x_2^2 + x_3^3,
\end{align*}
where all of them are corrupted by noise $\mathcal{N}(0,0.01)$ to generate the target $Y$.

\paragraph{Models.}

To model the data-generating processes $F_{\text{lin}}$, $F_{\text{int}}$, and $F_{\text{add}}$, we randomly draw 10,000 samples and fit correctly specified linear models with \texttt{sklearn} \citep{Sklearn}, ensuring correctly learned effects for each functional form.
Input features are transformed to match the underlying functional relationships, with the relevant main effects and interaction terms selected accordingly.
This method guarantees that the models retain linearity in the parameter space, consistent with the specified functional relationships.
As a result, we obtain an almost perfect fit across all three synthetic datasets, achieving $R^2$ values greater than 0.99 on a separately sampled test dataset of size 10,000 following the same distribution as defined above.

\subsection{Models and Datasets for Real-World Examples}\label{appx_sec_real_world_models_datasets}

In addition to the synthetic datasets, we evaluate the performance of our models on four real-world datasets: \emph{California housing}, \emph{Bike sharing},  \emph{titanic}, and \emph{IMDB}, including regression and classification tasks. In the following, we describe the datasets and the corresponding models.
With the exception of \emph{IMDB}, we fit models on each dataset and evaluate the performance on regression tasks with the mean squared error (MSE) and the the coefficient of determination ($R^2$). 
For classification, we evaluate models using accuracy (ACC) and the $F_1$ score. All results are averaged over five Monte Carlo cross-validation runs to ensure robustness.
We use \texttt{XGBoost} for training gradient-boosted decision trees \citep{XGBoost} and \texttt{PyTorch} for training neural networks \citep{pytorch}. All other trained models and preprocessing steps are implemented with \texttt{sklearn} \citep{Sklearn}.
In the case of the \emph{IMDB} language data we do not fit a model but retrieve a pre-trained and fine-tuned model from the \texttt{transformers} framework \citep{Wolf_Transformers_State-of-the-Art_Natural_2020}.

\paragraph{California Housing.} 

This dataset is used for a regression task, where the goal is to predict median house prices based on features such as population, median income, and location-related attributes. The following models were evaluated:

\begin{itemize}
\item \textbf{XGBoost}: Applied directly to the raw dataset without additional preprocessing, this model achieved an $R^2 = 0.8330 \pm 0.0028$ and an MSE of $0.2207 \pm 0.0014$ using default hyperparameters.
\item \textbf{Neural Network}: A fully connected neural network with four hidden layers (sizes 100, 50, 10, 1) was trained on normalized data, where all features were treated as numeric and standardized. The neural network achieved an $R^2 = 0.7461 \pm 0.0117$ and an MSE of $0.5246 \pm 0.7460$. 
\end{itemize}

\paragraph{Bike Sharing.} 
The \emph{Bike sharing} dataset is used for a regression task, predicting the total number of bike rentals based on features such as weather conditions, season, and time of day. 
The dataset was preprocessed by applying a \texttt{RobustScaler} to the numerical features to handle outliers, and ordinal encoding was applied to the categorical features \texttt{season} and \texttt{weather}.
For neural network models, the target variable was log-normalized using the transformation $y=\text{log}(y+1)$ to stabilize variance and improve model performance.
The following models were evaluated:

\begin{itemize}
\item \textbf{Random Forest}: A random forest model with 50 trees was trained on the preprocessed data, achieving an $R^2$ of $0.9471 \pm 0.0024$ and an MSE of $1\,863 \pm 103$. 
\item \textbf{Neural Network}: A fully connected neural network with four hidden layers (sizes 100, 50, 10, 1) was trained on the preprocessed data, where the categorical features were treated as numeric and standardized.
The regression target variable was additionally log-normalized.
This model achieved an $R^2$ of $0.9436 \pm 0.0016$ and an MSE of $0.3299 \pm 0.0055$.
\end{itemize}

\paragraph{Titanic.} 
The \emph{Titanic} dataset is used for a binary classification task that predicts the survival probability of passengers based on characteristics such as age, sex, class of travel, and fare.
We preprocess the data by applying an \texttt{OrdinalEncoder} to the categorical features.

\begin{itemize}
    \item \textbf{XGBoost}: The model was applied to the preprocessed data using default hyperparameters, achieving an $F_1$ score of $0.7408 \pm 0.0236$ and an accuracy (ACC) of $0.8119 \pm 0.0215$.
\end{itemize}

\paragraph{IMBD.} 
The sentiment analysis language data is based on the \emph{IMDB} dataset \citep{Maas.2011,Lhoest_Datasets_A_Community_2021} where example sentences to the model are self-created. The model is a pre-trained and fine-tuned version of \texttt{DistilBERT} \citep{Sanh.2019} on \emph{IMDB} as provided with the \texttt{transformers} \citep{Wolf_Transformers_State-of-the-Art_Natural_2020} framework. 
The model predicts the sentiment of movie reviews with a sentiment score $\left[-1,1\right]$ where values close to $-1$ denote a \textit{negative} predicted sentiment and values close to $1$ correspond to a \textit{positive} predicted sentiment.
Sentiment scores close to $0$ are considered neutral.
In our experiments, we manually remove punctuation and special symbols.
In case of b-fANOVA, we set out-of-coalition tokens to the \texttt{MASK} token of the transformer model.
Notably, the selected \texttt{DistilBERT} variant assigns sentences provided only \texttt{MASK} tokens moderately high scores of around $0.50$ (i.e., on the positive side).
We do not evaluate and report performance scores of the pre-trained model on a separate test set.

\subsection{Experimental Setup}\label{appx_sec_experiment_setup}

We compute explanations for the models and datasets described in \cref{appx_sec_real_world_models_datasets,appx_sec_synth_data_model} based on the \texttt{shapiq} \citep{Shapiq.2024} python library. 
To do so, we create three kinds of cooperative explanation games, a \emph{local explanation game} (see \cref{def_local_explanation_game}), a \emph{global risk game} (see \cref{def_global_risk_game}), and a \emph{global sensitivity game} (see \cref{def_sensitivity_game}).
Both the global risk and global sensitivity games are based on a collection of independent local explanation games.
The local explanation game consists of a model, a data point, and an imputation strategy.
The imputation strategies follow the definitions of $F_S$ in the b-fANOVA, m-fANOVA, and c-fANOVA.
To model b-fANOVA, we compute mean feature values from a background dataset for each feature independently and impute missing features with these baseline values.
For m-fANOVA, we sample replacement data points directly from a background dataset according to the joint marginal distribution.
For the synthetic datasets and known functional dependencies (see \cref{appx_sec_synth_data_model}), we compute the c-fANOVA with the ground-truth conditional distributions.
For real-world datasets and c-fANOVA, we rely on the conditional imputer implemented in the \texttt{shapiq} package \citep{DBLP:conf/aaai/MuschalikFHH24}.
For marginal and conditional imputation we average over $512$ sampled replacement data points for each game evaluation.
All imputation strategies operate on the test datasets.

\subsection{Hardware Details}\label{appx_hardware_detials}

All computations have been conducted on consumer hardware, i.e., Dell XPS 15 (Intel i7 13700H, 16GB RAM) and required only minor computation times. Overall computations took around 60 CPUh.

\clearpage
\section{FURTHER EXPERIMENTAL RESULTS}\label{appx_sec_further_exp}

This section contains additional empirical results.
\cref{appx_results_synth} contains a deep dive into the synthetic setting mentioned throughout the main body of this work (see \cref{sec_unified_framework,sec_experiments}). 
\cref{appx_results_real_world} extends on \cref{sec_experiments_real_world} and contains additional results on real-world data.

\subsection{Dissecting the Linear Models on Synthetic Data}\label{appx_results_synth}

Here, we provide more detailed explanations for the results of the synthetic experiments summarized in Section 5.1 as well as of the other two functional relationships defined in Appendix F.1. We start with explaining the individual feature effects for $F_{\text{lin}}$ which is the basis for $F_{\text{int}}$ being the model we analyzed in Section 5.1. We then show how the additionally multiplicative interaction term in $F_{\text{int}}$ influences each feature's effect. Finally, we demonstrate how individual effects for an additive (non-linear) main effect model $F_{\text{add}}$ differ from linear effects.
All explanations are compared based on the instance $x = (1,1,1,1)$.

\paragraph{Linear Main Effect Model.}
The left plot in \cref{fig_appx_synt_linear} illustrates the influence of feature distribution and of higher-order interactions for $F_{\text{lin}}$. It shows that the pure, partial, and full effects have the same value (main effect) for all features for b- and m-fANOVA. This indicates that there are no feature interactions present between any of the features. Since both decompositions break the dependencies between the feature of interest and the remaining features in the dataset and due to a purely linear additive relationship of the features, feature dependencies do not influence the individual effects. Feature $x_4$ does not have an influence on the model and thus has an individual effect of zero when b- or m-fANOVA are used. This changes for c-fANOVA since it accounts for cross-correlation effects with other influential features. Therefore, the pure individual effects of c-fANOVA are the same as the ones of b- and m-fANOVA for the uncorrelated setting, but adds to each individual effect the effect of the remaining features by the factor $\rho$ in the correlated settings. Hence, the interpretation of the pure effect with c-fANOVA quantifies how much of the overall feature effects we account for if we add the feature of interest to an empty model while considering the underlying feature distribution. It follows that the full effect for $\rho = 0.9$ is almost zero for all features since the other three features have already been added to the model and account for the effects of the remaining feature through cross-correlation effects. 

\paragraph{Linear Model with Main and Interaction Effects.}
The right plot in \cref{fig_appx_synt_linear} is a reprint of the plot shown in Section 5.1 for the model $F_{\text{int}}$. It shows that pure and partial m-fANOVA effects of $x_3$ are affected by $\rho$. This effect is caused by the interaction effect between $x_1$ and $x_2$ which influences the joint marginal distribution of $x_{-3}$ by their covariance and hence influences the pure and partial individual effects of the feature $x_3$. The partial and full plots of b- and m-fANOVA clearly show that interactions are present between the first three features with a higher interaction between $x_1$ and $x_2$. C-fANOVA again accounts for cross-correlation effects and thus shows the same pattern for pure effects as $F_{\text{lin}}$, while indicating learned feature interactions for partial and full effects.

\begin{figure}[htb]
    \centering
    \includegraphics[width=0.49\linewidth]{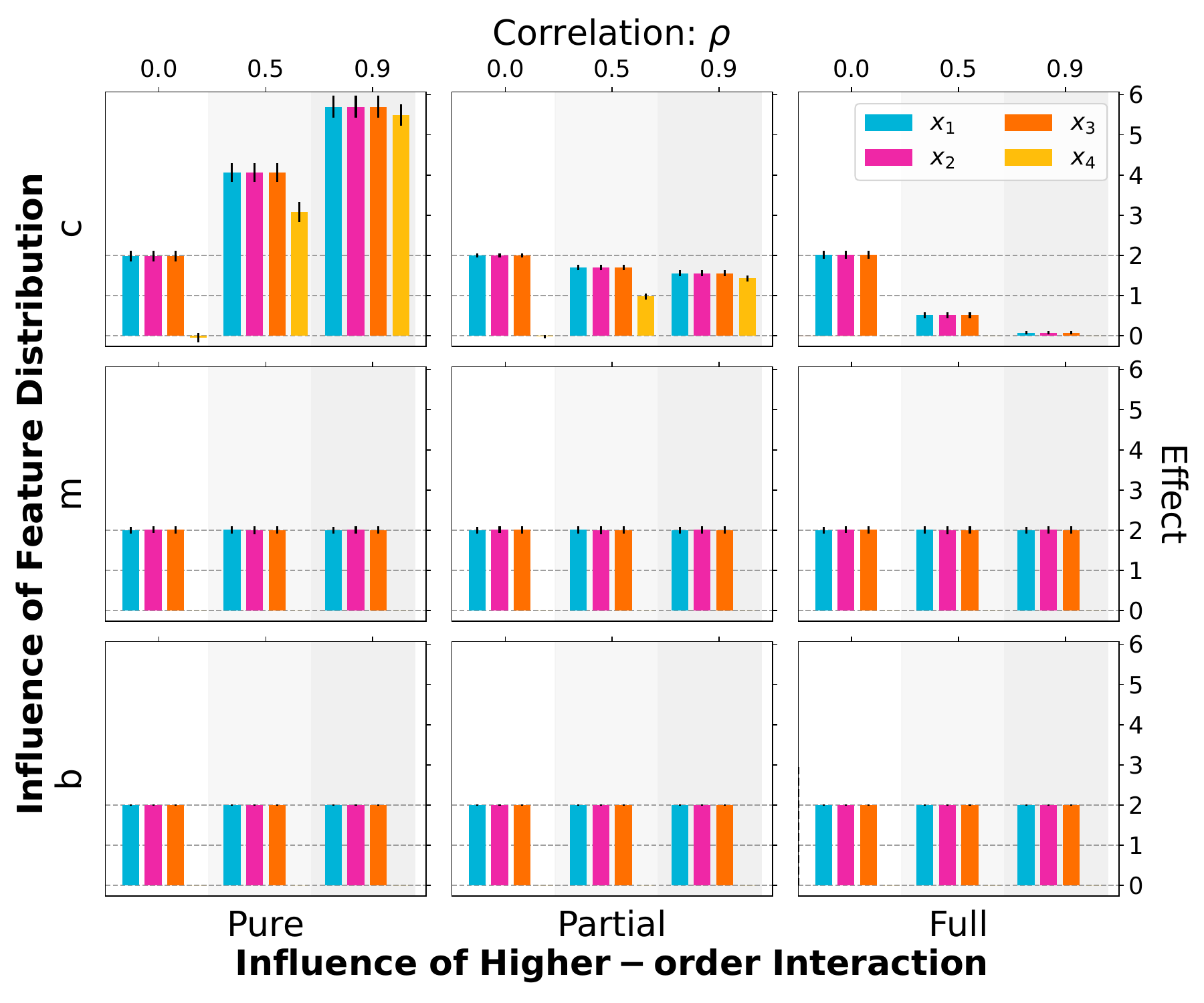}
    \includegraphics[width=0.49\linewidth]{explanations_all_linear_interaction_12x10.pdf}
    \caption{Individual local explanations for the instance $x = (1,1,1,1)$ and the linear model (left) with the ground-truth functional relationship $F_{\text{lin}}(x) = 2x_1 + 2 x_2 + 2 x_3$ and the linear model including interactions (right) with the ground-truth functional relationship $F_{\text{int}}(x) = 2x_1 + 2 x_2 + 2 x_3 + x_1x_2 + x_1x_2x_3$ averaged over $30$ repetitions of varying random seeds (fluctuation is shown by error bars).}
    \label{fig_appx_synt_linear}
\end{figure}

\paragraph{Additive Main Effect Model.}
\cref{fig_appx_synt_non_linear_interactions} shows the local effects for $F_{\text{add}}$. Comparable to $F_{\text{lin}}$, b-fANOVA is neither influenced by the feature distribution ($\rho$) nor by the higher-order interactions. While m-fANOVA is equivalent to b-fANOVA for $F_{\text{lin}}$, it differs for the still additive relationship, when effects are non-linear: The effect of $x_2$ for the instance $x = (1,1,1,1)$ becomes $0$ since $f_1(x) = \mathbb{E}[F_{\text{add}}(x_2, X_{-2})] - \mathbb{E}[F(X)] = x_2 + \mathbb{E}[X_1 + X_3^3] - \mathbb{E}[X_1 + X_2^2 + X_3^3] = 0$. However, as for b-fANOVA, pure, partial, and full effects are the same and thus indicating that no feature interactions have been learned by the model. Due to accounting for cross-correlation effects, $x_4$ shows again an influence for pure and partial effects, when c-fANOVA is used. Moreover, full effects with c-fANOVA show a similar vanishing trend as for $F_{\text{lin}}$ and $F_{\text{int}}$.

\begin{figure}[htb]
    \centering
    \includegraphics[width=0.49\linewidth]{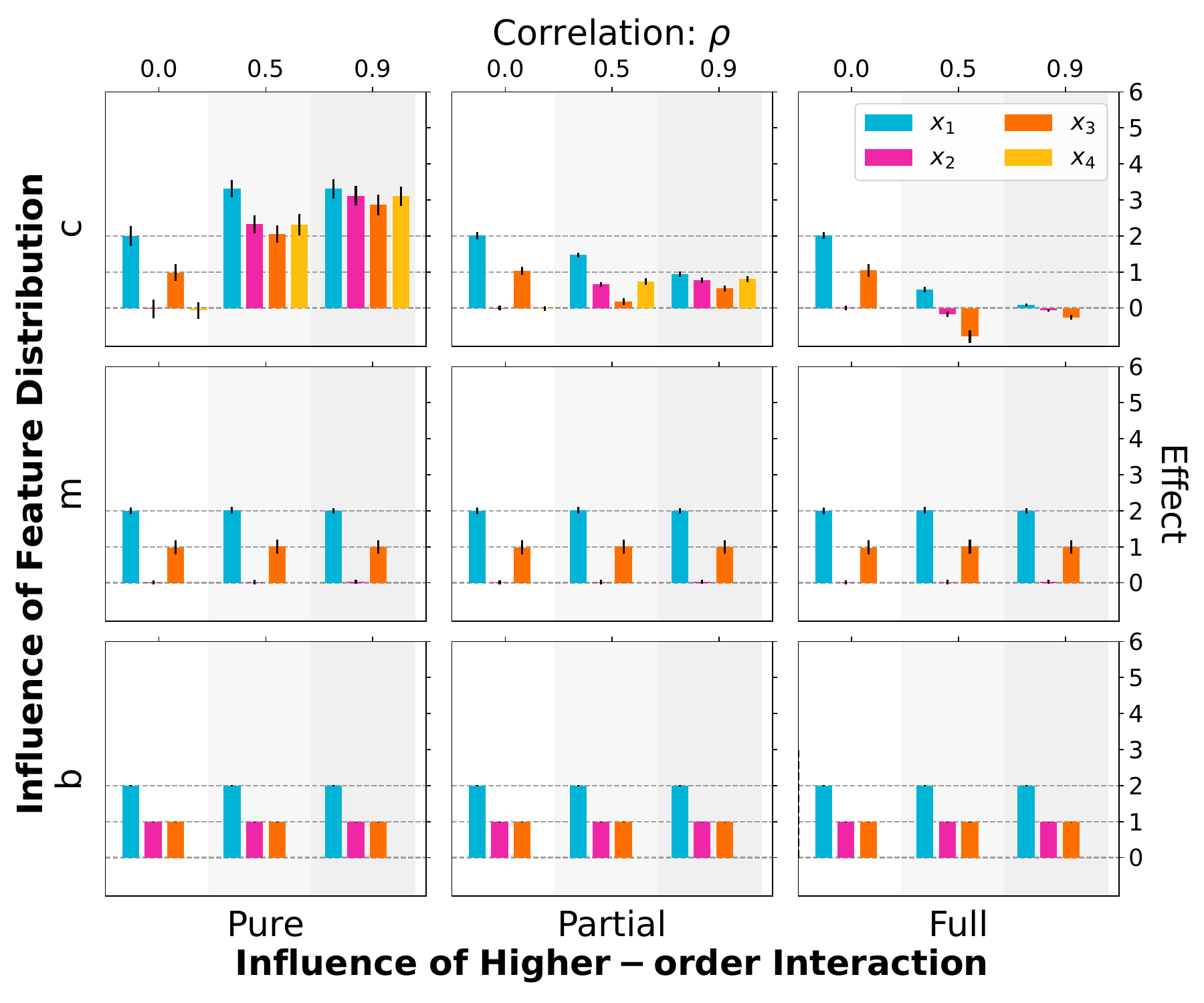}
    \caption{Individual local explanations for the instance $x = (1,1,1,1)$ and the additive main effect model (left) with the ground-truth functional relationship $F_{\text{add}}(x) = 2x_1 + x_2^2 + x_3^3$ averaged over $30$ repetitions of varying random seeds (fluctuation is shown by error bars).}
    \label{fig_appx_synt_non_linear_interactions}
\end{figure}

\subsection{Additional Explanations on Real-World Data}\label{appx_results_real_world}

This section describes additional explanation results on real-world benchmark data as shown in \cref{fig_appendix_titanic_xgb_local,fig_appendix_calif_nn,fig_appendix_bike_xgb_global,fig_appendix_titanic_xgb_sensitivity}.

\paragraph{Titanic Data - Local Explanation Game.}
\cref{fig_appendix_titanic_xgb_local} shows a series of local explanations with pure (a), partial (b), and full (c) higher-order interaction influence for the \emph{Titanic} dataset. 
The local instance under analysis was a 49 year old man who embarked from Cherbourg, holding an upper-class ticket with a relatively high fare price.
The XGBoost model correctly predicts that he survived the disaster.
The explanations indicate that his \emph{Sex} (male) and \emph{Age} (49 years) decrease his chances of survival.
In contrast, his high fare price (\emph{Fa}), the port of embarkation (\emph{Em}), and his upper-class ticket substantially enhance his survival probability.
This finding aligns with descriptive analyses of the \emph{Titanic} dataset, which reveal that passengers who embarked from Cherbourg had the highest survival rate of $0.56$ in comparison to Queenstown with $0.38$ and $0.33$ for Southampton.
Notably, the model also highlights increased survival odds due to interactions between \emph{Em} and \emph{Fa}, as well as between \emph{Em} and \emph{Age}.
Both of these second-order interactions are observed across all three higher-order interaction influences (a-c) depicted in \cref{fig_appendix_titanic_xgb_local}.

\paragraph{Titanic Data - Global Sensitivity Game.}
\cref{fig_appendix_titanic_xgb_sensitivity} displays pure (a), partial (b), and full (c) effects, reported as an individual (top row), interaction (middle row), and joint (bottom row) effect for the global sensitivity game for an XGBoost model fitted on the \emph{Titanic} dataset. From the individual explanations it can be observed that the variables \emph{Sex} and \emph{Pc} (ticket class) are most important for the prediction. 
Moreover, the \emph{Pc}, \emph{Fa} (passenger fare), and \emph{Em} (Port of Embarkation) features have relatively high full effects, indicating the presence of interactions.
In fact, from the interactions up to order $2$ (middle row) it can be observed that there are synergies (positive two-way interactions) between \emph{Em} and \emph{Sex}, as well as between \emph{Pc} and \emph{Sex}.
Moreover, partial and full effects indicate that \emph{Fa} is involved in higher-order interactions (order $>2$).
Lastly, the joint effects are very homogeneous, due to the dominating individual effect of the feature \emph{Sex}, which is accounted for in all pairwise joint effects that involve this feature.

\paragraph{California Housing Data - Global Risk Game.}
\cref{fig_appendix_calif_nn} displays global feature importance in terms of the global risk game for a neural network trained on the \emph{California housing} dataset.
We report pure, partial (SAGE) and full (PFI) importances for individuals and second-order interactions.
Looking at individual importances (top row), reveals that the features \emph{la} (latitude of property) and \emph{lo} (longitude of property) are strongly present in higher-order interactions.
In fact, the pure importances appear negative, which is counter-intuitive. Due to the marginal imputation the model predicts worse, if only one of these features is known.
As a consequence, pure effects of the risk game for complex models should be interpreted with care, if m-fANOVA or b-fANOVA is used (thereby breaking dependencies).
From the interactions (bottom row), we observe a strong positive pure effect between \emph{la} and \emph{lo} indicating that the exact location of the property is important to the model's performance.

\paragraph{Bike Sharing Data - Global Risk Game.}
\cref{fig_appendix_bike_xgb_global} shows pure, partial, and full effects for the global risk game and the \emph{Bike sharing} dataset using an XGBoost regression model.
The partial effect (middle) for individuals (top row) corresponds thereby to SAGE, whereas the full effect (right) reported as individual importance (top row) to PFI.
It can be observed that the feature \emph{ho} (hour of the day) is most influential for the performance.
Observing differences in pure and full effects, it is apparent that the \emph{wo} (working day) feature increases its importance indicating that it is involved in higher-order interactions.
In fact, investigating second-order interactions, a positive interaction (synergy) of \emph{ho} and \emph{wo} is observed.
In general, the differences between pure, partial and full effects are mostly minor indicating the absence of strong interaction effects.

\paragraph{IMDB - Local Explanation Game.}
The local explanation game for the \emph{IMDB} movie review dataset is based on \cite{sundararajan2020shapley}, \cite{tsai_faith-shap_2022}, and \cite{DBLP:conf/aaai/MuschalikFHH24} and we rely on b-fANOVA with the \texttt{MASK} token as baseline. The sentiment scores are within $\left[-1,1\right]$ where values close to $-1$ denote a \textit{negative} predicted sentiment and values close to $1$ correspond to a \textit{positive} predicted sentiment.
Sentiment scores close to $0$ are considered neutral.
The target sentence ``\textit{The acting was bad, but the movie enjoyable}'' receives a positive predicted sentiment of approximately $0.8117$, and the baseline prediction is also rather positive around $0.5361$.
\cref{fig_appendix_language_local} depicts pure (left), partial (middle), and full (right) individual effects (upper row) and interactions up to the second order (lower row) with 2-SVs (2-SII) for the partial effect.
All three effects show strong differences, which indicates the presence of higher-order interactions.
The pure individual effects highlight the positive influence of ``enjoyable'', whereas ``bad'' and ``but'' obtain negative influences.
Interestingly, ``acting" also receives negative influence, which possibly indicates a bias in the prediction.
The partial individual effect (SV) shows a stronger positive influence for "enjoyable'', and a weaker negative influence for ``acting'' and ``bad''.
In contrast, ``but'' now receives a positive partial influence, which indicates its involvement in higher-order interactions.
In fact, ``but'' and ``enjoyable'' receives a strong positive interaction, as well as ``bad'' and ``but''.
Lastly, full effects show predominantly positive influences, which indicates the presence of many positive higher-order interactions.

\begin{figure}
    \centering
    \hfill
    \begin{minipage}[c]{0.32\textwidth}
    \centering
    \phantom{\textbf{Pure, Order 1}}
    \\
    \includegraphics[width=\textwidth]{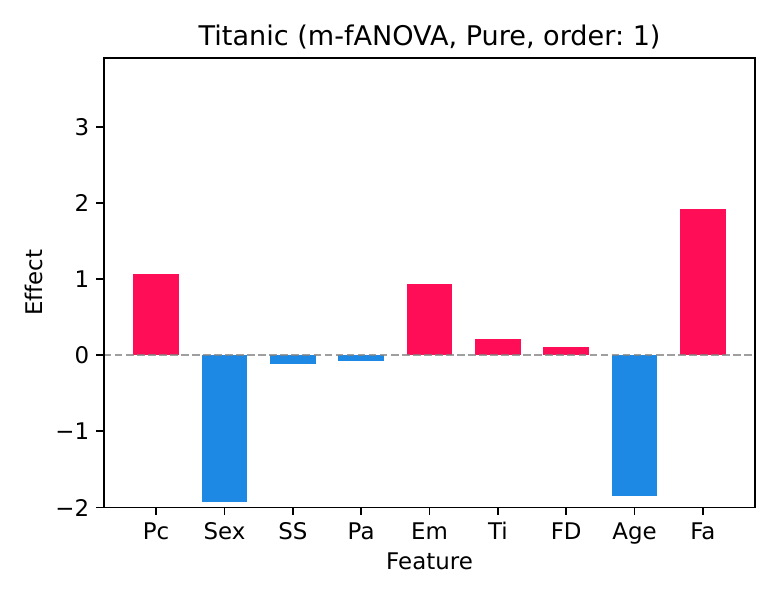}
    \\
    \includegraphics[width=\textwidth]{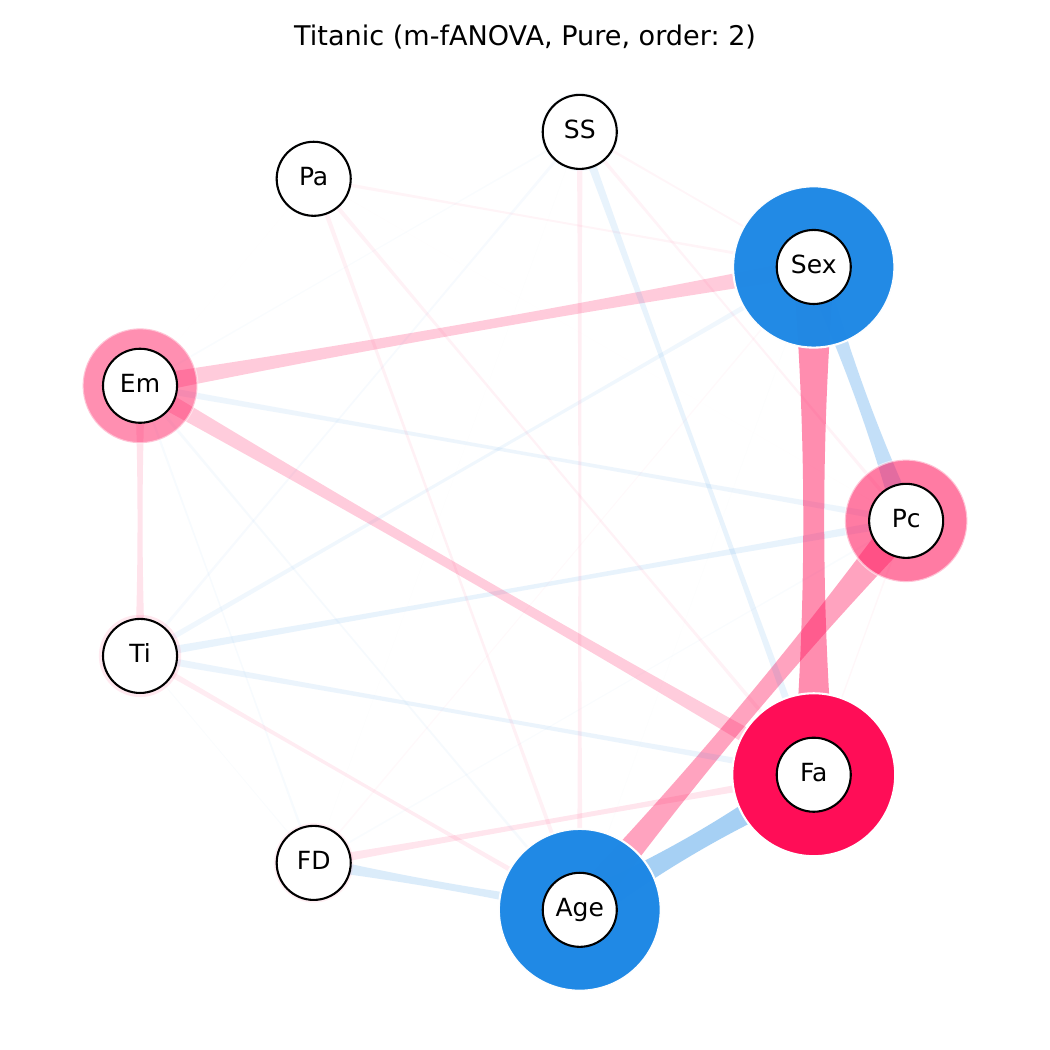}
    \\[1em]
    \textbf{(a) Pure}
    \end{minipage}
    \hfill
    \begin{minipage}[c]{0.32\textwidth}
    \centering
    \textbf{SHAP}
    \\
     \includegraphics[width=\textwidth]{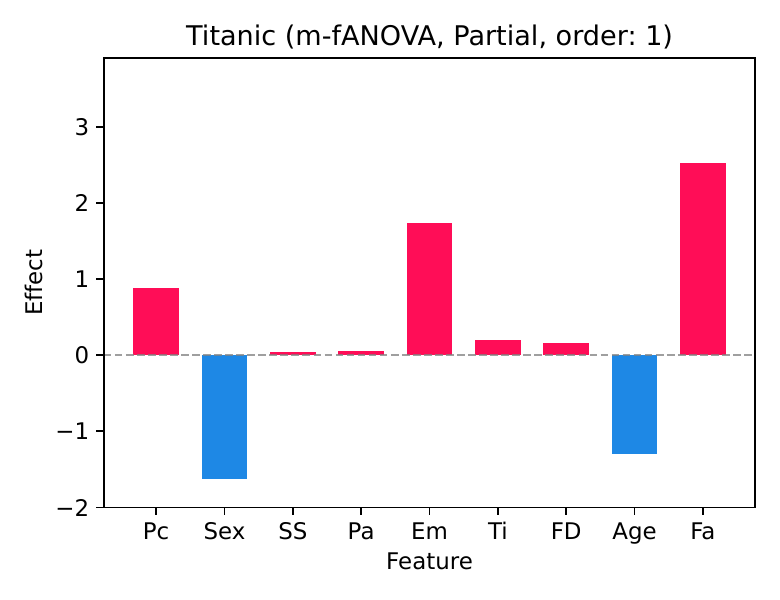}
    \\
    \includegraphics[width=\textwidth]{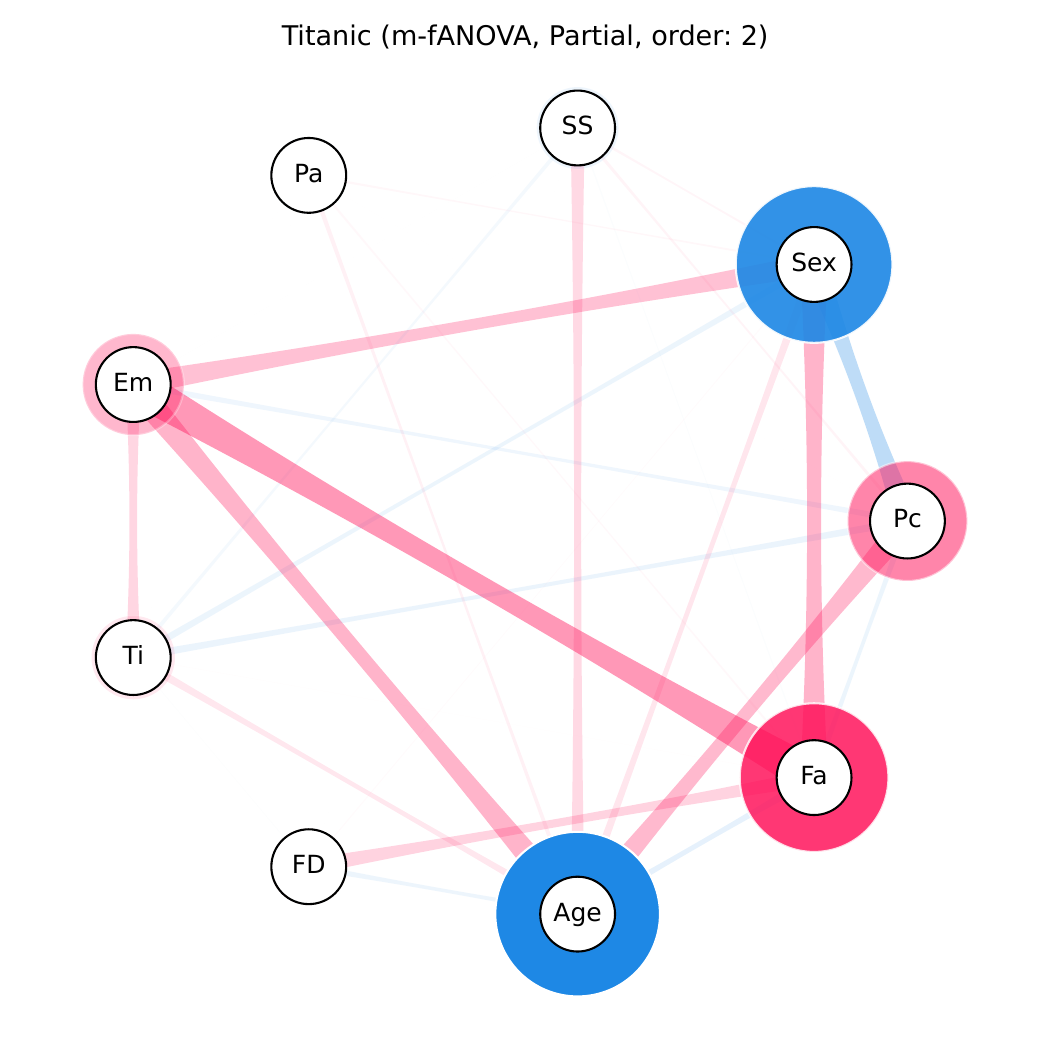}
    \\[1em]
    \textbf{(b) Partial}
    \end{minipage}
    \hfill
    \begin{minipage}[c]{0.32\textwidth}
    \centering
    \phantom{\textbf{Pure, Order 1}}
    \\
    \includegraphics[width=\textwidth]{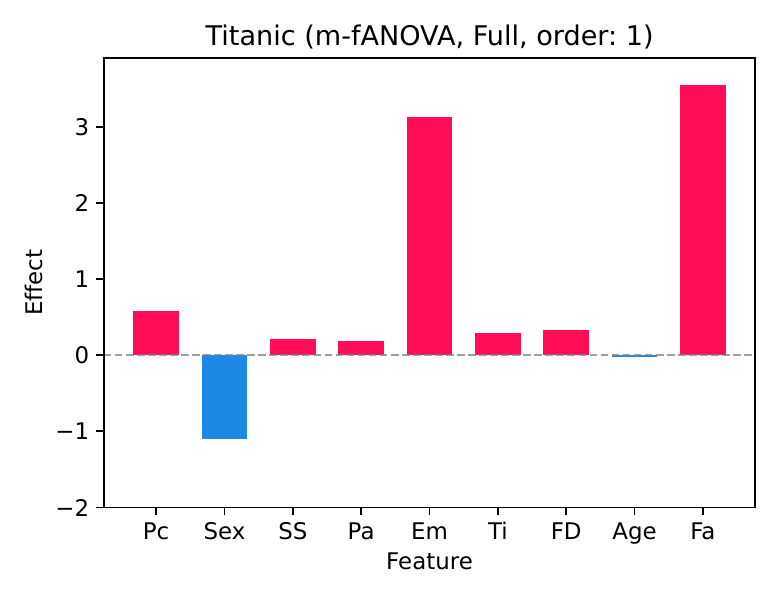}
    \\
    \includegraphics[width=\textwidth]{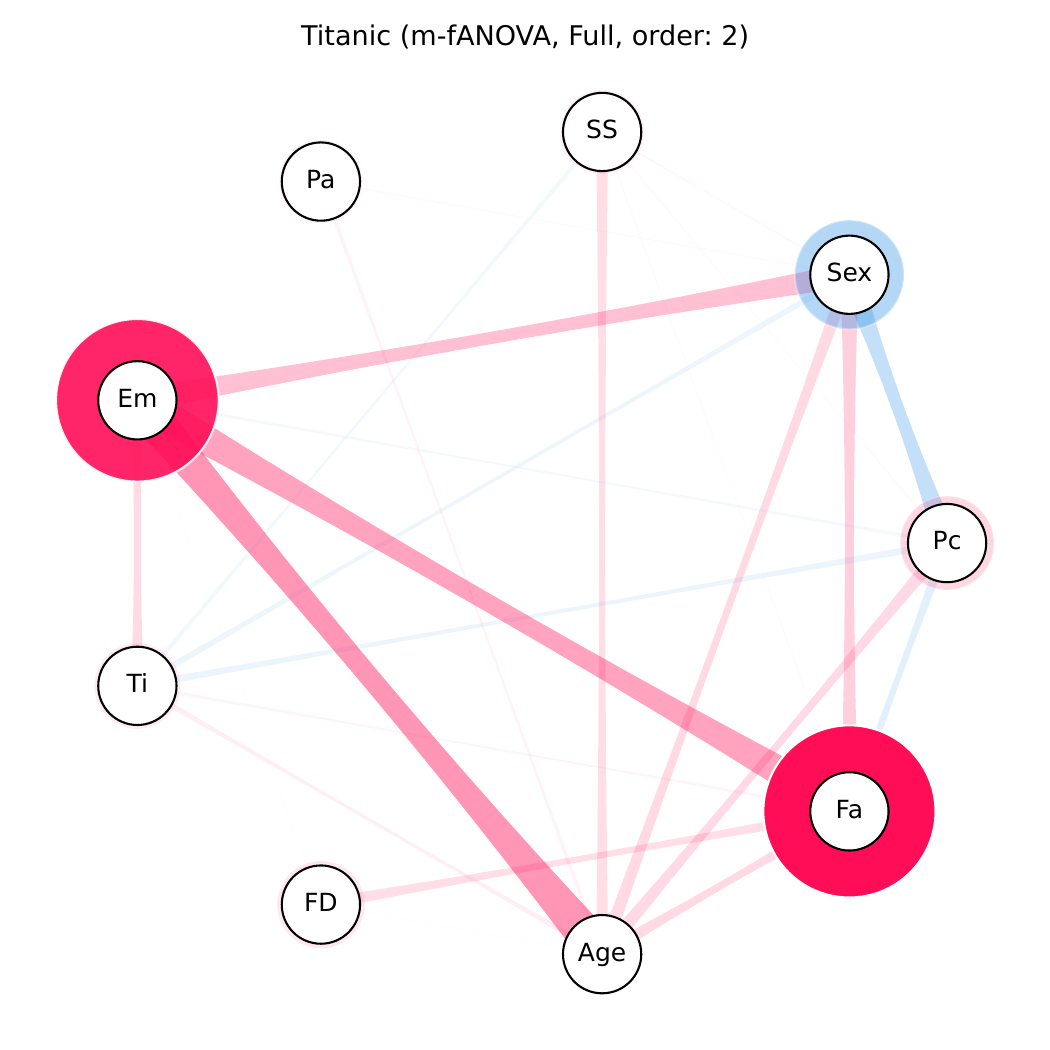}
    \\[1em]
    \textbf{(c) Full}
    \end{minipage}
    \vspace{1.5em}
    \caption{\textbf{Local Explanation Game:} Pure (a), partial (b), and full (c) feature influences for an XGBoost model fitted on \emph{Titanic}. The first row depicts the individual explanations where (b) corresponds to interventional SHAP. The second row shows the pure, partial (2-\glspl{SV}, 2-SII) and full interactions up to the second order. The local instance, which is explained has the following feature values: : $x_{\text{Sex}} = \text{male}$, $x_{\text{Age}} = 49$, $x_{\text{SibSp}} = 1$, $x_{\text{Parch}} = 0$, $x_{\text{Fare}} = 56.9292$, $x_{\text{Embarked}} = \text{C}$, $x_{\text{Title}} = \text{Rare Title}$, and $x_{\text{FsizeD}} = \text{small}$.}
    \label{fig_appendix_titanic_xgb_local}
\end{figure}

\begin{figure}
    \centering
    \hfill
    \begin{minipage}[c]{0.32\textwidth}
    \centering
    \phantom{\textbf{Pure, Order 1}}
    \\
    \includegraphics[width=\textwidth]{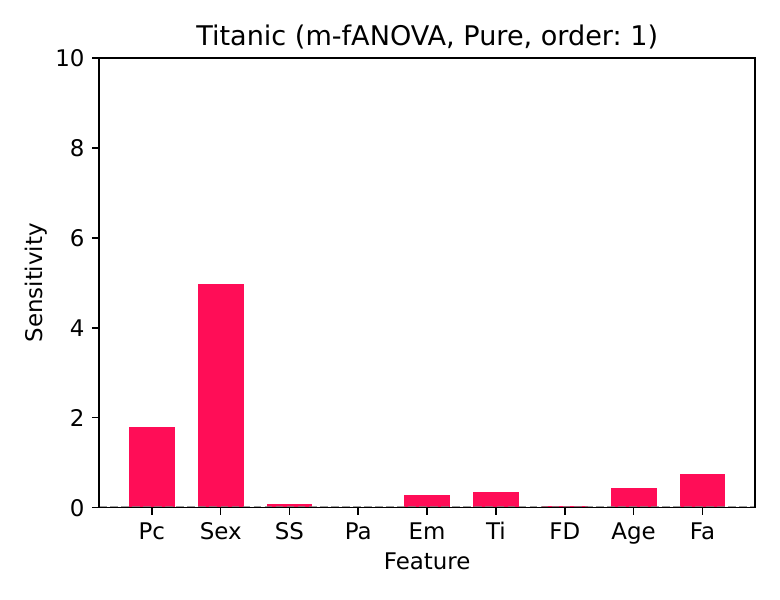}
    \\
    \includegraphics[width=\textwidth]{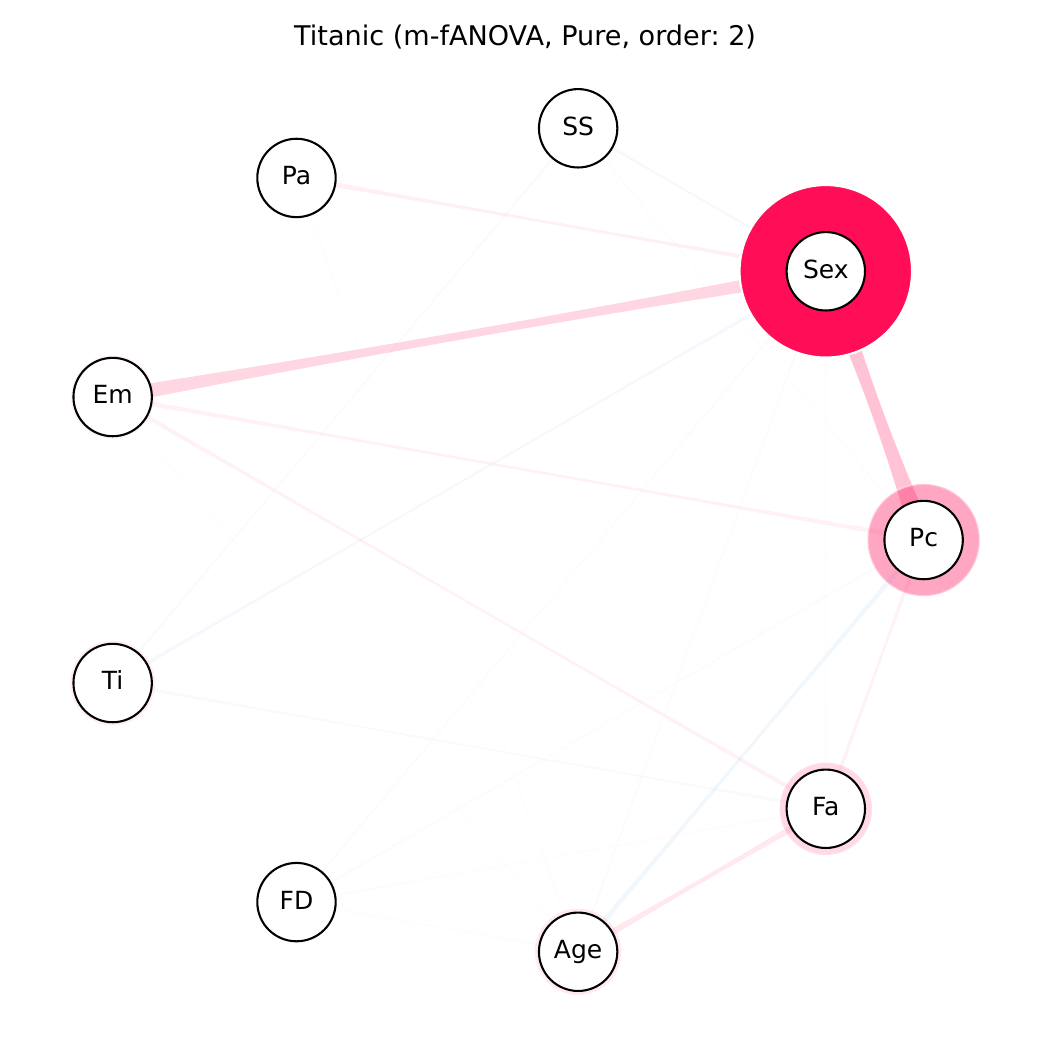}
    \\
    \includegraphics[width=\textwidth]{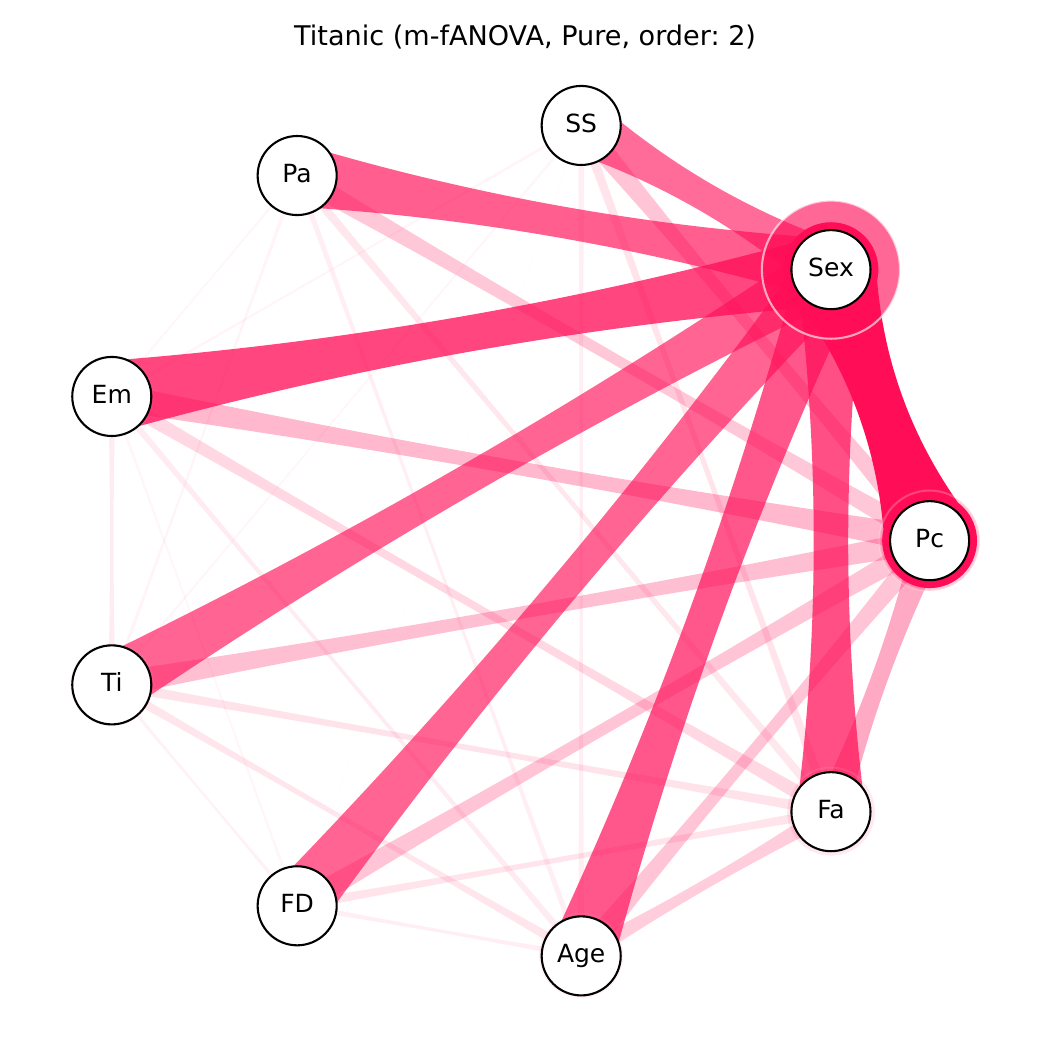}
    \\[1em]
    \textbf{(a) Pure}
    \end{minipage}
    \hfill
    \begin{minipage}[c]{0.32\textwidth}
    \centering
    \phantom{\textbf{S}}
    \\
     \includegraphics[width=\textwidth]{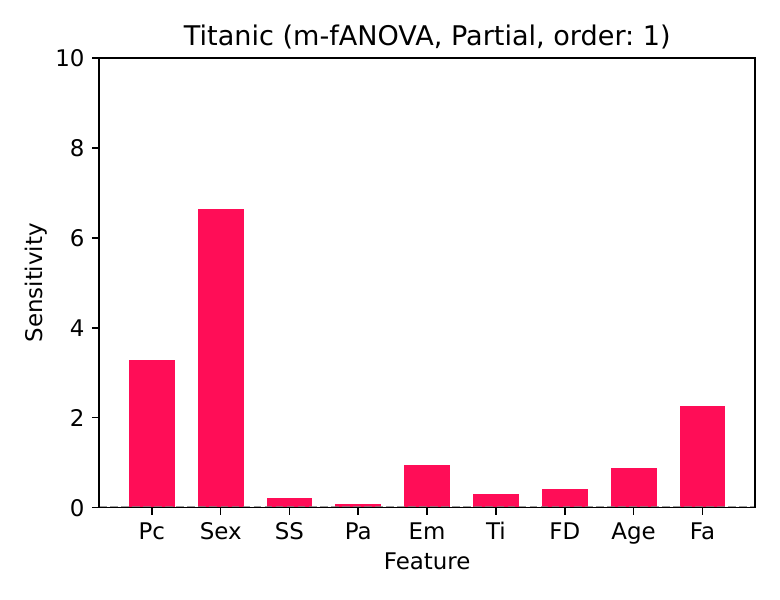}
    \\
    \includegraphics[width=\textwidth]{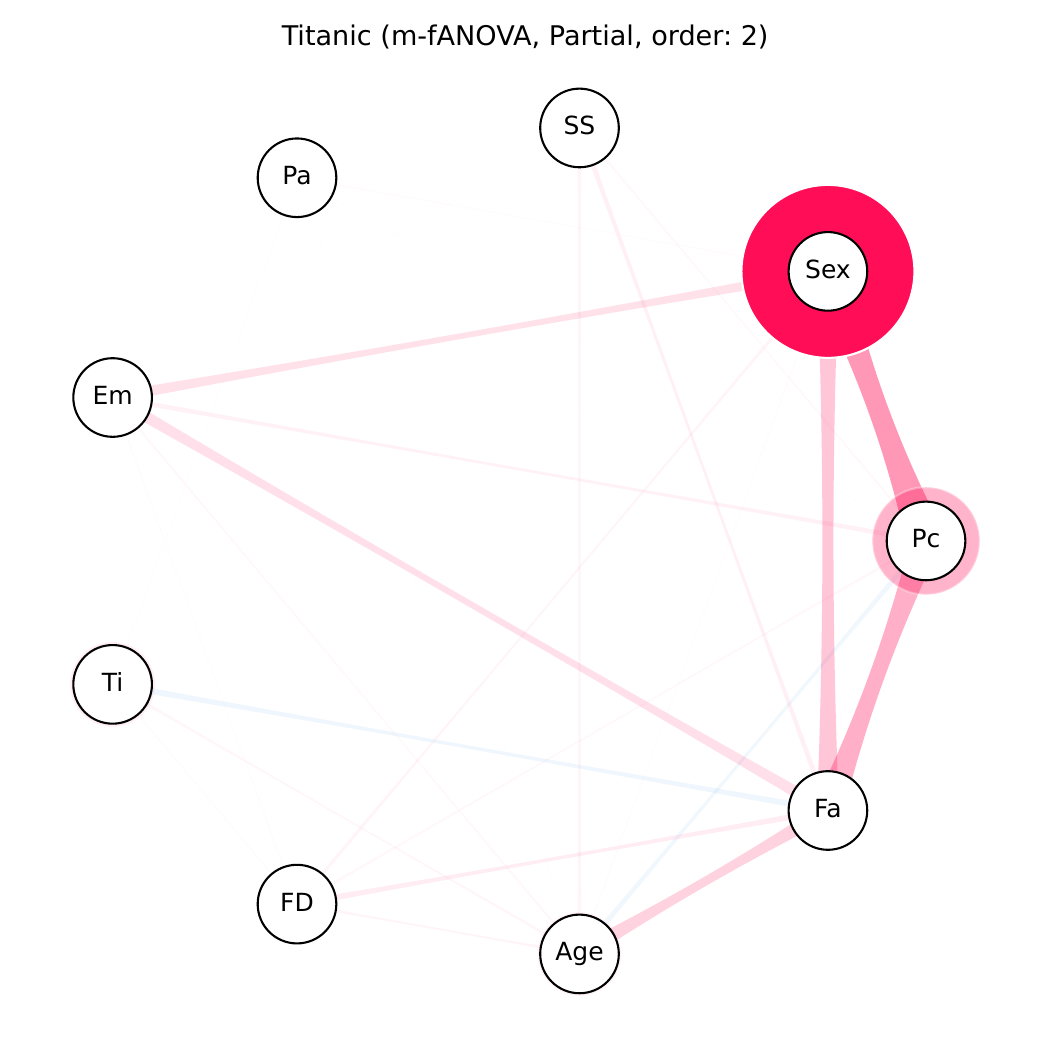}
    \\   
    \includegraphics[width=\textwidth]{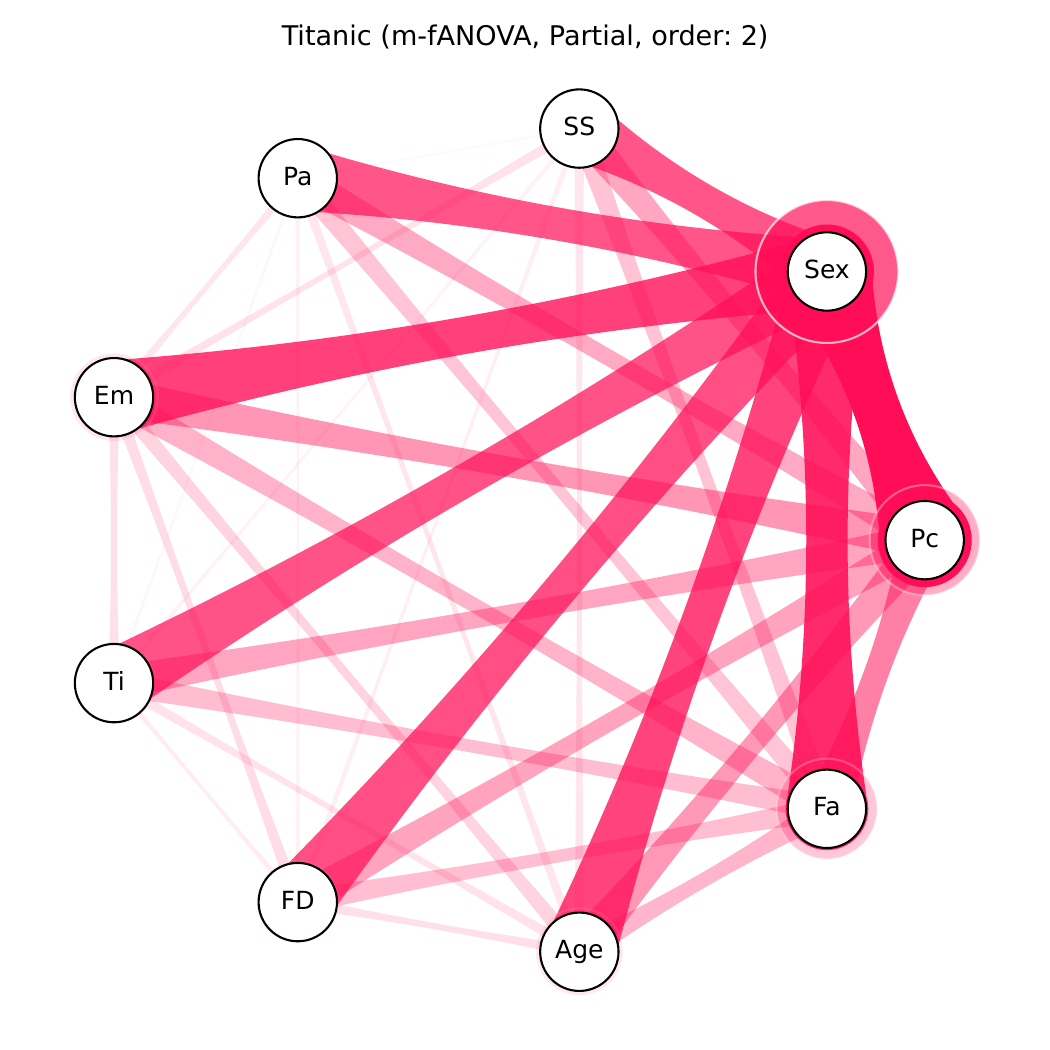}
    \\[1em]
    \textbf{(b) Partial}
    \end{minipage}
    \hfill
    \begin{minipage}[c]{0.32\textwidth}
    \centering
    \phantom{\textbf{Superset Measure $\Upsilon^2$}}
    \\
    \includegraphics[width=\textwidth]{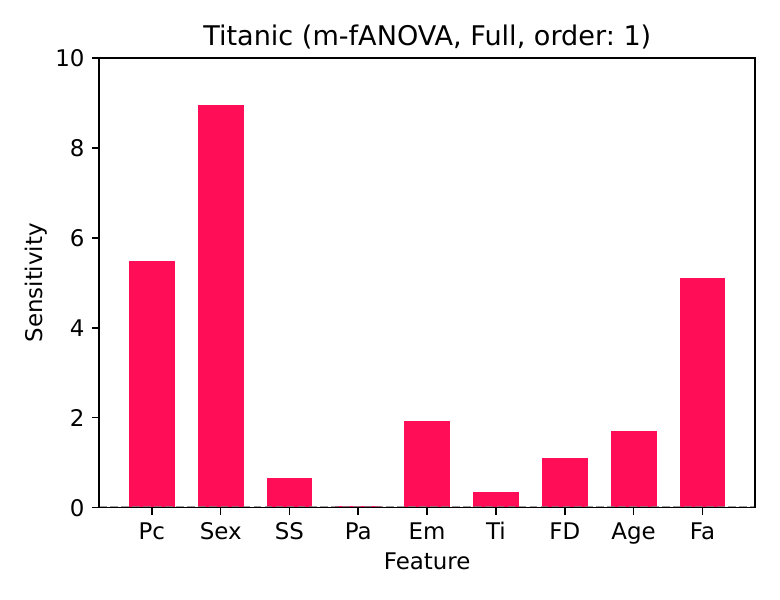}
    \\
    \includegraphics[width=\textwidth]{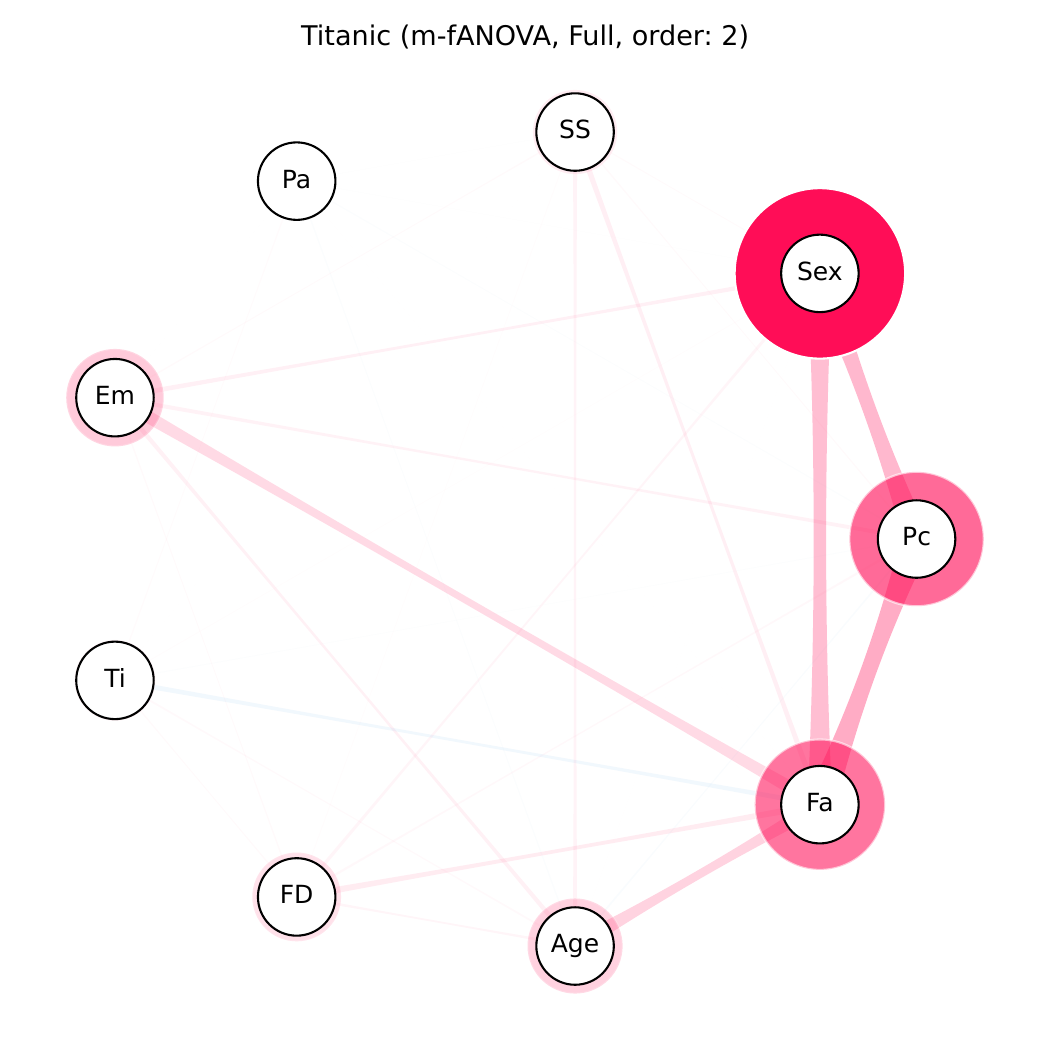}
    \\
    \includegraphics[width=\textwidth]{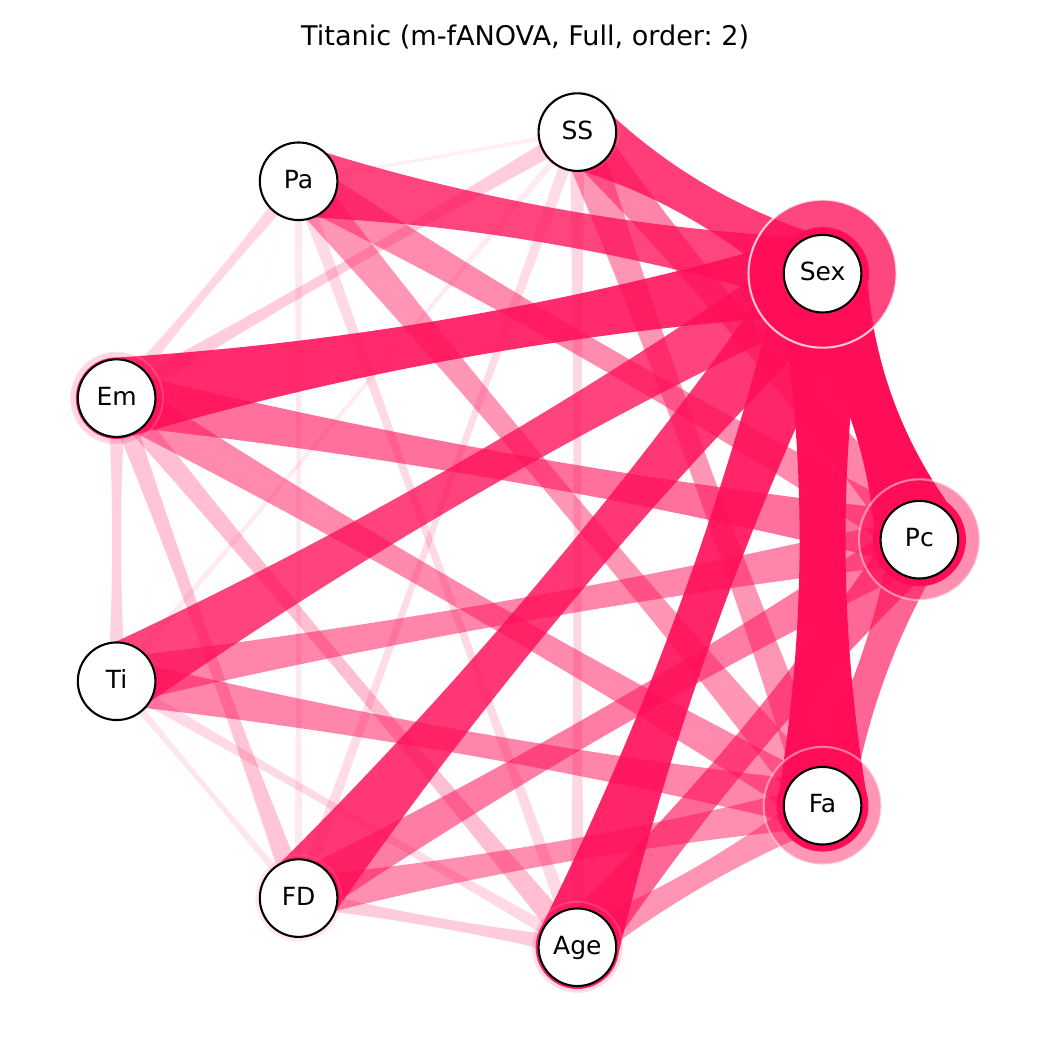}
    \\[1em]
    \textbf{(c) Full}
    \end{minipage}
    \vspace{1.5em}
    \caption{\textbf{Global Sensitivity Game:} Pure (a), partial (b), and full (c) feature influences for an XGBoost model fitted on \emph{Titanic}. The first row depicts the individual explanations, the second row interactions, and the third row the joint effects up to order $2$. The column (c) corresponds to the superset measure $\Upsilon^2$. This global risk game is based on 500 randomly drawn local explanation games from the test dataset.}
    \label{fig_appendix_titanic_xgb_sensitivity}
\end{figure}

\begin{figure}
    \centering
    \hfill
    \begin{minipage}[c]{0.32\textwidth}
    \centering
    \phantom{\textbf{Pure, Order 1}}
    \\
    \includegraphics[width=\textwidth]{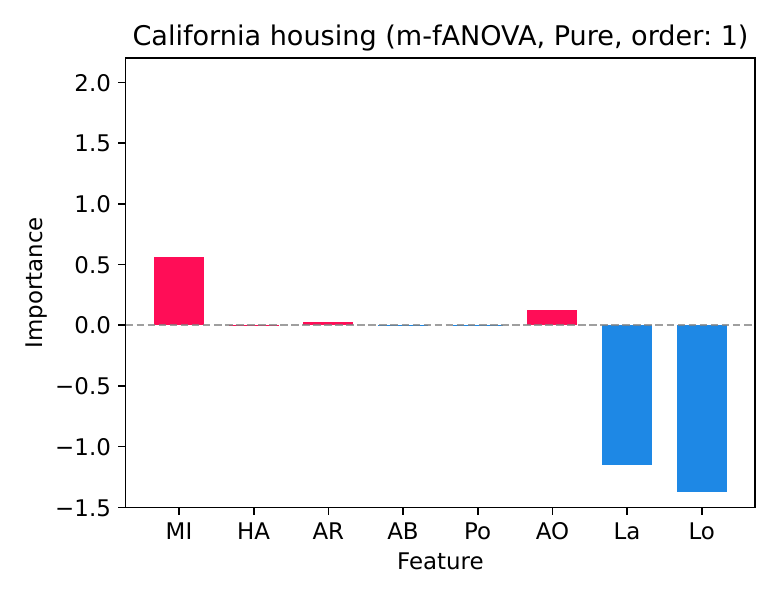}
    \\
    \includegraphics[width=\textwidth]{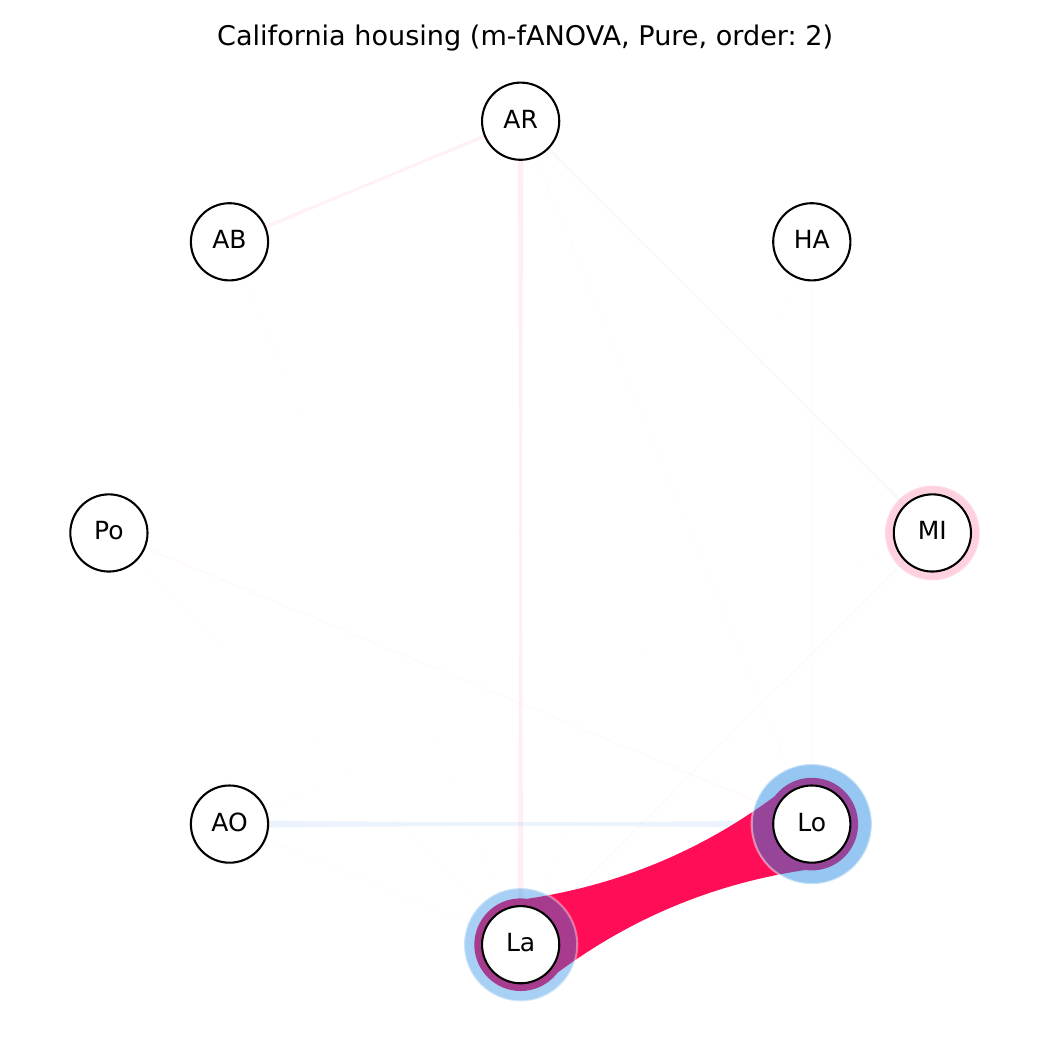}
    \\[1em]
    \textbf{(a) Pure}
    \end{minipage}
    \hfill
    \begin{minipage}[c]{0.32\textwidth}
    \centering
    \textbf{SAGE}
    \\
     \includegraphics[width=\textwidth]{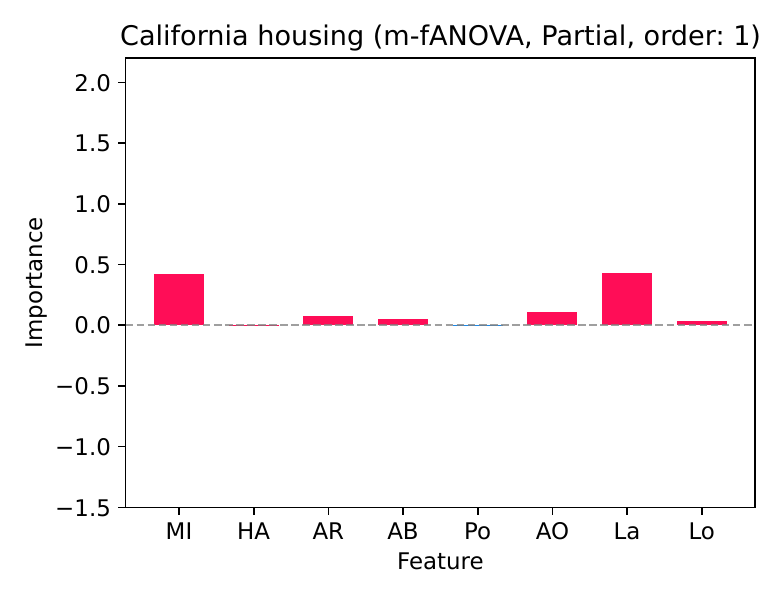}
    \\
    \includegraphics[width=\textwidth]{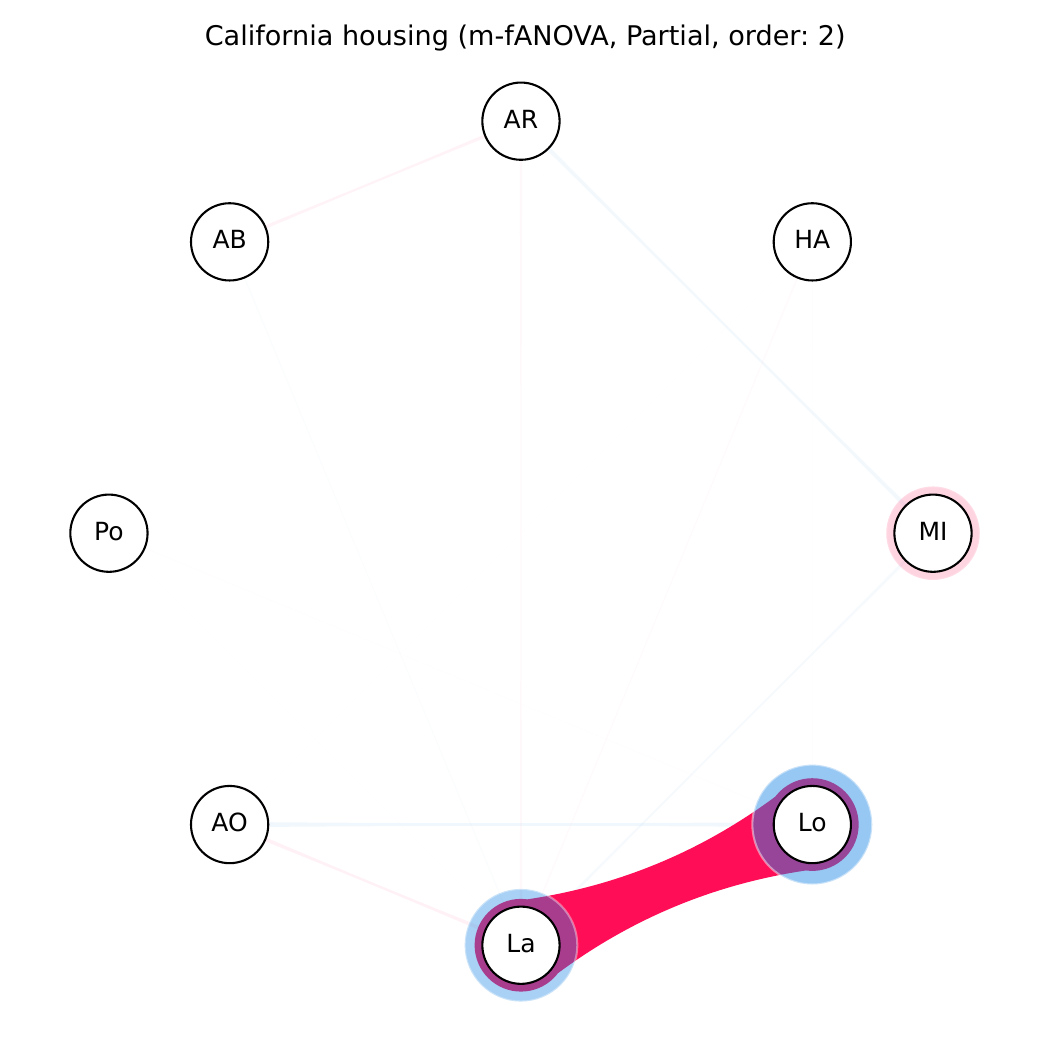}
    \\[1em]
    \textbf{(b) Partial}
    \end{minipage}
    \hfill
    \begin{minipage}[c]{0.32\textwidth}
    \centering
    \textbf{PFI}
    \\
    \includegraphics[width=\textwidth]{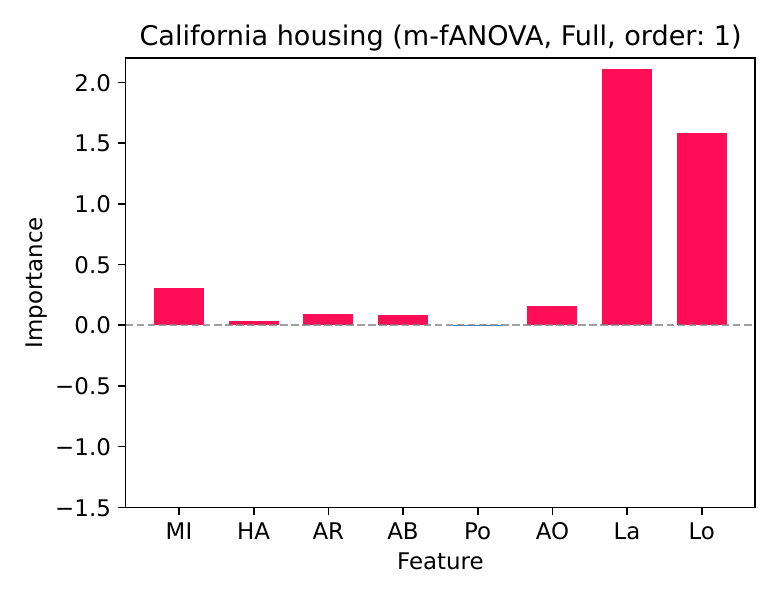}
    \\
    \includegraphics[width=\textwidth]{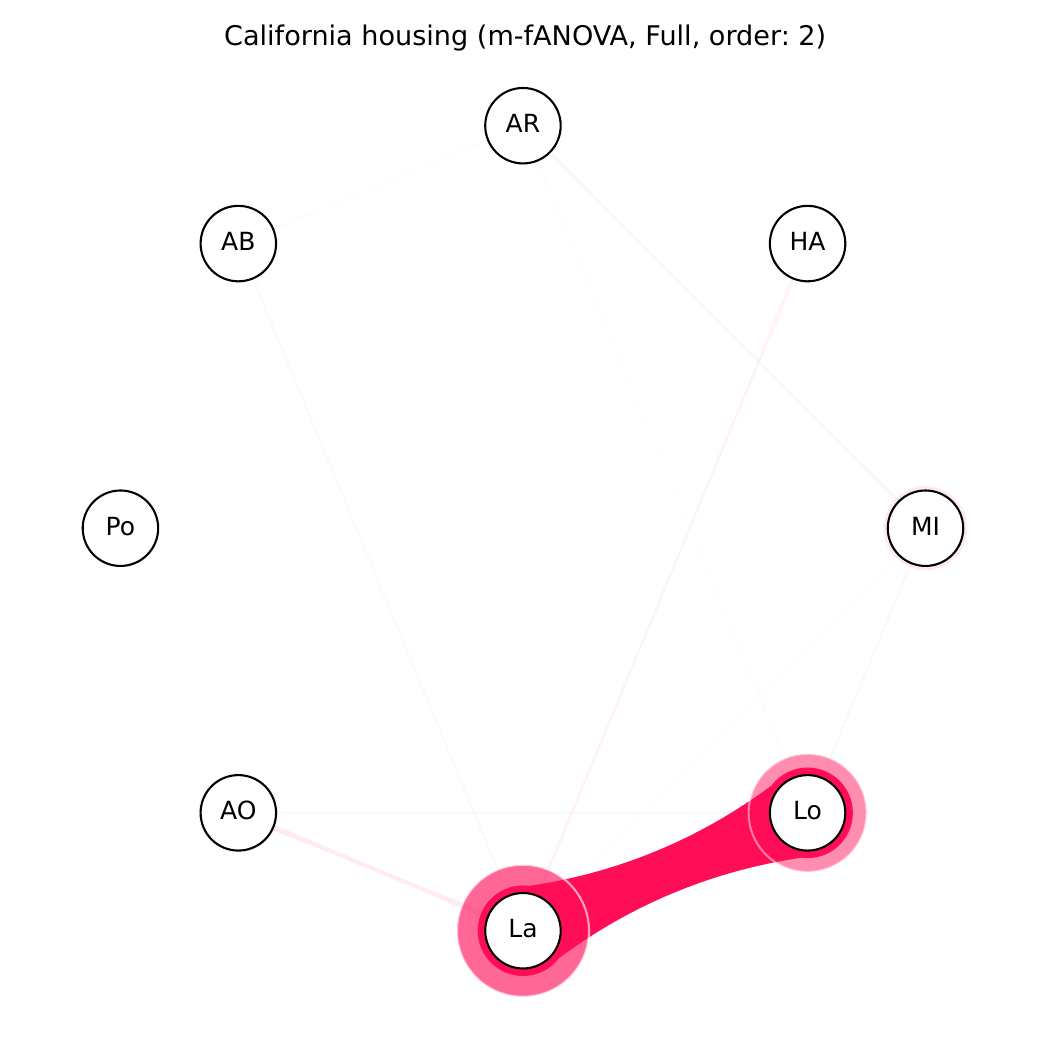}
    \\[1em]
    \textbf{(c) Full}
    \end{minipage}
    \vspace{1.5em}
    \caption{\textbf{Global Risk Game:} Pure (a), partial (b), and full (c) feature influences for a neural network fitted on \emph{California housing}. The first row depicts the individual explanations where (b) corresponds to SAGE and (c) corresponds to PFI. The second row shows the pure, partial (2-\glspl{SV},2-SII) and full interactions up to the second-order. Importance is defined according to the global risk game by the change in performance measured by the mean squared error. This global risk game is based on 500 randomly drawn local explanation games from the test dataset.}
    \label{fig_appendix_calif_nn}
\end{figure}

\begin{figure}
    \centering
    \hfill
    \begin{minipage}[c]{0.32\textwidth}
    \centering
    \phantom{\textbf{Pure, Order 1}}
    \\
    \includegraphics[width=\textwidth]{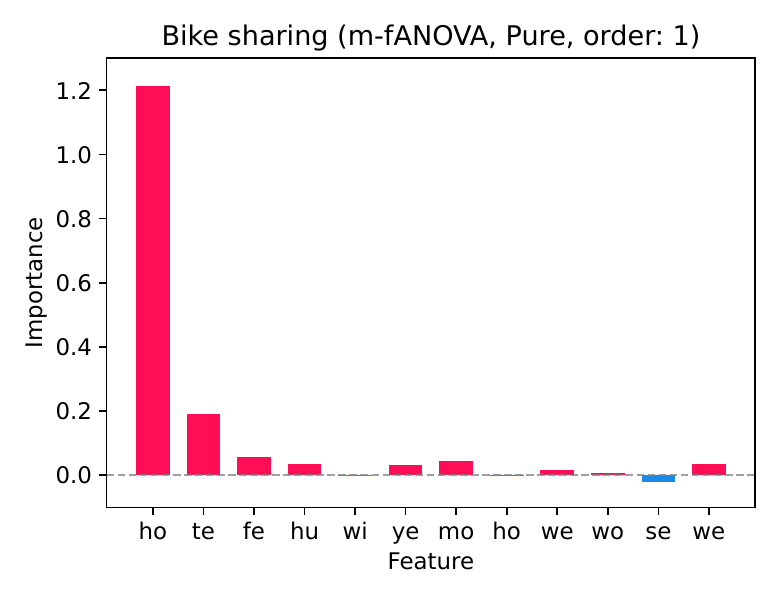}
    \\
    \includegraphics[width=\textwidth]{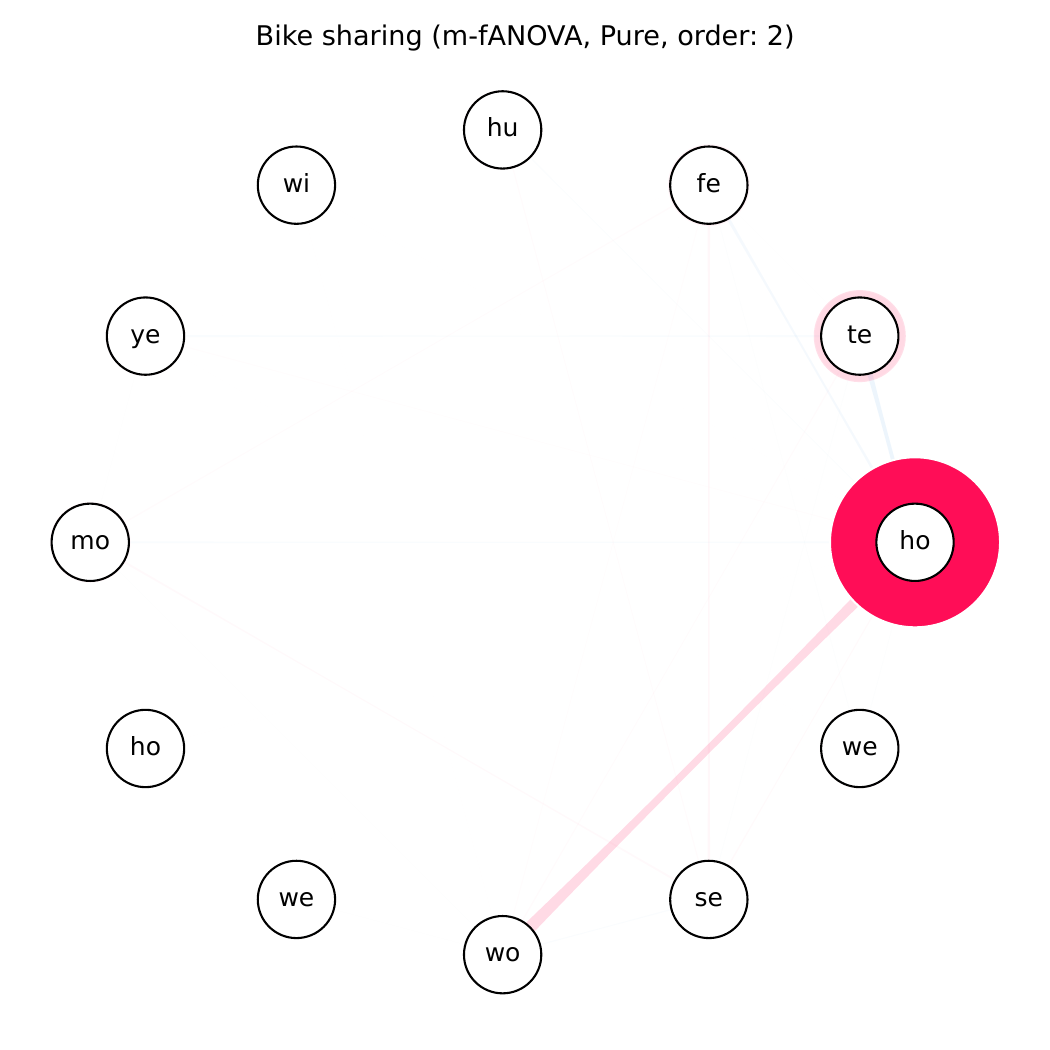}
    \\[1em]
    \textbf{(a) Pure}
    \end{minipage}
    \hfill
    \begin{minipage}[c]{0.32\textwidth}
    \centering
    \textbf{SAGE}
    \\
     \includegraphics[width=\textwidth]{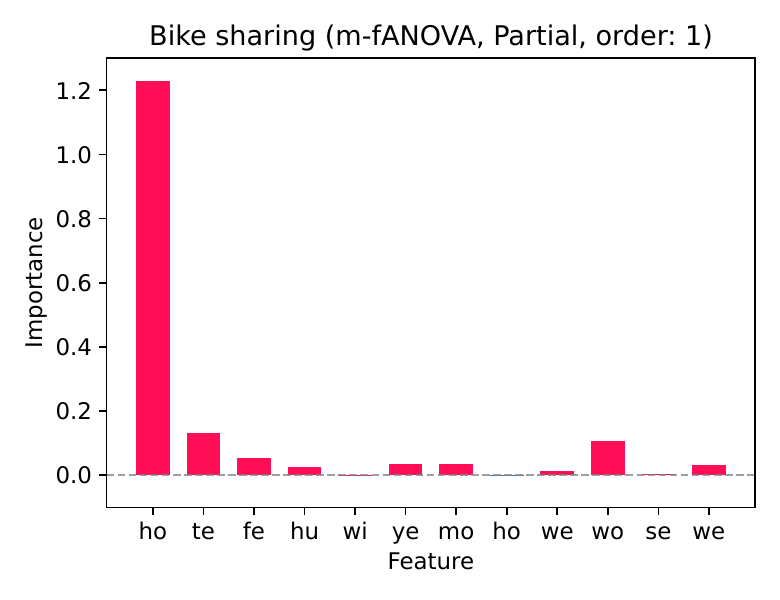}
    \\
    \includegraphics[width=\textwidth]{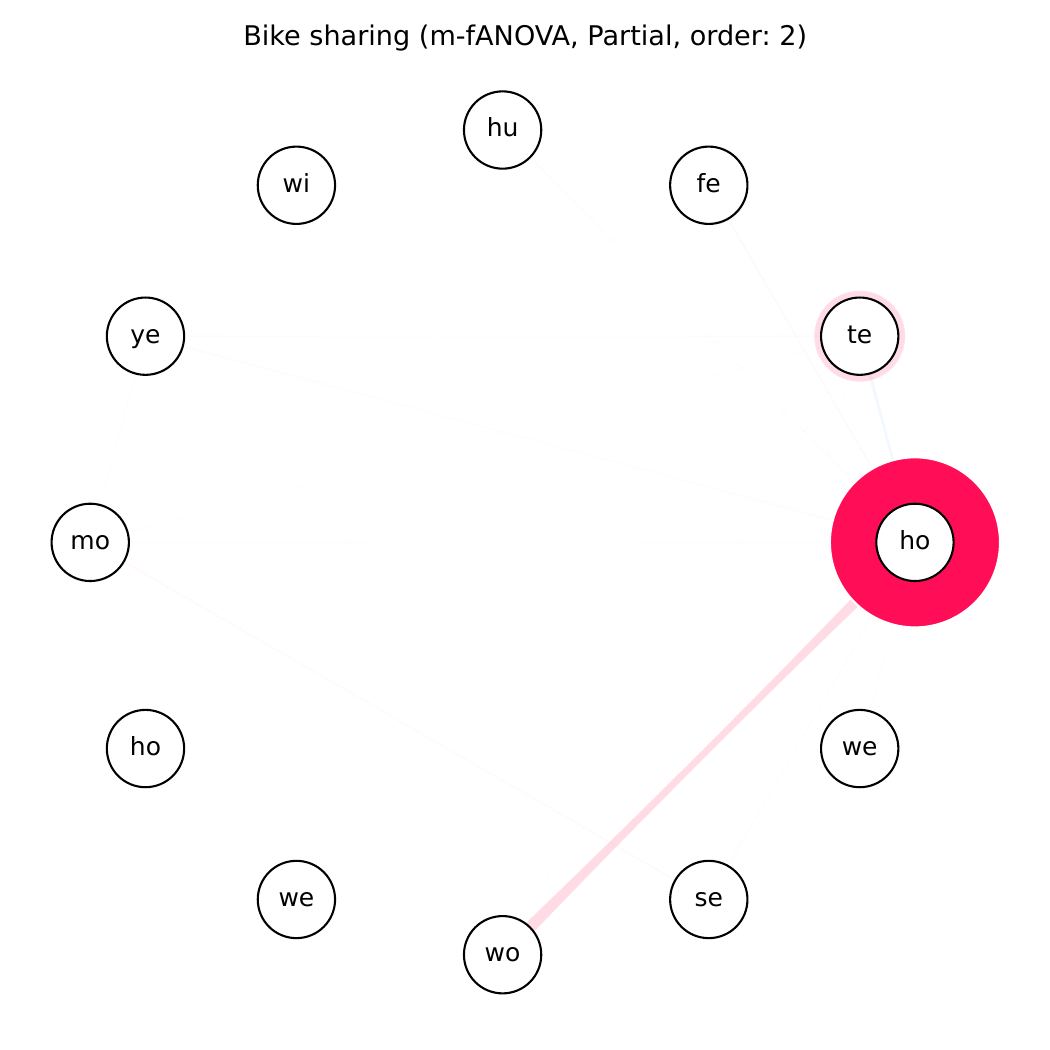}
    \\[1em]
    \textbf{(b) Partial}
    \end{minipage}
    \hfill
    \begin{minipage}[c]{0.32\textwidth}
    \centering
    \textbf{PFI}
    \\
    \includegraphics[width=\textwidth]{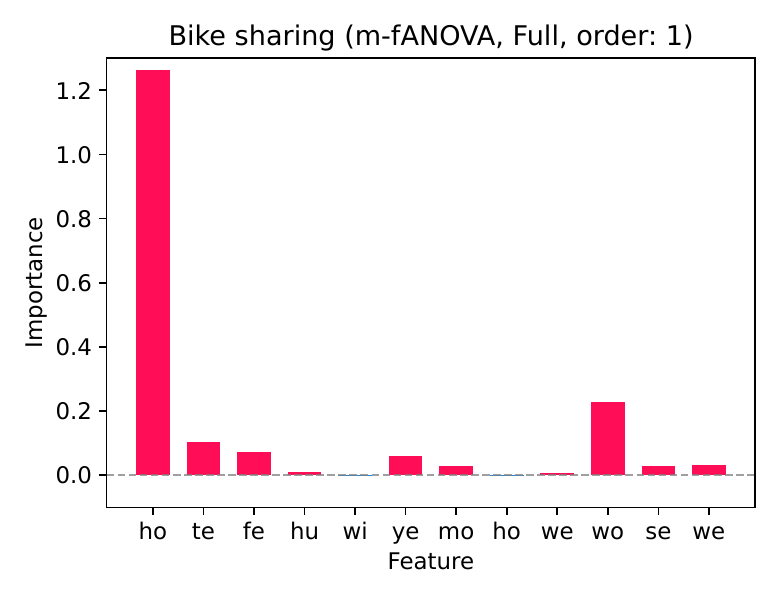}
    \\
    \includegraphics[width=\textwidth]{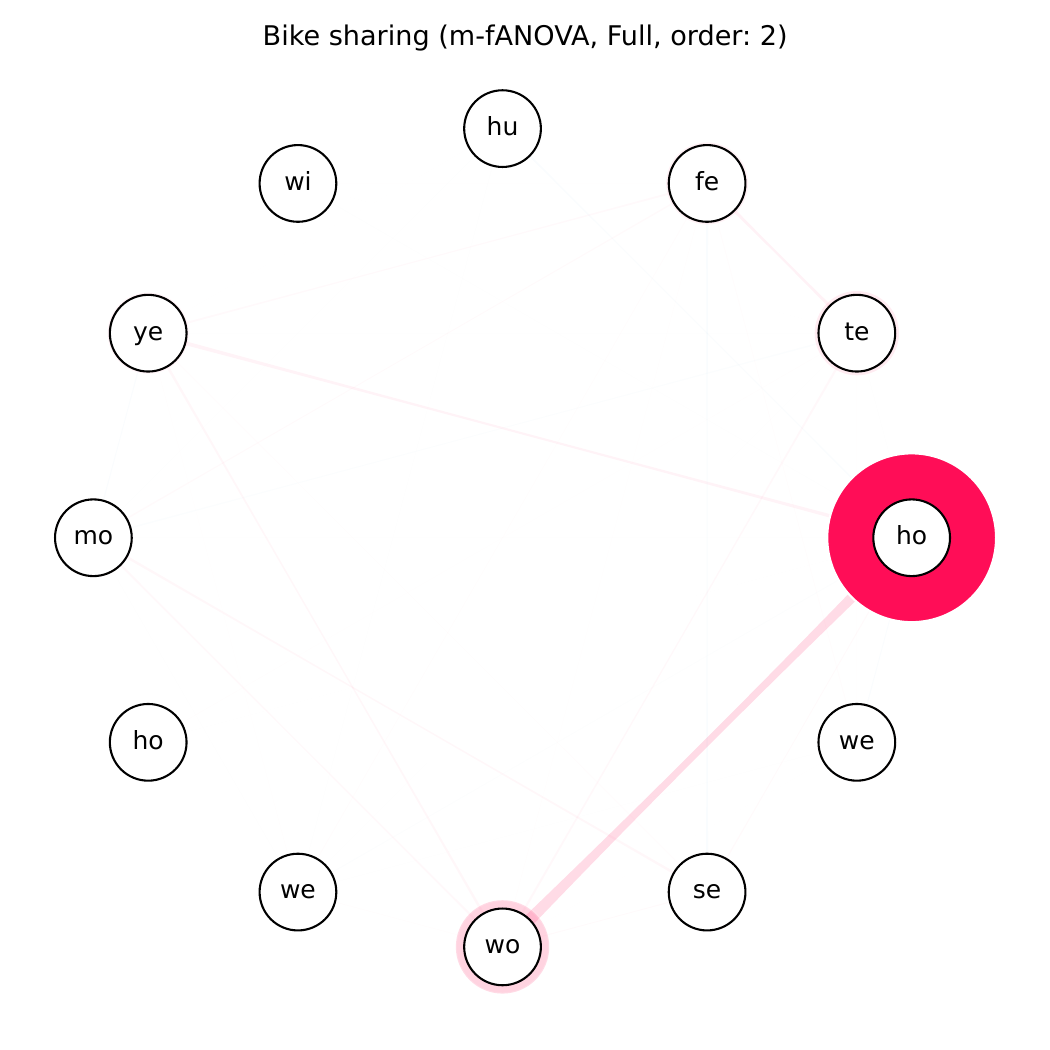}
    \\[1em]
    \textbf{(c) Full}
    \end{minipage}
    \vspace{1.5em}
    \caption{\textbf{Global Risk Game:} Pure (a), partial (b), and full (c) feature influences for a neural network fitted on \emph{Bike sharing}. The first row depicts the individual explanations where (b) corresponds to SAGE and (c) corresponds to PFI. The second row shows the pure, partial (2-\glspl{SV}, 2-SII) and full interactions up to the second-order. Importance is defined according to the global risk game by the change in performance measured by the mean squared error. This global risk game is based on 100 randomly drawn local explanation games from the test dataset.}
    \label{fig_appendix_bike_xgb_global}
\end{figure}

\begin{figure}
    \centering
    \hfill
    \begin{minipage}[c]{0.32\textwidth}
    \centering
    \phantom{\textbf{Pure, Order 1}}
    \\[1em]
    \includegraphics[width=0.95\textwidth]{language_game_The_acting_was_bad_but_the_movie_enjoyable_pure_interaction_1.pdf}
    \\
    \includegraphics[width=\textwidth]{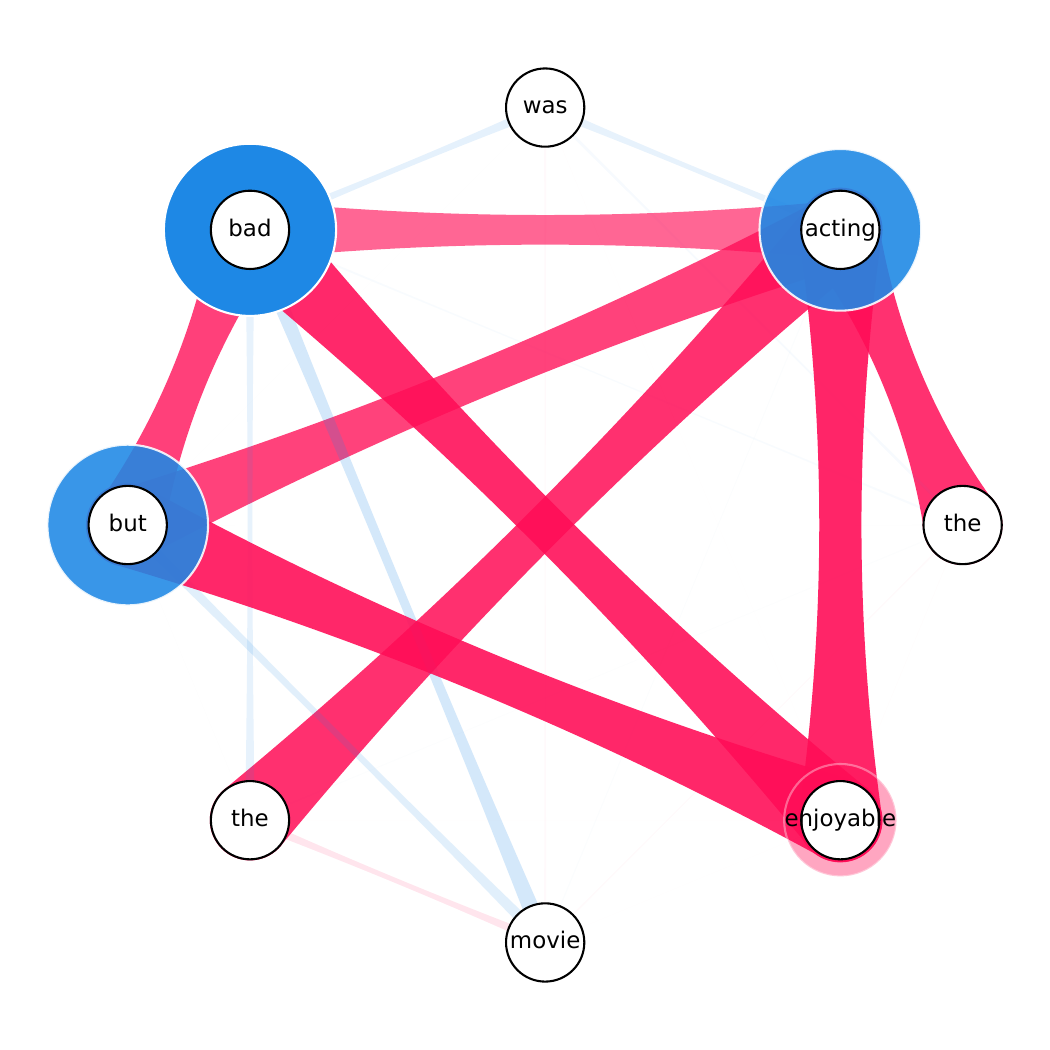}
    \\[1em]
    \textbf{(a) Pure}
    \end{minipage}
    \hfill
    \begin{minipage}[c]{0.32\textwidth}
    \centering
    \textbf{SHAP}
    \\[1em]
     \includegraphics[width=0.95\textwidth]{language_game_The_acting_was_bad_but_the_movie_enjoyable_partial_interaction_1.pdf}
    \\
    \includegraphics[width=\textwidth]{language_game_The_acting_was_bad_but_the_movie_enjoyable_partial_interaction_2.pdf}
    \\[1em]
    \textbf{(b) Partial}
    \end{minipage}
    \hfill
    \begin{minipage}[c]{0.32\textwidth}
    \centering
    \phantom{\textbf{Pure, Order 1}}
    \\[1em]
    \includegraphics[width=0.95\textwidth]{language_game_The_acting_was_bad_but_the_movie_enjoyable_full_interaction_1.pdf}
    \\
    \includegraphics[width=\textwidth]{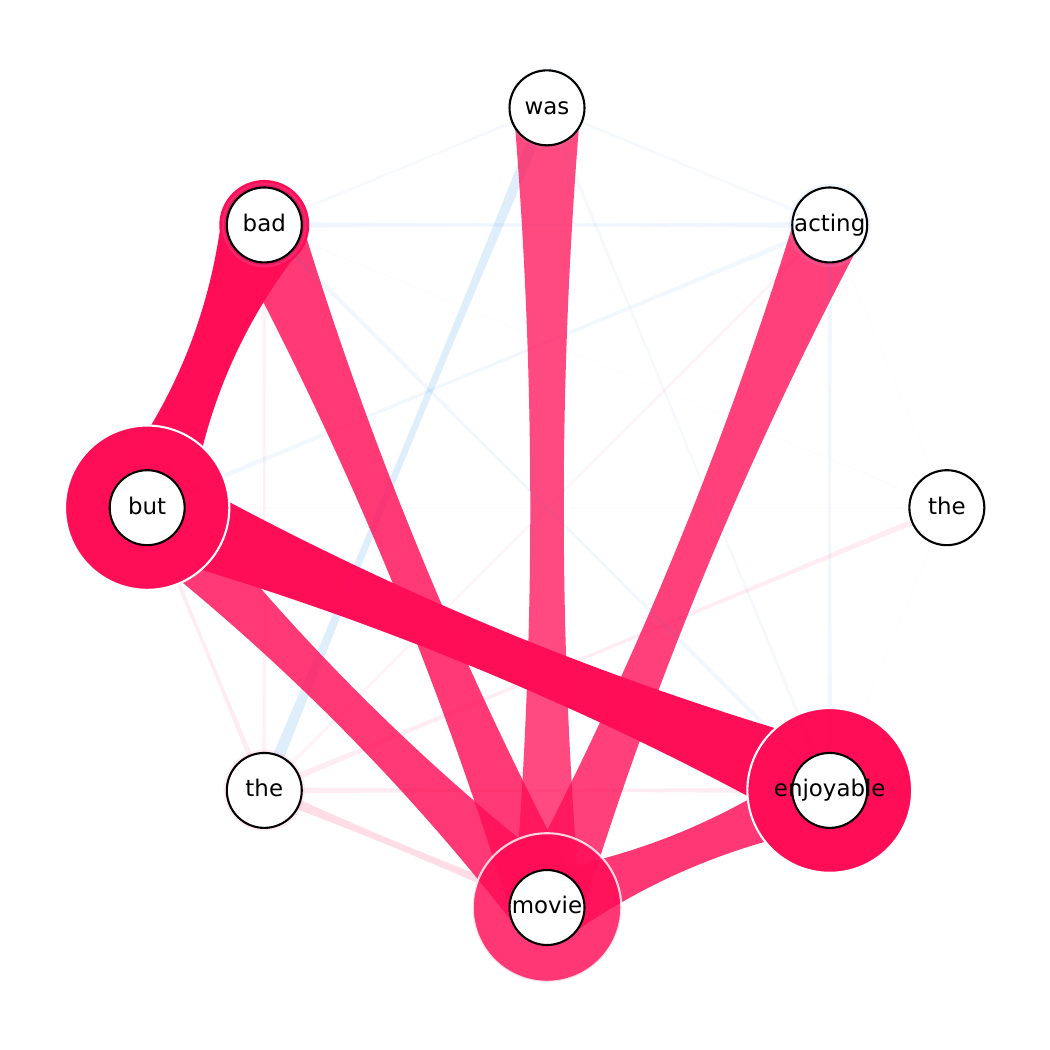}
    \\[1em]
    \textbf{(c) Full}
    \end{minipage}
    \vspace{1.5em}
    \caption{\textbf{Local Explanation Game:} Pure (a), partial (b), and full (c) feature influences for a sentiment analysis \textit{language model} fine-tuned on the \emph{IMDB} dataset \citep{tsai_faith-shap_2022}. The first row depicts the individual explanations where (b) corresponds to baseline SHAP using the \texttt{MASK} token, i.e., b-fANOVA. The second row shows pure, partial (2-\glspl{SV}, 2-SII) and full interactions up to the second-order. The sentence \textit{The acting was bad, but the movie enjoyable} receives a rather positive sentiment score of $0.8117$. Note that the model predicts a score of $0.5361$ for the empty set.}
    \label{fig_appendix_language_local}
\end{figure}
\end{document}